\documentclass{article}
\usepackage{arxiv}
\usepackage[utf8]{inputenc}        
\usepackage[T1]{fontenc}           
\usepackage{algorithmic}
\usepackage{algorithm}
\usepackage{array}
\usepackage{textcomp}
\usepackage{stfloats}
\usepackage{url}                    
\usepackage{booktabs}              
\usepackage{amsfonts}
\usepackage{amsthm}                 
\usepackage{amsmath,amssymb}
\usepackage{nicefrac}               
\usepackage{microtype}             
\usepackage{xcolor} 
\usepackage{mathtools}
\usepackage[export]{adjustbox}
\usepackage{enumitem}
\usepackage{chngcntr}
\usepackage{apptools}
\usepackage{aliascnt}
\usepackage{sidecap}
\newtheorem{theorem}{\bf \emph{Theorem}}
\newtheorem{lemma}{\bf \emph{Lemma}}

\newtheorem{corollary}{Corollary}

\newtheorem{remark}{Remark}

\newcommand{\vertiii}[1]{\left\vert\kern-0.25ex\left\vert\kern-0.25ex\left\vert #1 \right\vert\kern-0.25ex\right\vert\kern-0.25ex\right\vert}
\newcommand{\geqsim}{\underset{\smash{\raisebox{0.6ex}{$\scriptstyle\sim$}\!}}{\succ}}
\newcommand{\leqsim}{\underset{\smash{\raisebox{0.6ex}{$\scriptstyle\sim$}\!}}{\prec}}

\DeclareMathOperator*{\argmin}{arg\,min}
\DeclareMathOperator{\Tr}{Tr}
\DeclareMathOperator{\im}{i}
\newcommand{\transpose}{\mathsf{T}}
\DeclareMathOperator*{\mvec}{\text{vec}}
\DeclareMathOperator{\ellinf}{\ell_{\infty}}

\newcommand*\xbar[1]{%
  \hbox{%
    \vbox{%
      \hrule height 0.5pt 
      \kern0.4ex%
      \hbox{%
        \kern-0.15em%
        \ensuremath{#1}%
        \kern-0.15em%
      }%
    }%
  }%
} 

\newaliascnt{appendixlemma}{lemma}
\newtheorem{appendixlemma}[appendixlemma]{Lemma}
\aliascntresetthe{appendixlemma}

\newaliascnt{appendixtheorem}{theorem}
\newtheorem{appendixtheorem}[appendixtheorem]{Theorem}
\aliascntresetthe{appendixtheorem}

\newaliascnt{appendixcorollary}{corollary}
\newtheorem{appendixcorollary}[appendixcorollary]{Corollary}
\aliascntresetthe{appendixcorollary}

\usepackage[numbers,sort&compress]{natbib}
\usepackage[sectionbib]{bibunits} 
\defaultbibliographystyle{IEEEtran}
\usepackage[colorlinks=true, linkcolor=blue, citecolor=magenta, urlcolor=blue]{hyperref} 
\usepackage{etoolbox} 

\AtBeginEnvironment{thebibliography}{\small}
\usepackage{authblk}

\begin{document}

\title{Learning Networks from Wide-Sense Stationary Stochastic Processes} 
\date{}

\author[1]{Anirudh Rayas\thanks{Email: ahrayas@asu.edu}}
\author[1]{Jiajun Cheng\thanks{Email: ccheng@asu.edu}}
\author[2]{Rajasekhar Anguluri\thanks{Email: rajangul@umbc.edu}}
\author[3]{Deepjyoti Deka\thanks{Email: deepj87@mit.edu}}
\author[1]{Gautam Dasarathy\thanks{Email: gautamd@asu.edu}}

\affil[1]{\textit{Arizona State University}}
\affil[2]{\textit{University of Maryland, Baltimore County}}
\affil[3]{\textit{MIT Energy Initiative}}
\maketitle

\begin{abstract}
Complex networked systems driven by latent inputs are common in fields like neuroscience, finance, and engineering. A key inference problem here is to learn edge connectivity from node outputs (potentials). We focus on systems governed by steady-state linear conservation laws: $X_t = {L^{\ast}}Y_{t}$, where $X_t, Y_t \in \mathbb{R}^p$ denote inputs and potentials, respectively, and the sparsity pattern of the $p \times p$ Laplacian $L^{\ast}$ encodes the edge structure. Assuming $X_t$ to be a wide-sense stationary stochastic process with a known spectral density matrix, we learn the support of $L^{\ast}$ from temporally correlated samples of $Y_t$ via an $\ell_1$-regularized Whittle's maximum likelihood estimator (MLE). The regularization is particularly useful for learning large-scale networks in the high-dimensional setting where the network size $p$ significantly exceeds the number of samples $n$.

We show that the MLE problem is strictly convex, admitting a unique solution. Under a novel mutual incoherence condition and certain sufficient conditions on $(n, p, d)$, we show that the ML estimate recovers the sparsity pattern of $L^\ast$ with high probability, where $d$ is the maximum degree of the graph underlying $L^{\ast}$. We provide recovery guarantees for $L^\ast$ in element-wise maximum, Frobenius, and operator norms. Finally, we complement our theoretical results with several simulation studies on synthetic and benchmark datasets, including engineered systems (power and water networks), and real-world datasets from neural systems (such as the human brain).
\end{abstract}

\keywords{Network topology inference, Conservation laws, $\ell_1$-regularized Whittle's likelihood estimator, Spectral precision matrix.}

\begin{bibunit}[IEEEtran]
\section{Introduction}\label{section: Introduction}
 Complex networked systems, composed of nodes and edges that connect them are commonly used to model real-world systems in fields such as neuroscience, engineering, climate, and finance~\cite{strogatz2001exploring, boccaletti2006complex}. We study networks governed by conservation laws that control edge flows; examples include current in electrical grids, fluids in pipelines, and traffic in transportation systems~\cite{van2017modeling, bressan2014flows}. In neuroscience, there is growing interest in identifying and understanding conservation laws~\cite{voss2014searching, podobnik2017biological}.

Networked systems driven by latent inputs (i.e., nodal injections) generate edge flows that are proportional to differences in node potentials. For example, in electrical networks, nodal current injections induce current flows that are proportional to potential differences between nodes. The overall dynamics of these edge flows are governed by conservation laws. Formally, for a network of size $p$, these dynamics are described by the balance equation $X = L^{\ast} Y$, where $L^{\ast} \in \mathbb{R}^{p \times p}$ is a weighted symmetric Laplacian matrix \cite{chung1997spectral}. The off-diagonal entries of $L^{\ast}$ capture the edge connectivity structure of the network. Vectors $X, Y \in \mathbb{R}^{p}$ represent nodal injections and potentials respectively, and in this paper, we treat them as random vectors. Further details on the balance equation are in Section \ref{section: Problem Setup}.

In various practical situations, the network's connectivity is typically not known and needs to be estimated for modeling, management, and control tasks. This involves determining the non-zero elements of the associated Laplacian matrix $L^{\ast}$. Previous methods such as \cite{shafipour2017network} estimate $L^{\ast}$ given observations of node injection-potential pairs $\{X, Y\}$ by minimizing an appropriate least squares objective. Such methods critically rely on the ability to observe both injections and potentials simultaneously. However, in various scenarios {\em node injections are often unobservable}. For instance, in financial or brain networks,  nodal injections correspond to economic shocks or unknown stimuli, and these are not observable by the measurement system in place. In these settings, the goal is to estimate $L^{\ast}$ with only samples of $Y$. Indeed, this problem is ill-posed as multiple solutions of $X$ and $L^{\ast}$ can satisfy the equation $X = L^{\ast} Y$. To address the ill-posedness, we assume we have access to some information about the distribution of $X$. The challenge of estimating $L^{\ast}$ from $Y$ under such assumptions have been previously studied in \cite{rayas_learning_2022, DekaTSG2020, anguluri2021grid}.

This line of work relies on the observations of the potentials being independent and identically distributed (i.i.d.). When temporal dependencies exist in the data, such methods are insufficient. In this paper, we adopt a more realistic data model and suppose that the nodal injections ($X_t$)  and potentials ($Y_t$) are wide-sense stationary processes (WSS). This generalization allows for a more flexible framework for network learning while posing some interesting technical challenges. Before we outline our major contributions, we will first state the problem more formally and outline the challenges it presents. 

\smallskip

{\bf Structure learning problem: }\emph{Given finite samples of node potentials $\{Y_{t}\}_{t=1}^{n}$ and assuming the node injections $X_t$ are generated from a WSS process with known spectral density matrix, the goal is to recover the matrix $L^{\ast} \in \mathbb{R}^{p \times p}$ such that the estimate $\widehat{L}$ approximately satisfies the balance equation $X_{t} \approx \widehat{L}Y_{t}$.}
\smallskip

The structure learning problem stated above assumes that the spectral density matrix for the latent process $X_t$ is known. As discussed earlier, estimating a sparse matrix $L^\ast$ from observations $\{Y_t\}_{t=1}^n$ alone is fundamentally ill-posed (see Remark \ref{rem: unknown spectral density} for further discussion).

A common approach in related work is to assume access to samples of the latent process $X_t$ \cite{shafipour2017network, segarra2018network}. In such a scenario, the spectral density matrix of $X_{t}$ can be estimated and subsequently $L^{\ast}$. However, access to samples from $X_{t}$ is unreasonable in many 
domains such as neuroscience, finance, and biology, where $X_t$ represents unobservable external inputs (e.g., latent external stimuli or economic shocks). An alternative assumption used in latent factor and structural equation models (SEMs) is to assume that the spectral density of $X_t$ is diagonal \cite{park2021learning, pruttiakaravanich2020convex}. However, this assumption is overly restrictive, as real-world exogenous inputs typically exhibit temporal and cross-sectional correlation \cite{doddi2022efficient}.

To address these limitations, we assume access to the full spectral density matrix of $X_t$, without imposing diagonality. This standard assumption \cite{shafipour2021identifying, mateos2019connecting} accommodates correlated latent inputs while still ensuring identifiability of $L^\ast$.

Its practical relevance is illustrated in two scenarios. In social networks, $Y_t$ may represent individuals' opinions and $X_t$ their latent beliefs. Though $X_t$ is unobserved, its second-order statistics can be modeled by exploiting homophily (i.e., individuals with similar attributes hold correlated beliefs) \cite{mcpherson2001birds}. In financial networks, $Y_t$ reflects stock prices driven by investor activity $X_t$, which are typically unobservable due to privacy concerns. However, many companies release second-order statistical summary information $\mathbb{E}[X_t X_t^\transpose]$ \cite{shen2017tensor}. 

Although the structure learning problem can be addressed through a two-step process---first estimating the spectral density of $Y_t$ from $\{Y_t\}_{t=1}^n$, and then estimating $L^{\ast}$ from the spectral density of $X_t$---this approach is statistically inefficient, even when $Y_t$ is i.i.d., this is elaborated in Remark \ref{rem: Limitation of Spectral precision estimation} of \cite{rayas_learning_2022}. To overcome these limitations, we propose a novel single-step estimator for $L^{\ast}$ that integrates finite time-series data with constraints imposed by conservation laws. Our method also ensures consistent estimation of $L^{\ast}$ in the high-dimensional setting where the number of samples $n$ is significantly smaller than the network size $p$ (i.e., $n \ll p$). This requires that $L^\ast$ is sparse, which is natural in all of our motivating examples: power grids, social networks, and brain connectivity graphs are inherently sparse, with nodes connected to only a small subset of others. 
 We now provide a high-level overview of our methodology.

Suppose that $\{X_t\}_{t\in \mathbb{Z}}$ is a WSS process with a complex-valued power spectral density matrix $f_X(\omega)$ with $\omega \in [-\pi, \pi]$ (see \eqref{eq: density X} for a formal definition). The conservation law dictates the spectral density $f_Y(\omega)$ of $\{Y_t\}_{t\in \mathbb{Z}}$ to satisfy $f_X(\omega)=L^{\ast}f_Y(\omega)(L^{\ast})^\transpose$. Given samples from the node potential process $\{Y_t\}_{t=1}^n$ and assuming that $f_X(\omega)$ is known (this is all we know about $X$), consider the optimization problem: 
\begin{equation}\label{eq: intro_problem}
\begin{aligned}
& \underset{L\in \mathbb{R}^{p\times p}}{\text{maximize}}
& & \mathcal{L}[\{Y_t\}_{t=1}^n;f_X(\omega)]+\lambda_{n}\|L\|_{1} \\
& \text{subject to}
& & f_X(\omega)=Lf_Y(\omega)L^\transpose, \quad \omega \in [-\pi, \pi], 
\end{aligned}
\end{equation}
where $\mathcal{L}[\cdot]$ is an appropriate log-likelihood that measures the fit to observed data, and $\lambda_n\geq 0$ is a regularization parameter. The $\ell_1$-norm $\|\cdot\|_1$ (which is the entry-wise absolute sum) helps promote sparsity in our estimate of $L^\ast$. Full details of \eqref{eq: intro_problem} are in Section \ref{section: Problem Setup}. While such optimization problems that target sparse matrix estimation have received considerable attention in the literature (see Sections~\ref{sec: connections} and \ref{sec: lit-rev} for a brief overview), \eqref{eq: intro_problem} presents some unique challenges:
\begin{enumerate}[label=\roman*)] 
\item $\{Y_t\}_{t=1}^n$ is not i.i.d., making standard sample covariance matrix style analyses inapplicable; \item it involves a continuum of constraints since $\omega \in [-\pi,\pi]$, rendering \eqref{eq: intro_problem} an infinite-dimensional optimization problem; and \item the constraint is non-convex for arbitrary matrices $L$, even when considering the symmetry of the Laplacian matrix. 
\end{enumerate}

Although a line of work \cite{deb2024regularized, dallakyan2022time, basu2015regularized, doddi2021learning} addresses challenges of the form (\romannumeral 1) and (\romannumeral 2) separately in the context of learning Gaussian graphical models from time-series data, and \cite{rayas_learning_2022} tackles challenge (\romannumeral 3), no prior work, to the best of our knowledge addresses all three challenges simultaneously. The goal of this paper is to show that despite these challenges, the optimizer of \eqref{eq: intro_problem} captures the sparsity pattern of $L^{\ast}$ with high probability. Thus, the optimizer of \eqref{eq: intro_problem} is the estimator we seek to recover the sparse matrix $L^{\ast}$. This problem formulation is motivated by several applications where it plays a natural role; here we briefly outline two.

\textit{ 1) Topology learning in power distribution networks}: Knowledge of network topology (or structure) enables better fault detection, efficient resource allocation, and better integration of decentralized energy resources, ensuring reliable operation of the power system. However, system operators may lack access to real-time topology information and use nodal voltages or current injections to learn the network topology. A balance equation of the form $X_t = {L^{\ast}}Y_t$, where $L^{\ast}$ is the network admittance matrix and injected currents $X_t$ modeled by a WSS process, has been considered in this context \cite{doddi2019exact}. 

\smallskip 
 
\textit{ 2) Learning sensor to source mapping in the human brain}: Learning the mapping from source signals to EEG electrodes is crucial for analyzing brain connections. Many studies \cite{ranciati2021fused,monti2014estimating} suggest a model of the form in \eqref{eq: flow systems}. Specifically, the Laplacian matrix plays the role of lead-field matrix and the potentials $Y_{t}$ are the EEG signals. The injections $X_t$ model the latent source signals and are thought to be generated by a vector auto-regressive process (VAR$(m)$): $X_t=\sum_{k=1}^mA_kx_{t-k}+\epsilon_t$, where $\epsilon_t$ could be non-Gaussian; and the integer $m$ and matrices $A_k$ could be known or unknown. Thus, learning the source mapping involves learning $L^{\ast}$ from WSS data.

\subsection{Main contributions} \label{subsec: main contributions}

\smallskip  
\textit{1) A novel convex estimator:} We propose an $\ell_1$-regularized log-likelihood estimator of the form \eqref{eq: intro_problem} to estimate $L^{\ast}$ from finite samples of WSS data $\{Y_t\}_{t=1}^n$. This estimator builds on the Whittle log-likelihood approximation (details in Section \ref{subsec: Modified Whittle's likelihood approximation}). Our first theoretical result establishes that the proposed $\ell_1$-regularized estimator is convex in $L$ and under standard conditions, admits a unique minimum even in the high-dimensional regime ($n \ll p$).

Since the Whittle likelihood is closely tied to the likelihood of Gaussian WSS processes, our estimator maximizes an approximate Gaussian likelihood. However, the estimator remains meaningful even for non-Gaussian injections $\{X_t\}_{t \in \mathbb{Z}}$, including stationary linear processes with sub-exponential or finite fourth-moment error distributions (see the remark on Bregman divergence in Section \ref{subsec: Modified Whittle's likelihood approximation}).

\smallskip  
\textit{2) Sample complexity and estimation consistency:} We provide sufficient conditions on the sample size $n$ of the data $\{Y_t\}_{t=1}^n$ for the estimator to achieve two key properties: \emph{sparsistency}, ensuring the recovery of the sparsity pattern of $L^{\ast}$, and \emph{norm consistency}, providing error bounds in terms of element-wise maximum, Frobenius, and operator norms. Pivotal to our analysis is a novel irrepresentability-like condition on $L^{\ast}$, inspired by similar conditions commonly used in high-dimensional statistics \cite{wainwright2009sharp, van2008high}. The sample complexity results are derived for both Gaussian and linear non-Gaussian WSS processes (see Theorem \ref{thm: Gaussian time series} and \ref{thm: Linear Process}).
 
 \smallskip 
{\textit{3) Experimental validation:}} We validate our theoretical results with extensive numerical experiments using synthetic and quasi-synthetic data from many benchmark networked systems, as well as a real-world dataset involving the brain network (see Section \ref{section: Experiments}).

\subsection{Related work}\label{sec: lit-rev}

\smallskip 

\subsubsection{Structure learning in Gaussian graphical models (GGMs)} \label{subsubsec: structure learning of GGM}The graph underlying a GGM can be inferred from the sparsity pattern of the inverse covariance matrix, and numerous papers have focused on learning this pattern from i.i.d. data (see \cite{drton2017structure} for an overview). Pioneering works like ~\cite{yuan2007model, ravikumar2011high} have developed key theoretical concepts for analyzing $\ell_1$-regularized likelihood estimators, and our analysis builds on these concepts. Other works like ~\cite{dallakyan2023fused, chang2010estimation} focus on learning Cholesky factors of the inverse covariance matrix, but they lack theoretical guarantees. Survey papers like \cite{tsai2022joint} provide a comprehensive overview of estimators for GGMs in various scenarios, including dynamic and grouped networks, while \cite{chen2024estimation} presents detailed analyses of theoretical frameworks and sample complexity results for these models. However, these approaches face two significant limitations in our context. First, they are primarily designed for i.i.d. data, whereas the problem we address involves time-series data. Second, these methods aim to estimate the inverse covariance matrix, whereas our focus is to estimate the Laplacian $L^{\ast}$ directly, bypassing the need to first estimate the inverse covariance matrix.

\subsubsection{Graph signal processing (GSP)} Recent research in GSP studied sparse inverse covariance estimation problems in GGMs by imposing Laplacian constraints. Both the regularized likelihood and spectral template-based (i.e., using eigenvectors of the sample covariance matrix) techniques are used to learn the Laplacian-constrained inverse covariance matrix \cite{ying2020does, kumar2019structured, ying2023network}. However, many papers in this area focus only on estimation consistency or algorithmic convergence, but not on sample complexity. In our problem, the inverse covariance (or spectral density) matrix is represented as a quadratic matrix equation involving products of Laplacian matrices (see \eqref{eq: intro_problem}), making existing methods in the cited works unsuitable for direct application. In addition, we provide sample complexity guarantees and establish precise rates of convergence for our proposed estimator.

\subsubsection{Learning network structure from WSS process} Dahlhaus \cite{dahlhaus2000graphical} showed that the sparsity pattern of the inverse spectral density (ISD) matrix represents the structure of the graphical model for a Gaussian WSS. Subsequently, many papers (see e.g., \cite{deb2024regularized,baek2021local}) have focused on estimating a sparse ISD matrix. Finally, a few more (see \cite{dallakyan2022time, basu2015regularized, doddi2021learning}) have focused on estimating parameter matrices of latent models (e.g., VAR or state-space) generating the ISD matrix. Our research falls into the latter category, with a parameter matrix that is a Laplacian of a conservation law. However, directly applying these methods often leads to a two-stage approach: first estimating the parameter matrix, followed by a refinement step to identify non-zero entries in $L^{\ast}$. In contrast, our estimator of the form in \eqref{eq: intro_problem} directly estimates the Laplacian matrix $L^{\ast}$, thus avoiding the statistical inefficiencies inherent in the two-stage approach (see Section \ref{subsubsec: structure learning of GGM}). Related streams of work have addressed latent-variable autoregressive graphical models using sparse + low-rank decompositions of the inverse spectral density \cite{zorzi2015ar, zorzi2019empirical, crescente2020learning}, ARMA factor models using diagonal + low-rank structures \cite{falconi2023robust,zorzi2024identification}, and sparse reciprocal graphical models that impose block-circulant patterns \cite{alpago2018identification}.

While these approaches provide valuable insights, our problem setting is fundamentally different. We focus on estimating a general sparse Laplacian matrix associated with a conservation law constraint, using a single-step likelihood-based approach in the frequency domain. We do not assume latent-variable factorizations or additional structural constraints such as low-rankness or block-circulant structures. Importantly, we provide theoretical guarantees on the sample complexity required to achieve support recovery and to bound estimation error in matrix norms for this general setting. To the best of our knowledge, these guarantees have not been established in the aforementioned literature.

\subsubsection{Electric power networks} While there are many motivating examples for this framework, the authors were specifically motivated by the problem of topology learning in power networks. For i.i.d. data, works like~\cite{DekaTCNS2018, deka2020graphical} infer the sparsity pattern of the Laplacian (associated with a conservation law under linear power flow) by learning the inverse covariance of node potentials and applying algebraic rules. This approach requires minimum cycle length conditions on the network, which we do not need (see Remark \ref{rem: Limitation of Spectral precision estimation}). Survey papers like \cite{deka2023learning} provide a good overview of state-of-the-art methods, including the likelihood approaches in \cite{grotas2019power}.

We now contrast this work with a related paper by a subset of the authors~\cite{rayas_learning_2022}. First, the estimator in ~\cite{rayas_learning_2022} assumes i.i.d. Gaussian injections $X_t$, whereas the current work addresses non-i.i.d. $X_t$ and considers a broader class of Gaussian and non-Gaussian WSS processes; we outlined the unique challenges in the discussion following equation \eqref{eq: intro_problem}. Second, our analysis requires a comprehensive examination of Hermitian matrices in the optimization problem, which is more complex than dealing solely with symmetric matrices, as in \cite{rayas_learning_2022}. Third, we empirically validate the performance of our estimator, particularly regarding sample complexity and error consistency, across a wide range of networked systems, and compare it directly with the estimator proposed in \cite{rayas_learning_2022}.

\smallskip  

\textit{Notation}: Let $\mathbb{Z}, \mathbb{R}$, and $\mathbb{C}$ denote sets of integers, reals, and complex numbers, respectively. For sets $T_1, T_2 \subset [p]\times [p]$, denote by $A_{T_1T_2}$ the submatrix of $A$ with rows and columns indexed by $T_1$ and $T_2$. If $T_1=T_2$, we denote the submatrix by $A_{T_1}$. For a matrix $A=[A_{i,j}]$, $\|A\|_F$ and $\Vert A\Vert_{2}$ denote the Frobenius and the operator norm; $\Vert A\Vert_{\infty} \triangleq \max_{i,j}\vert A_{ij}\vert$ and $\Vert A\Vert_{1,\text{off}} = \sum_{i\neq j}\vert A_{ij}\vert$. The $\ellinf$-matrix norm of $A$ is defined as
    $\nu_{A} = \vertiii{A}_{\infty} \triangleq \max_{j=1,\ldots,p}\sum_{j=1}^{p}\vert A_{ij}\vert.$ 
We use $\mvec(A)$ to denote the $p^2$-vector formed by stacking the columns of $A$ and $\Gamma(A)=(I\otimes A)$ to denote the Kronecker product of $A$ with the identity matrix $I$. For two symmetric positive definite matrices $A_1$ and $A_2$, $A_1\succ A_2$ means $A_1-A_2$ is positive definite.  We define $\text{sign}(A_{ij}) = +1$ if $A_{ij}>0$ and $\text{sign}(A_{ij}) = -1$ if $A_{ij}<0$. For two-real valued functions $f(\cdot)$ and $g(\cdot)$, we write $f(n) = \mathcal{O}(g(n))$ if $f(n)\leq cg(n)$ and $f(n) = \Omega(g(n))$ if $f(n)\geq c^{\prime}g(n)$ for constants $c,c^{\prime}>0.$ 

\smallskip 
\noindent\textbf{Organization of the paper}: In Section \ref{section: Problem Setup}, we define the structure learning problem and propose the modified $\ell_{1}$-regularized Whittle likelihood estimator for learning a network structure from WSS data. Section \ref{section: Convex estimator} establishes the convexity of the proposed estimator and provides guarantees for support recovery and norm consistency for both Gaussian and non-Gaussian node injections $X_t$. In Section \ref{section: Experiments}, we evaluate the performance of our estimator on synthetic, benchmark, and real-world datasets. Section \ref{sec: connections} emphasizes the parallels that our structure learning framework shares by drawing connections to other learning problems in the literature. Finally, Section \ref{section: Discussion and Future Work} concludes with a summary and outlines future directions. Proofs of theoretical results and additional experimental details are provided in the supplementary material. Throughout, we use \emph{estimation} and \emph{learning} interchangeably, as well as \emph{network} and \emph{graph.}

\section{Preliminaries and Problem Setup}\label{section: Problem Setup}
For directed graph $\mathcal{G} = ([p], E)$, where the node set is defined as $[p] \triangleq \{1, 2, \ldots, p\}$ and the edge set is $E \subseteq [p] \times [p]$, let $\mathcal{D}$ denote the $p \times \left|E\right|$ \emph{incidence matrix}. Each column of \(\mathcal{D}\) corresponds to an edge $(i, j)$ and is populated with zeros except at the $i$-th and $j$-th positions, where it takes the values $-1$ and $+1$, respectively. Suppose $X\in \mathbb{R}^p$ denotes the vector of {\em node injections}. The basic \emph{conservation law} is given by: $\mathcal{D}f+X=0$, where $f\in \mathbb{R}^{\left| E \right|}$ is the vector of \emph{edge flows}. This law states that the sum of flows over the edges incident to a vertex equals the injected flow at that vertex. In other words, edge and injected flows are conserved. 

In physical systems, edge flows are determined by \emph{potentials} $Y\in \mathbb{R}^{p}$ at the vertices. Under natural linearity assumptions, the edge flow on the $(i,j)$-th edge is proportional to $Y_j - Y_i$. For all edges,
$f=-\mathcal{D}^\transpose Y$. Substituting this edge flow relation in the basic conservation law yields the {\em balance equation}:
\begin{align}\label{eq: flow systems}
    X-L^{\ast}Y = 0, 
\end{align}
 where $L^{\ast}\triangleq \mathcal{D}\mathcal{D}^\transpose$ is the $p\times p$ real-valued symmetric Laplacian matrix. A typical system satisfying \eqref{eq: flow systems} is an electrical network with unit resistances, where $Y$ represents voltage potentials, $f$  edge currents, and $X$ injected currents. For examples involving hydraulic, social, and transportation systems, see \cite{van2017modeling, bressan2014flows}.

\subsection{Structure learning problem}
The sparsity pattern (locations of zero and non-zero entries) of $L^{\ast}$ reflects the edge connectivity of the underlying network. Specifically, $(i,j)\in E$ if and only if $L^{\ast}_{ij}\ne 0$. Our goal is to learn the unknown edge set $E$ (or the sparsity pattern of $L^{\ast}$) from data collected at the nodes of the graph.

Let $\{X_{t}\}_{t\in\mathbb{Z}}$ be a zero-mean $p$-dimensional vector-valued WSS process, where, for each $t\in \mathbb{Z}$, $X_{t} = (X_{t1},\ldots,X_{tp})^{\transpose}\in \mathbb{R}^{p}$. The auto-covariance function of this process is $\Phi_{X}(l) \triangleq \mathbb{E}[X_{t}X_{t-l}^{\transpose}], \text{ for all } t \in \mathbb{Z}$ and $l\in \mathbb{Z}$ is the lag parameter. We assume that $\Phi_X(l)\succ 0$. Because $\{X_t\}_{t\in \mathbb{Z}}$ is WSS, it holds that $\|\Phi_{X}(l)\|_2<\infty$. Hence, the power spectral density (PSD) function of $\{X_{t}\}_{t\in\mathbb{Z}}$ exists and is defined via the discrete-time Fourier transform of $\Phi_{X}(l)$: 
\begin{align}\label{eq: density X}
    f_X(\omega) &\triangleq  \frac{1}{2\pi}\sum_{l=-\infty}^{\infty}\Phi_{X}(l)e^{-\im l\omega}, \quad \omega \in [-\pi, \pi], 
\end{align}
where $\im=\sqrt{-1}$ and $f_X(\omega)\in \mathbb{C}^{p\times p}$ is a Hermitian positive definite matrix. Let $\Theta_{X}(\omega)\!\triangleq\! f_{X}^{-1}(\omega)$ be the inverse PSD. 

Let $\{Y_{t}\}_{t\in\mathbb{Z}}$ be generated per the balance equation in \eqref{eq: flow systems}. We want to obtain a sparse estimate of $L^{\ast}$ using the finite time-series potential data $\{Y_t\}_{t=1}^n$ and only the nodal injection's inverse PSD matrix $\Theta_X(\omega)$; see Remark \ref{rem: unknown spectral density}. We emphasize that our processes need not be Gaussian. A major challenge in developing maximum-likelihood parameter estimates from time-series data is obtaining tractable likelihood formulas. Whittle \cite{whittle1953estimation} developed a good approximation for the Gaussian case, and the later work extended this approach to other cases. Following \cite{deb2024regularized}, we provide likelihood approximations for $\{Y_t\}_{t=1}^n$.

\subsection{Modified Whittle's likelihood approximation}\label{subsec: Modified Whittle's likelihood approximation}

Suppose that $L^{\ast}$ is invertible (see Remark \ref{rem: invertibility assumption}), the equation in \eqref{eq: flow systems} simplifies to $Y_t = {(L^{\ast})}^{-1}X_t$. Due to this linear relationship, $\{Y_t\}_{t\in \mathbb{Z}}$ is also a WSS process with the auto-covariance matrix: 
\begin{align*}
    \Phi_{Y}(l) \triangleq \mathbb{E}[Y_{t},Y_{t-l}^\transpose]={(L^{\ast})}^{-1}\Phi_{X}(l){(L^{\ast})}^{-1}, 
\end{align*}

and the PSD matrix:  
\begin{align}\label{eq: density Y}
  \hspace{-2.0mm}  f_Y(\omega) \triangleq  \frac{1}{2\pi}\sum_{l=-\infty}^{\infty}\Phi_{Y}(l)e^{-\im l\omega} = {(L^{\ast})}^{-1}f_X(\omega){(L^{\ast})}^{-1}, 
\end{align}
where $\omega \in [-\pi, \pi]$. Finally, define the inverse PSD matrix: 
\begin{align}\label{eq: defn of spectral density of Y}
    \Theta_{Y}(\omega)\triangleq f^{-1}_Y(\omega)={L^{\ast}}\Theta_{X}(\omega){L^{\ast}}.
\end{align}

For now assume that $\{Y_t\}_{t\in \mathbb{Z}}$ is a WSS Gaussian process. We will relax this assumption later. Define $\omega_{j} = 2\pi j/n$ and denote $\mathcal{F}_{n} = \{\omega_0,\ldots,\omega_{n-1}\}$ to be the set of Fourier frequencies. The discrete Fourier transform (DFT) of $\{Y_t\}_{t=1}^n$ is then given by $d_{j}= \frac{1}{\sqrt{n}}\sum_{t=1}^{n}Y_{t}e^{-\im t\omega_{j}} \in \mathbb{C}^{p}$. Observe that DFT is a linear transformation; hence, $d_j$s are complex-valued multivariate Gaussian with the inverse covariance $\Theta_Y(\omega_j)\in \mathbb{C}^{p\times p}$.

The log-likelihood of the finite-time series data $\{Y_t\}_{t=1}^n$ as per the \emph{Whittle approximation} \cite{whittle1953estimation} (see Remark~\ref{rem: freq domain formulation} for justification and benefits of the frequency-domain formulation) is given by
\begin{align}\label{eq: approximate whittle likelihood}
    \frac{1}{2} \sum_{j\in \mathcal{F}_{n}}\left[\log\det(\Theta_{Y}(\omega_j))-\Tr(\Theta_{Y}(\omega_j)d_{j}d^{\dagger}_{j})\right], 
\end{align}
where $\dagger$ is the conjugate transpose and we dropped the constants in the approximation that do not depend on $L^{\ast}$. Expression in \eqref{eq: approximate whittle likelihood} resembles the log-likelihood formula for i.i.d. $\{Y_t\}_{t=1}^n$. Thus, we can view $\hat{f}_j\triangleq \hat{f}(\omega_j)=d_jd_j^\dagger$ as playing the role of sample covariance for the spectral density matrix $f_Y(\omega_j)$. 

The log-likelihood in \eqref{eq: approximate whittle likelihood} requires modifications to serve as a suitable objective function in $\mathcal{L}[\cdot]$ in \eqref{eq: intro_problem}. First, for $\widehat{L}$ to have better statistical performance, the spectral density estimate \(\hat{f}_j\), which has a high variance (see \cite[Proposition 10.3.2]{brockwell2009time}), needs to be smoothed. 

We use the \emph{averaged periodogram} \cite{brockwell2009time}: 
\begin{align}\label{eq: Averaged periodogram}
    P_j\triangleq P(\omega_j)=\frac{1}{2\pi(2m+1)}\sum_{\vert k\vert\leq m}d(\omega_{j+k})d^\dagger(\omega_{j+k}), 
\end{align}
where $\omega_j\in \mathcal{F}_n$ and $P_j \in \mathbb{C}^{p\times p}$. The bandwidth $m$ regulates the bias and variance of $P_j$ \cite{brockwell2009time}, which in turn impacts the estimation consistency results for $L^{\ast}$ in Theorem \ref{thm: Gaussian time series} and \ref{thm: Linear Process}. 
For a theoretical discussion on {\color{black} periodograms consult \cite{brockwell2009time}}.

Second, substituting $P_j$ given by \eqref{eq: Averaged periodogram} in \eqref{eq: approximate whittle likelihood} results in an approximate likelihood that is analytically intractable because of the double summation that appears within the $\Tr[\cdot]$ operator. We address this by further approximating the likelihood in \eqref{eq: approximate whittle likelihood} as suggested by \cite{deb2024regularized}. The idea here is to consider the likelihood in the neighborhood of a frequency $\omega_j$, where $j\in \mathcal{F}_n$. Thus, for $j-m\leq l \leq j+m$, a reasonable likelihood near $\omega_j$ is 
\begin{align}\label{eq: second approximate}
   &\frac{1}{2}\sum_{l=j-m}^{j+m}\left[\log\det(\Theta_{Y}(\omega_l))-\Tr(\Theta_{Y}(\omega_l)d_{l}d^{\dagger}_{l})\right]. 
\end{align}

This local likelihood could be simplified by assuming $\Theta_X(\omega)$ is a smooth function of $\omega \in [-\pi,\pi]$. Thus, $\Theta_X(\omega_l)$ is constant for the frequencies neighboring $\omega_j$. This smoothness assumption along with the relationship in \eqref{eq: defn of spectral density of Y} implies $\Theta_Y(\omega_j)=\Theta_Y(\omega_l)$, for all $j-m\leq l \leq j+m$. Consequently, \eqref{eq: second approximate} simplifies to 
\begin{align}
    \frac{(2m+1)}{2}\left[\log\det(\Theta_Y(\omega_j))-\text{Tr}(\Theta_Y(\omega_j)P_j)\right],\label{eq: whittle2} 
\end{align}
which we call the modified Whittle's approximate likelihood for the Gaussian node potentials $\{Y_t\}_{t=1}^n$. 
 
The modified (per frequency) likelihood in \eqref{eq: whittle2} is valid even if $\{Y_t\}_{t=1}^n$ is non-Gaussian. This is because as $n\to \infty$, the DFT vectors $d_j$ converge to a complex-valued multivariate Gaussian with inverse covariance $\Theta_Y(\omega_j)$, per \cite[Propositions 11.7.4 and 11.7.3]{brockwell2009time}. Thus, the likelihood either in \eqref{eq: approximate whittle likelihood} or in \eqref{eq: whittle2} remains applicable for non-Gaussian $\{Y_t\}_{t\in Z}$. However, this standard justification relies on $n$ being large and might not be appropriate for smaller $n$. A more robust theoretical justification can be given using Bregman divergences, which we discuss next.

The Bregman divergence between $p \times p$ Hermitian matrices $A$ and $B$ is $D_{\phi}(A;B)\triangleq \phi(A)-\phi(B)-\langle \nabla\phi(B), A-B\rangle$, where  
$\phi(\cdot)$ is a differentiable, strictly convex function mapping matrices to reals \cite{ravikumar2011high, dhillon2008matrix}. The log-det Bregman divergence is a special case for $\phi(\cdot) = \log\det[\cdot]$. Thus, for $A\succ 0$ and $B\succ0$ (either real or complex-valued matrices), we have, 
\begin{align*}
  D_{\phi}(A;B) &= -\log\det(A)+\log\det(B) +\Tr(B^{-1}(A-B)).
\end{align*}
Let $A=\Theta_{Y}(\omega)$; and $B=\Theta^{\ast}_{Y}(\omega)$ be the true inverse spectral density matrix with $f^{\ast}_{Y}={\Theta_{Y}^{\ast}}^{-1}$. We drop terms that do not depend on $\Theta_Y(\omega)$ in $D_{\phi}(A;B)$ and note that $D_{\phi}(A;B)$ is proportional to $-\log|\Theta_{Y}(\omega)|+\Tr(f^{\ast}_{Y}(\omega)\Theta_{Y}(\omega))$. Finally, replacing $f^{\ast}_{Y}(\omega)$ in this expression with the periodogram estimator $P(\omega)$ gives us the negative of the modified likelihood given in \eqref{eq: whittle2}.

In view of the foregoing discussion, we see that our modified approximate likelihood function in \eqref{eq: whittle2} is a good candidate for the loss function $\mathcal{L}[\cdot]$ in \eqref{eq: intro_problem} even for non-Gaussian $\{Y_t\}_{t\in \mathbb{Z}}$.

\begin{remark}(Inverse of $L^{\ast}$)\label{rem: invertibility assumption}. The invertibility assumption is necessary for identifying $L^\ast$ from the time series data $\{Y_t\}_{t=1}^n$. However, $L^\ast$ is not invertible because it has single or multiple zero eigenvalues. A workaround is to use the reduced-order Laplacian, which is obtained by removing $k$ rows and columns from $L^\ast$ (see \cite{dorfler2012kron}), or to perturb the diagonal of $L^\ast$ with a small positive quantity. In power networks, this perturbation corresponds to adding shunt impedance (self-loops in graph theory) at the nodes. We assume that one of the approaches is in place and that $L^{\ast}$ is invertible. 
\end{remark}   
\begin{remark}\label{rem: freq domain formulation}(Frequency-domain approach): Frequency-domain methods are increasingly used for multivariate time series due to their computational efficiency \cite{sun2018large,fiecas2019spectral,dallakyan2022time,baek2023local,basu2023graphical,deb2024regularized,krampe2025frequency}. For a stationary univariate process with $n$ samples, the Whittle approximation reduces the $O(n^3)$ cost of likelihood evaluation to $O(n \log n)$ via fast Fourier transforms \cite{hurvich2002whittle}. In the multivariate case, with $n$ samples and a $p \times p$ spectral density matrix, this computational advantage becomes even more critical, thus justifying the choice of a frequency-domain formulation.
    
\end{remark}

\section{Convexity and Statistical Guarantees}\label{section: Convex estimator}

Using the modified Whittle's approximate likelihood in \eqref{eq: whittle2}, we first introduce our $\ell_1$-regularized estimator as a convex optimization problem. We then present our main results that theoretically characterize the performance of this estimator when $\{X_{t}\}_{t\in \mathbb{Z}}$ is Gaussian and more generally a linear process. Complete proofs are in the Appendix. 

The invertibility assumption (see Remark \ref{rem: invertibility assumption}) and the diagonal dominance property of $L^{\ast}$ imply that $L^{\ast}$ is a symmetric positive definite matrix. Recall that $f^{-1}(\omega)=\Theta(\omega)$, for $\omega \in [-\pi,\pi]$. Given these conditions and the likelihood formula in \eqref{eq: whittle2}, the optimization problem in \eqref{eq: intro_problem} modifies to:
\begin{align}\label{eq: whittle likelihood_SB}
 \hat{L}_j & =\underset{L\succ 0}{\operatorname{arg\, min}} \,\,\Tr(\Theta_Y(\omega_j) P_{j})\!-\!\log \det(\Theta_Y(\omega_j))\!+\!\lambda_{n}\|L\|_{1,\text{off}}\nonumber \\
 & \quad \,\, \text{subject to}\,\,\,\, \Theta_Y(\omega_j)=L\Theta_X(\omega_j)L^\transpose,
\end{align}
where $j=\{0,\ldots,n-1\}$, $\lambda_n>0$, and $\Vert L\Vert_{1,\text{off}}=\sum_{i\neq j}\vert L_{ij}\vert$ is the $\ell_1$-norm (see Remark \ref{rem: Choice behind l_1-regularization} for more discussion on this choice) applied to the off-diagonals of $L\in \mathbb{R}^{p\times p}$. Note that the constraint in \eqref{eq: intro_problem} is stated in terms of the density matrix $f(\omega)$. But note that the constraint in \eqref{eq: whittle likelihood_SB} is in terms of the inverse matrix $f^{-1}(\omega)=\Theta(\omega)$. 

Let $D_{j}\in \mathbb{C}^{p \times p}$ be the unique Hermitian positive-definite square root
of $\Theta_{X}(\omega_{j})$ satisfying $D^2_{j} = \Theta_{X}(\omega_{j})$. Then substituting $\Theta_Y(\omega_{j})=LD^{2}_{j}L^{\transpose}$ and $L=L^{\transpose}$ in the cost function of \eqref{eq: whittle likelihood_SB}, followed by an application of the cyclic property of the trace, results in the following unconstrained estimator: 
\begin{align}\label{eq: l_1 regularized Whittle likelihood}
    \hspace{-2.5mm}\widehat{L}_j \!= \!\argmin_{L\succ 0}\Tr(D_{j}LP_{j}LD_{j})\!-\!\log \det(L^{2})\!+\!\lambda_{n}\|L\|_{1,\text{off}}.
\end{align}

We dropped constants that bear no effect on the optimization problem. In summary, for $\omega_j\in \mathcal{F}_n$, we propose a point-wise estimator $\widehat{L}_j$ via \eqref{eq: l_1 regularized Whittle likelihood}. While the true Laplacian $L^{\ast}$ is fixed and does not vary with frequency, our estimator $\widehat{L}_j$ is defined at each $\omega_j$. Theorems~\ref{thm: Gaussian time series} and \ref{thm: Linear Process} show that $\widehat{L}_j$ satisfies the same statistical guarantees with respect to $L^{\ast}$ for all $\omega_j \in \mathcal{F}_n$. Therefore, any $\widehat{L}_j$ can be chosen as a candidate estimator for $L^{\ast}$. This per-frequency formulation aligns with recent methods such as \cite{fiecas2019spectral, krampe2025frequency, deb2024regularized}, which also estimate spectral quantities locally at each frequency, in contrast to approaches that penalize across all frequencies \cite{jung2015graphical, dallakyan2022time, baek2021thresholding}. Hereafter, we refer to $P_j$ and $\widehat{L}_{j}$ as $P$ and $\widehat{L}$, respectively, since our results hold for all $\omega_j \in \mathcal{F}_n$. 
Finally, we use $P_1 = \mathfrak{R}(P)$ and $P_{2} = \mathfrak{I}(P)$ to denote the real and imaginary parts of the periodogram $P$ and $\Psi_{1},\Psi_{2}$ to denote the real and imaginary parts of $D^{2}$ respectively. 

\smallskip 

The following lemma establishes two crucial properties of \eqref{eq: l_1 regularized Whittle likelihood}: (i) the objective function is strictly convex in $L$ and (ii) $\widehat{L}$ is unique. The proof of this lemma is in Appendix \ref{app: convexity and uniqueness}.

\begin{lemma}\label{lma: convexity of objective}
For any $\lambda_{n}\!>\!0$ and $L\!\succ\! 0$, if all the diagonals of the averaged periodogram $P_{ii}>0$, then (i) the $\ell_{1}$-regularized Whittle likelihood estimator in \eqref{eq: l_1 regularized Whittle likelihood} is strictly convex and (ii) $\widehat{L}$ in \eqref{eq: l_1 regularized Whittle likelihood} is the unique minima satisfying the sub-gradient condition $2\Psi_{1}\widehat{L}P_{1} - 2\Psi_{2}\widehat{L}P_{2}- 2\widehat{L}^{-1}\!+\!\lambda_{n}\widehat{Z}\!=\!0$, where $\widehat{Z}$ belong to the sub-gradient $\partial\Vert L \Vert_{1,\text{off}}$ evaluated at $\widehat{L}$.
\end{lemma}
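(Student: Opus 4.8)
The plan is to prove the two claims in sequence, since uniqueness will follow from strict convexity together with the fact that a minimizer exists over the open convex cone $\{L \succ 0\}$. First I would rewrite the objective of \eqref{eq: l_1 regularized Whittle likelihood} by splitting everything into real and imaginary parts. Since $L$ is real and symmetric, and $P = P_1 + \im P_2$, $D^2 = \Psi_1 + \im \Psi_2$ are Hermitian (so $P_1, \Psi_1$ are symmetric and $P_2, \Psi_2$ are skew-symmetric), the trace term $\Tr(D L P L D) = \Tr(D^2 L P L)$ has zero imaginary part and equals $\Tr(\Psi_1 L P_1 L) - \Tr(\Psi_2 L P_2 L)$ after using $\Tr$ of a real matrix and cyclicity; the $\log\det(L^2) = 2\log\det L$ term is already real. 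So the objective is the real-valued function
\begin{align*}
g(L) = \Tr(\Psi_1 L P_1 L) - \Tr(\Psi_2 L P_2 L) - 2\log\det L + \lambda_n \|L\|_{1,\mathrm{off}}.
\end{align*}

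For strict convexity I would handle the three pieces. The term $-2\log\det L$ is strictly convex on the positive-definite cone — this is the standard fact underlying Gaussian MLE analysis (cf. \cite{ravikumar2011high}) — and $\lambda_n\|L\|_{1,\mathrm{off}}$ is convex (a seminorm composed with a linear map). So it suffices to show the quadratic form $q(L) := \Tr(\Psi_1 L P_1 L) - \Tr(\Psi_2 L P_2 L)$ is convex, i.e. its Hessian (as a quadratic form in the symmetric matrix variable $L$) is positive semidefinite; then the sum of a strictly convex function and two convex functions is strictly convex. Vectorizing, $q(L) = \mvec(L)^\transpose M \, \mvec(L)$ for a suitable symmetric $M$ built from Kronecker products of $\Psi_1, P_1, \Psi_2, P_2$, and the cleanest route is to observe $q(L) = \Tr(L P L \Psi_1) - \Tr(LPL\,\text{stuff})$ — better, go back to $\Tr(D^2 L P L)$ and note $\Tr(D^2 L P L) = \Tr\big((D L D_P)(D L D_P)^\dagger\big)$-type identity is not quite available since $P, D^2$ don't commute, so instead I would argue directly: writing $D$ for the Hermitian square root of $\Theta_X(\omega_j)$ and $P = P^\dagger \succ 0$ (the averaged periodogram is Hermitian PSD, and strictly positive-definite precisely when the diagonal-positivity hypothesis holds together with the averaging over $2m+1 \ge 1$ rank-one terms), we have $\Tr(D L P L D) = \Tr\big( (P^{1/2} L D)^\dagger (P^{1/2} L D) \big) = \|P^{1/2} L D\|_F^2 \ge 0$, and this is jointly a squared norm of a linear function of $L$, hence convex in $L$. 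That is the key identity and it is the step I would be most careful about: one must check $P^{1/2}$ makes sense, i.e. $P \succeq 0$ with the hypothesis ensuring we only need PSD (the square-root identity holds for PSD $P$), and that the map $L \mapsto P^{1/2} L D$ is $\mathbb{C}$-linear so its squared Frobenius norm is a genuine convex quadratic.

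Once convexity of the quadratic part is in hand, strict convexity of $g$ follows from that of $-2\log\det L$. For existence of a minimizer I would argue coercivity / closedness of sublevel sets: as $L \to \partial\{L\succ 0\}$ (smallest eigenvalue $\to 0$) the $-2\log\det L$ term $\to +\infty$, and as $\|L\| \to \infty$ either $-2\log\det L \to +\infty$ or, if $\log\det L$ stays bounded while $\|L\|\to\infty$, the $\lambda_n \|L\|_{1,\mathrm{off}}$ plus the nonnegative quadratic term force $g \to +\infty$ (more carefully: $\log\det L \le p\log(\Tr L / p)$, so a bounded $\log\det L$ forces bounded $\Tr L$, hence bounded $\|L\|$ on the PSD cone — so coercivity comes from $-2\log\det L$ alone). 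Thus $g$ attains its infimum, and by strict convexity the minimizer $\widehat L$ is unique. Finally, for the subgradient characterization, since $g$ is convex and the non-smooth part $\lambda_n\|L\|_{1,\mathrm{off}}$ is finite everywhere, $\widehat L$ is the minimizer iff $0 \in \partial g(\widehat L)$; computing the gradient of the smooth part term by term gives $\nabla_L \Tr(\Psi_1 L P_1 L) = 2\Psi_1 \widehat L P_1$ (using symmetry of $\Psi_1, P_1$ and of $L$; one symmetrizes the matrix derivative), $\nabla_L \big(-\Tr(\Psi_2 L P_2 L)\big) = -2\Psi_2 \widehat L P_2$ (here $\Psi_2, P_2$ skew, and the two skew factors make the combination behave correctly under the symmetrization), and $\nabla_L(-2\log\det L) = -2\widehat L^{-1}$, yielding the stated stationarity condition $2\Psi_1\widehat L P_1 - 2\Psi_2 \widehat L P_2 - 2\widehat L^{-1} + \lambda_n \widehat Z = 0$ with $\widehat Z \in \partial\|L\|_{1,\mathrm{off}}$ at $\widehat L$. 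The main obstacle I anticipate is the matrix-calculus bookkeeping in the derivative of the quadratic form — specifically getting the symmetrization right so the factors of $2$ and the signs on the $\Psi_2, P_2$ term come out exactly as stated — and, relatedly, pinning down the precise sense in which the diagonal-positivity hypothesis on $P$ guarantees the quadratic form is what we need (PSD suffices for convexity; one does not need $P \succ 0$ for that part, only for interpreting $P^{1/2}$, which still exists for PSD $P$).
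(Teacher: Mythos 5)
Your overall route is the paper's route: rewrite the quadratic term as the squared Frobenius norm of a linear image of $L$ (the paper uses $\|DLM\|_F^2$ with $M=P^{1/2}$), get strict convexity (your attribution of it to $-2\log\det L$ is in fact safer than the paper's, since $\|DLM\|_F^2$ need not be strictly convex when $P$ is rank-deficient), then attainment of the minimum, uniqueness, and the stationarity/subgradient computation, which you reproduce correctly. The genuine gap is in your existence (coercivity) step. The parenthetical claim that ``coercivity comes from $-2\log\det L$ alone'' is false, and the inequality you cite runs the wrong way: $\log\det L\le p\log(\Tr L/p)$ bounds the determinant by the trace, so boundedness of $\log\det L$ does not force boundedness of $\Tr L$. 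Concretely, take $L_t=\mathrm{diag}(t,1/t,1,\dots,1)\succ 0$: then $\log\det L_t=0$ and $\|L_t\|_{1,\text{off}}=0$ while $\|L_t\|\to\infty$, so neither the log-determinant nor the off-diagonal $\ell_1$ penalty grows along this ray (the same example, with the roles of $t$ and $1/t$ swapped, also defeats your boundary claim that the smallest eigenvalue tending to $0$ forces $-2\log\det L\to+\infty$). Along such rays the only term that can grow is the quadratic one, and ``nonnegative'' is not enough to conclude anything.

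This is exactly where the hypothesis $P_{ii}>0$ — which your argument never uses — does its work. Along $L_t$ one has $\Tr(D^2L_tPL_t)=t^2\,(D^2)_{11}P_{11}+\mathcal{O}(t)$, and since $D^2=\Theta_X(\omega_j)\succ 0$ has positive diagonal, positivity of $P_{11}$ is precisely what makes the objective blow up when a diagonal entry of $L$ (unpenalized by $\|\cdot\|_{1,\text{off}}$) diverges. If instead $P_{11}=0$ (which a PSD averaged periodogram allows; its entire first row and column then vanish), the objective stays bounded along the unbounded ray $\{L_t\}$, so it is not coercive and your attainment argument collapses, even though strict convexity still holds — and a strictly convex function need not attain its infimum. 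So as written, part (ii) (existence, hence uniqueness, of $\widehat L$) is not established; the fix is the coercivity argument using $P_{ii}>0$ and $\Theta_X\succ 0$ to control directions in which diagonal entries of $L$ grow while $\det L$ stays bounded, which is the step the paper handles by invoking the coercivity proof of its i.i.d. predecessor. The convexity argument, the real/imaginary bookkeeping, and the derivation of the stated subgradient condition are otherwise fine.
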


Establishing strict convexity of the objective function in \eqref{eq: l_1 regularized Whittle likelihood} is non-trivial and crucial to derive sample complexity and estimation consistency results discussed in Section \ref{subsection: Supporting lemmas}. Furthermore, this strict convexity enforces the existence of unique minima even in the high-dimensional regime ($n\ll p$), where the Hessian of the objective function is rank deficient. The key ingredient in establishing such minima is the coercivity of the objective function (discussed later). The combination of convexity, coercivity, and separable property of the $\ell_1$-regularizer also facilitates the development of efficient coordinate descent algorithms, which we leave for future research.

 \begin{remark}(Identifiability of $L^{\ast}$)\label{rem: unknown spectral density} The matrix $L^{\ast}$ is identifiable under two conditions: (i) the spectral density matrix $\Phi_X$ or its inverse $\Theta_{X}$ is known, and (ii) $L^{\ast}$ is constrained to be symmetric and positive definite (PD). Under these assumptions, $L^{\ast}$ has a unique closed-form expression in terms of $\Phi_X$ and $\Phi_Y$, since the relation $\Phi_X = L^{\ast} \Phi_Y {L^{\ast}}^\top$ admits a unique PD factorization. However, identifiability fails when these assumptions are relaxed. Suppose $L^{\ast}$ is symmetric but not PD. Then, multiple symmetric square roots of $\Phi_X$ may exist, and therefore $L^{\ast}$ may not have a unique representation in terms of $\Phi_X$ and $\Phi_Y$, leading to a loss of identifiability. Now, if $L^{\ast}$ is non-symmetric, and $\Phi_{X}$ is diagonal, then $L^{\ast}$ is indistinguishable from $L^{\ast}U$ for any orthogonal matrix $U$. Lastly, if $\Phi_X$ is unknown, then multiple pairs of $L^{\ast}$ and $\Phi_{X}$ can yield the same $\Phi_Y$, and therefore $L^{\ast}$ is not identifiable.
\end{remark}

\begin{remark}(Advantage of directly estimating $L^{\ast}$)\label{rem: Limitation of Spectral precision estimation} The estimator in \eqref{eq: l_1 regularized Whittle likelihood} directly estimates $L^{\ast}$ subject to the constraint $\Theta_{Y} = L^{\ast}\Theta_{X}L^{\ast}$. In contrast, prior methods (see for e.g., \cite{deka2020graphical}) learn the network structure by first estimating the ISD matrix $\Theta_{Y}$ corresponding to $\{Y_t\}_{t=1}^n$ and then perform a post-processing step of applying algebraic rules to recover the support of $L^{\ast}$. Ref.~\cite{rayas_learning_2022} explains in great detail why this top-stage procedure is inferior to direct estimation in terms of sample complexity for the i.i.d. setting (see Fig.~1 in Ref.~\cite{rayas_learning_2022}). Mutatis mutandis, the same reasoning applies to our problem setup.
\end{remark}
\begin{remark}(Choosing $\ell_{1}$-regularization)\label{rem: Choice behind l_1-regularization}
The $\ell_{1}$-regularization is used to estimate a sparse matrix $\widehat{L}_{j}$. Popular applications include sparse linear regression, where it achieves both asymptotic support recovery \cite{bunea2007sparsity, bickel2009simultaneous} and finite-sample recovery under conditions such as mutual incoherence \cite{wainwright2009sharp, negahban2012unified}. In contrast, convex alternatives such as ridge regression do not induce sparsity \cite{hastie2009elements}. Iterative $\ell_2$-based methods like broken adaptive ridge (BAR) regression \cite{dai2018broken} can recover support asymptotically only when both the number of samples and iterations tend to infinity. Non-convex penalties such as the smoothly clipped absolute deviation (SCAD) and minimax concave penalty (MCP) relax mutual incoherence assumptions \cite{fan2001variable, loh2017support}, but are difficult to optimize due to non-convexity, sensitivity to tuning, and initialization. Given these trade-offs, we choose the $\ell_{1}$-penalty for its balance of theoretical guarantees and computational tractability. 
\end{remark}

\subsection{Statement of main results}\label{subsection: Statement of main results}
This section features two main results. The first one concerns the theoretical characterization of the convex estimator in \eqref{eq: l_1 regularized Whittle likelihood} when $\{X_{t}\}_{t\in\mathbb{Z}}$ is a Gaussian time series. And the second one gives such a characterization when $\{X_{t}\}_{t\in\mathbb{Z}}$ is a non-Gaussian linear process. At a high level our result for the Gaussian setting states that as long as the time domain samples $n$ scales as $\Omega(d^3\log p)$, the estimate $\widehat{L}$ correctly recovers the true support and is close to $L^\ast$ (measured in Frobenius and operator norms) with high probability. Here $d$ is the maximum degree of the graph underlying $L^{\ast}$. In the linear process setting, such a performance is guaranteed if $n$ scales as $\Omega(d^{3}(\log p)^{4+\rho})$ for sub-exponential families with parameter $\rho$ and $\Omega(d^{3}p^{2})$ for distributions with finite fourth moment, respectively.

Our main results rely on three assumptions. These type of assumptions, but not identical, appeared in the literature of $\ell_{1}$-constrained least squares problem \cite{wainwright2009sharp,zhao2006model} and in the literature of $\ell_{1}$-regularized inverse-covariance and spectral density estimation \cite{ravikumar2011high,deb2024regularized}. Define the edge set $\mathcal{E}(L^{\ast}) = \{(i,j): L^{\ast}_{ij}\neq 0, \text{for all} \hspace{3px} i\neq j\}$. Let $E=\{\mathcal{E}(L^{\ast})\cup(1,1)\ldots\cup (p,p)\}$ be the augmented edge set including edges for the diagonal elements of $L^{\ast}$. Let $E^{c}$ be the set complement of $E$. 

\smallskip 
\noindent\textbf{[A1] Mutual incoherence condition:} Let $\Gamma^{\ast}$ be the Hessian of the log-determinant in \eqref{eq: l_1 regularized Whittle likelihood}:  
\begin{align}\label{assn: Mutual incoherence}
    \Gamma^{\ast} \triangleq \nabla_{L}^{2}\log\det(L)\vert_{L=L^{\ast}} = {L^{\ast}}^{-1}\otimes {L^{\ast}}^{-1}. 
\end{align}
We say that $L^{\ast}$ satisfies the mutual incoherence condition if $\vertiii{\Gamma^{\ast}_{E^{c}E}{\Gamma^{\ast}_{EE}}^{-1}}_{\infty}\leq 1-\alpha$, for some $\alpha\in (0,1]$.
 
The incoherence condition on $L^\ast$ controls the influence of irrelevant variables (elements of the Hessian matrix restricted to $E^c\times E$ on relevant ones (elements restricted to $E\times E$). The $\alpha$-incoherence assumption, commonly used in the literature, has been validated for various graphs like chain and grid graphs \cite{ravikumar2011high}. While $\alpha$-incoherence in \cite{ravikumar2011high,deb2024regularized} is imposed on the inverse covariance or spectral density matrix, we enforce it on \(L^\ast\). A similar condition has also been explored in \cite{rayas_learning_2022}. We note that mutual incoherence is sufficient but not strictly necessary for support recovery \footnote{It is nearly necessary for sign selection consistency, but not for support recovery; see \cite{zhao2006model}}. Non-convex penalties such as SCAD and MCP achieve support recovery without requiring incoherence \cite{fan2001variable, loh2017support}. Although these non-convex regularizers introduce challenges related to optimization (see Remark~\ref{rem: Choice behind l_1-regularization}), we view them as a promising direction for future work.

\smallskip

\noindent\textbf{[A2] Bounding temporal dependence:} $\{Y_{t}\}_{t\in\mathbb{Z}}$ has short range dependence: $\sum_{l=-\infty}^{\infty}\|\Phi_{Y}(l)\|_{\infty}<\infty$.
Thus, the autocorrelation function $\Phi_Y(l)$ decreases quickly as the time lag $l$ increases, leading to negligible temporal dependence between samples that are far apart in time.

This mild assumption holds if the nodal injections $\{X_{t}\}_{t\in\mathbb{Z}}$ exhibits short range dependence: $\sum_{l=-\infty}^{\infty}\|\Phi_{X}(l)\|_{\infty}<\infty$. In fact, $\sum_{l=-\infty}^{\infty}\|\Phi_{Y}(l)\|_{\infty} = \sum_{l=-\infty}^{\infty}\|{L^\ast}^{-1}\Phi_{X}(l){L^\ast}^{-1}\|_{\infty}\leq\nu_{{L^\ast}^{-1}}^{2}\sum_{l=-\infty}^{\infty}\|\Phi_{X}(l)\|_{\infty}<\infty$, where $\nu_{{L^\ast}^{-1}}$ is the $\ellinf$-matrix norm of ${L^\ast}^{-1}$. Notice that in real systems like power networks, injections typically are short-range dependent {\color{black} processes \cite{doddi2022efficient}.}

\smallskip 

\noindent \textbf{[A3] Condition number bound on the Hessian:}\label{assn: hessian regularity} The condition number $\kappa(\Gamma^{\ast})$ of the Hessian matrix in \eqref{assn: Mutual incoherence} satisfies: 
\begin{align}\label{eq: Bounded hessian}
    \kappa(\Gamma^{\ast})\triangleq \vertiii{{\Gamma^{\ast}}}_{\infty}\!\vertiii{{\Gamma^{\ast}}^{-1}}_{\infty}\!\leq \!\frac{1}{4d\nu_{D^{2}_{j}}\|\Theta^{-1}_{Y}(\omega_{j})\|_{\infty}C_{\alpha}}, 
\end{align}
where $C_{\alpha} = 1+\frac{24}{\alpha}$, $\alpha\in (0,1]$, $\omega_{j}\in \mathcal{F}_{n}$, and $d$ is the maximum degree of the graph underlying $L^{\ast}$. Bounding $\kappa(\Gamma^{\ast})$ to derive estimation consistency results is standard in the high-dimensional graphical model literature \cite{cai2011constrained, rothman2008sparse}.

\smallskip 

\subsubsection{Structure learning with Gaussian injections}\label{subsection: Gaussian time series}
Let $\{X_t\}_{t\in \mathbb{Z}}$ in \eqref{eq: flow systems} be a WSS Gaussian process. Consequently, $\{Y_t\}_{t\in \mathbb{Z}}$, a linear transformation of $X_t$, is also a WSS Gaussian process. Under this assumption, Theorem \ref{thm: Gaussian time series} provides sufficient conditions on the number of samples $n$ of $Y_t$ required so that the estimator $\widehat{L}$ in \eqref{eq: l_1 regularized Whittle likelihood} exactly recovers the sparsity structure of $L^\ast$ and achieves norm and sign consistency. Here, sign consistency is defined as $\text{sign}(\widehat{L}_{ij}) = \text{sign}(L^{\ast}_{ij})$, for all $(i,j)\in E$. We recall that  $\nu_{A} = \vertiii{A}_{\infty} \triangleq \max_{j=1,\ldots,p}\sum_{j=1}^{p}\vert A_{ij}\vert$.

Define the two model-dependent quantities: 
\begin{align}
    \Omega_{n}(\Theta_{Y}^{-1}) &= \max_{r\geq 1,s\leq p}\sum_{|l|<n}|l||\Phi_{Y,rs}(l)| \label{eq: model dependent quantities Omega_n} \\
    L_{n}(\Theta_{Y}^{-1}) &=\max_{r\geq 1,s\leq p}\sum_{|l|>n}|\Phi_{Y,rs}(l)|\label{eq: model dependent quantities L_n}.
\end{align}
These quantities play a crucial role in the norm consistency bounds presented in Theorem \ref{thm: Gaussian time series} and Theorem \ref{thm: Linear Process} (see Remark \ref{rem: control on bias of periodogram}).

Below is an informal version of the main theorem. A formal statement and a proof with all numerical and model-dependent constants are in Appendix \ref{app: Gaussian time series}. We define $|L^{\ast}_{\min}|\triangleq \min_{(i,j)\in E}|L^{\ast}_{ij}|$ to be the minimum absolute value of the non-zero entries in $L^{\ast}$. We use $x\geqsim y$ to denote $x\geq cy$, where the constant $c$ is independent of model parameters and dimensions. 

\begin{theorem}\label{thm: Gaussian time series}
Let the node injections $X_{t}$ be a WSS Gaussian time series. Consider any Fourier frequency $\omega_{j}\in[-\pi,\pi]$. Suppose that assumptions in $\textbf{[A1-A3]}$ hold. Define $\alpha>0$ and $C_{\alpha} =  1+24/\alpha$. Let 
$\lambda_{n} = 96\nu_{D^2}\nu_{L^{\ast}}\delta_{\Theta_{Y}^{-1}}(m,n,p)/\alpha$ and the bandwidth parameter $m\geqsim \vertiii{\Theta_{Y}^{-1}}_{\infty}^{2}\zeta^{2}d^{2}\log p$, where 
$\zeta = \max\{\nu_{{\Gamma^{\ast}}^{-1}}\nu_{{L^{\ast}}^{-1}}\nu_{L^{\ast}}\nu_{D^{2}}C^{2}_{\alpha},\nu^{2}_{{\Gamma^{\ast}}^{-1}}\nu^{3}_{{L^{\ast}}^{-1}}\nu_{L^{\ast}}\nu_{D^{2}}C^{2}_{\alpha}\}$. 

If the sample size $n\geqsim \Omega_{n}(\Theta_{Y}^{-1})\zeta md$. Then  with probability greater than $1-1/p^{\tau-2}$, for some $\tau>2$, we have 

    \begin{enumerate}[label=(\alph*)]
        \item $\widehat{L}_{j}$ exactly recovers the sparsity structure i.e., $[\widehat{L}_j]_{E^{c}}=0$.
        \item The estimate $\widehat{L}_{j}$ which is the solution of \eqref{eq: l_1 regularized Whittle likelihood} satisfies 
        \begin{align}
            \|\widehat{L}_{j}-L^{\ast}\|_{\infty} \leq 8\nu^{\prime}\delta_{\Theta_{Y}^{-1}}(m,n,p).
        \end{align}
        \item $\widehat{L}_{j}$ satisfies sign consistency if:
        \begin{align}
            |L^{\ast}_{\min}(E)|\geq 8\nu^{\prime}\delta_{\Theta_{Y}^{-1}}(m,n,p),
        \end{align}
    \end{enumerate}
where, $\nu^{\prime} = \nu_{{\Gamma^{\ast}}^{-1}}\nu_{D^{2}}\nu_{L^{\ast}}C_{\alpha}$ and
\begin{align*}
    \delta_{\Theta_{Y}^{-1}}(m,n,p) \!=\!\sqrt{\frac{\tau\log p}{m}} +\frac{m+\frac{1}{2\pi}}{n}\Omega_{n}(\Theta_{Y}^{-1})+\frac{1}{2\pi}L_{n}(\Theta_{Y}^{-1}). 
\end{align*} 
\end{theorem}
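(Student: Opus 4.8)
The plan is to prove (a)--(c) simultaneously via the primal--dual witness construction of \cite{wainwright2009sharp, ravikumar2011high}, adapted to the Laplacian parametrization and the frequency--domain objective. Fix a Fourier frequency $\omega_j$ and abbreviate $\Theta_X = \Theta_X(\omega_j) = D^2$ (so $\Psi_1 = \mathfrak{R}(D^2)$, $\Psi_2 = \mathfrak{I}(D^2)$), $f_Y^{\ast} = \Theta_Y^{\ast}(\omega_j)^{-1} = {L^{\ast}}^{-1}f_X{L^{\ast}}^{-1}$, and $\Delta = L - L^{\ast}$. First I would introduce the \emph{restricted} estimator $\widetilde L$, the minimizer of the objective in \eqref{eq: l_1 regularized Whittle likelihood} with the extra constraint $L_{E^{c}}=0$; by Lemma \ref{lma: convexity of objective} this is a strictly convex program with a unique solution obeying a zero--subgradient condition on the block $E$. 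The goal is to produce a subgradient $\widehat Z \in \partial\Vert\cdot\Vert_{1,\text{off}}$ with $\|\widehat Z_{E^{c}}\|_\infty < 1$ so that $(\widetilde L,\widehat Z)$ satisfies the \emph{full} stationarity condition of Lemma \ref{lma: convexity of objective}; strict convexity then forces $\widehat L = \widetilde L$, which is exactly part (a). Since $\widetilde L_{E^{c}} = L^{\ast}_{E^{c}} = 0$, parts (b) and (c) reduce to a bound $\|\Delta_E\|_\infty \le 8\nu'\delta_{\Theta_Y^{-1}}(m,n,p)$, which under the hypothesis of (c) is below $|L^{\ast}_{\min}(E)|$ and hence rules out sign flips.

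Next I would linearize the stationarity map of Lemma \ref{lma: convexity of objective} around $L^{\ast}$. The key algebraic fact is the exact population cancellation $\mathfrak{R}(\Theta_X L^{\ast} f_Y^{\ast}) = \mathfrak{R}({L^{\ast}}^{-1}) = {L^{\ast}}^{-1}$, so the smooth part of the gradient vanishes at $L^{\ast}$ once $P$ is replaced by $f_Y^{\ast}$. Expanding therefore splits the condition into (i) a stochastic residual proportional to $W = \mathfrak{R}\big(\Theta_X L^{\ast}(P - f_Y^{\ast})\big)$, linear in the periodogram deviation; (ii) the action on $\Delta$ of an effective Hessian $\mathcal{Q}^{\ast}$ at $L^{\ast}$, equal to the sum of the (constant) Hessian of the quadratic data term and $\Gamma^{\ast} = {L^{\ast}}^{-1}\otimes{L^{\ast}}^{-1}$ coming from $-\log\det L^2$; and (iii) a remainder $R(\Delta)$, quadratic and higher order in $\Delta$, collecting the matrix--inverse Taylor tail of $L^{-1}$ about ${L^{\ast}}^{-1}$ and the cross term $\mathfrak{R}(\Theta_X\Delta(P-f_Y^{\ast}))$. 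The supporting lemmas are $\ellinf$--operator--norm estimates: $\|W\|_\infty \le \nu_{D^{2}}\nu_{L^{\ast}}\|P-f_Y^{\ast}\|_\infty$, and, using that $\Delta$ is supported on $E$ so $\nu_{\Delta}\le (d+1)\|\Delta\|_\infty$, a bound of the form $\|R(\Delta)\|_\infty \lesssim d\,\nu_{{L^{\ast}}^{-1}}^{3}\|\Delta_E\|_\infty^{2} + \nu_{D^{2}}d\,\|\Delta_E\|_\infty\|P-f_Y^{\ast}\|_\infty$. Assumption \textbf{[A3]} is what makes the data--Hessian block of $\mathcal{Q}^{\ast}$ a controlled perturbation of $\Gamma^{\ast}_{EE}$, so that $(\mathcal{Q}^{\ast}_{EE})^{-1}$ is comparable to $(\Gamma^{\ast}_{EE})^{-1}$ and $\mathcal{Q}^{\ast}$ inherits the $\ellinf$--incoherence of $\Gamma^{\ast}$ supplied by \textbf{[A1]}.

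The probabilistic core is the high--probability bound $\|P-f_Y^{\ast}(\omega_j)\|_\infty \le \delta_{\Theta_Y^{-1}}(m,n,p)$. I would split $P - f_Y^{\ast}$ into the bias $\mathbb{E}[P]-f_Y^{\ast}$ and the fluctuation $P-\mathbb{E}[P]$. Under \textbf{[A2]} the bias decomposes into the smoothing error over the band of $2m+1$ Fourier frequencies, $\tfrac{m+1/(2\pi)}{n}\Omega_n(\Theta_Y^{-1})$, and the lag--truncation/aliasing error $\tfrac{1}{2\pi}L_n(\Theta_Y^{-1})$. The fluctuation is an average of $2m+1$ Gaussian periodogram ordinates, which entrywise is an average of sub-exponential quadratic forms in a Gaussian; a Bernstein bound followed by a union bound over the $p^{2}$ entries yields the $\sqrt{\tau\log p/m}$ term. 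Feeding this into the bookkeeping above gives $\|W\|_\infty \lesssim \nu_{D^{2}}\nu_{L^{\ast}}\delta_{\Theta_Y^{-1}}(m,n,p)$, which is why the choice $\lambda_n = \tfrac{96}{\alpha}\nu_{D^{2}}\nu_{L^{\ast}}\delta_{\Theta_Y^{-1}}(m,n,p)$ is correct: it dominates $\|W\|_\infty$ with the multiplicative slack dictated by the incoherence parameter $\alpha$.

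Finally I would close with a fixed--point argument. On the block $E$ the stationarity equation reads $\mathcal{Q}^{\ast}_{EE}\,\mvec(\Delta_E) = -\mvec\big(W_E + \lambda_n\widehat Z_E + R_E(\Delta)\big)$; following \cite{ravikumar2011high}, one shows the induced map $\Delta_E \mapsto -(\mathcal{Q}^{\ast}_{EE})^{-1}\mvec(W_E + \lambda_n\widehat Z_E + R_E(\Delta))$ sends the $\ellinf$--ball of radius $r = 8\nu'\delta_{\Theta_Y^{-1}}(m,n,p)$, with $\nu' = \nu_{{\Gamma^{\ast}}^{-1}}\nu_{D^{2}}\nu_{L^{\ast}}C_\alpha$, into itself. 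Closing this self--mapping requires $\|R_E(\Delta)\|_\infty$ to be a small fraction of $\|W_E\|_\infty + \lambda_n$ on that ball, and this is precisely where the bandwidth condition $m\geqsim \vertiii{\Theta_{Y}^{-1}}_{\infty}^{2}\zeta^{2}d^{2}\log p$ and the sample--size condition $n\geqsim \Omega_n(\Theta_Y^{-1})\zeta md$ enter: they force $\delta_{\Theta_Y^{-1}}(m,n,p)$, hence $r$, small enough that the $d\,\nu_{{L^{\ast}}^{-1}}^{3}r^{2}$ term is negligible at the scale of $\Gamma^{\ast}_{EE}$, using \textbf{[A3]} and the factor $d$. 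Brouwer's theorem then produces a fixed point with $\|\Delta_E\|_\infty \le r$, giving (b), and (c) follows since $r < |L^{\ast}_{\min}(E)|$. Substituting this $\Delta$ into the $E^{c}$--block solves for $\widehat Z_{E^{c}}$, and bounding $\|\widehat Z_{E^{c}}\|_\infty$ by $\vertiii{\Gamma^{\ast}_{E^{c}E}{\Gamma^{\ast}_{EE}}^{-1}}_{\infty}\le 1-\alpha$ on the leading term (by \textbf{[A1]}) plus $O\big((\|W\|_\infty+\|R\|_\infty)/\lambda_n\big)$ on the rest gives $\|\widehat Z_{E^{c}}\|_\infty < 1$, completing (a). The main obstacle I anticipate is twofold: (1) the quadratic--in--$L$, complex--Hermitian structure --- managing the real/imaginary split $(\Psi_1,\Psi_2,P_1,P_2)$, verifying the population cancellation, and showing $\mathcal{Q}^{\ast}$ inherits the conditioning and incoherence of $\Gamma^{\ast}$ so that \textbf{[A1]}--\textbf{[A3]} suffice; and (2) the averaged--periodogram concentration for a \emph{dependent} Gaussian process, where \textbf{[A2]} and the constants $\Omega_n,L_n$ are essential to separate the smoothing bias, the lag--truncation bias, and the sub-exponential fluctuations.
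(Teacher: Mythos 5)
Your plan is, in its skeleton, the paper's own argument: a primal--dual witness built from the restricted estimator $\widetilde L$ (with $L_{E^c}=0$), the exact population cancellation $\Psi_1 L^{\ast}\mathfrak{R}(\Theta_Y^{-1})-\Psi_2 L^{\ast}\mathfrak{I}(\Theta_Y^{-1})={L^{\ast}}^{-1}$, a remainder bound $\|R(\Delta)\|_\infty\le \tfrac32 d\nu^3_{{L^{\ast}}^{-1}}\|\Delta\|^2_\infty$, a Brouwer fixed-point argument giving $\|\Delta_E\|_\infty\le r$ with exactly your radius $r=8\nu_{{\Gamma^{\ast}}^{-1}}\bigl[\nu_{D^{2}}\nu_{L^{\ast}}\|W\|_\infty+\lambda_n/4\bigr]\le 8\nu'\delta_{\Theta_Y^{-1}}$, strict dual feasibility via \textbf{[A1]} with $\lambda_n=96\nu_{D^{2}}\nu_{L^{\ast}}\delta_{\Theta_Y^{-1}}/\alpha$, and the averaged-periodogram deviation bound (which the paper simply imports from Sun et al.\ rather than re-deriving via Bernstein, but your bias/fluctuation split is the same decomposition behind that result). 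Parts (a)--(c) then follow exactly as you say. The one place you genuinely depart from the paper is the treatment of the quadratic data term's curvature: you propose to fold it into an effective Hessian $\mathcal{Q}^{\ast}=\Gamma^{\ast}+(\text{data Hessian at }(\Theta_X,f_Y^{\ast}))$, invert $\mathcal{Q}^{\ast}_{EE}$ in the fixed-point map, and argue that $\mathcal{Q}^{\ast}$ ``inherits'' the conditioning and incoherence of $\Gamma^{\ast}$. The paper never does this: it keeps $\Gamma^{\ast}_{EE}$ as the only inverted operator, places the population data-Hessian action on $\Delta$ (the terms $\Psi_1\Delta\mathfrak{R}(\Theta_Y^{-1})-\Psi_2\Delta\mathfrak{I}(\Theta_Y^{-1})$) among the error terms, and uses the sparsity of $\Delta$ (at most $d$ nonzeros per row, so $\vertiii{\Delta}_\infty\le d\|\Delta\|_\infty$) together with \textbf{[A3]} to force this term below $\alpha\lambda_n/24$ in the dual-feasibility lemma and below $r/4$ inside the contraction. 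This is why the theorem's constants are stated purely in terms of $\nu_{{\Gamma^{\ast}}^{-1}}$. Your route would additionally require a quantitative perturbation lemma showing $\vertiii{(\mathcal{Q}^{\ast}_{EE})^{-1}}_\infty\approx\vertiii{(\Gamma^{\ast}_{EE})^{-1}}_\infty$ and $\vertiii{\mathcal{Q}^{\ast}_{E^cE}(\mathcal{Q}^{\ast}_{EE})^{-1}}_\infty<1$; neither is handed to you by \textbf{[A1]} and \textbf{[A3]} as stated (the incoherence in \textbf{[A1]} is assumed only for $\Gamma^{\ast}$, and \textbf{[A3]} controls $d\nu_{D^{2}}\|\Theta_Y^{-1}\|_\infty$ in the max norm, not an unrestricted $\ellinf$-operator norm of the perturbation), and you would still have to translate the resulting bound back into $\nu_{{\Gamma^{\ast}}^{-1}}$ to recover the stated radius $8\nu'\delta_{\Theta_Y^{-1}}$. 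So either supply that extra perturbation/incoherence-transfer step, or adopt the paper's simpler bookkeeping in which the data-Hessian action on $\Delta$ is just another term dominated by $\lambda_n$ and $r$; everything else in your outline matches the paper's proof.
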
 
Some remarks are in order. Assume that $\zeta$ and $\vertiii{\Theta_{Y}^{-1}}_{\infty}$ are independent of $(n,p,d)$ and that we are in the high-dimensional regime where $\log p/n\rightarrow 0$ as $(n,p)\rightarrow \infty$. Under assumptions in Theorem \ref{thm: Gaussian time series}, and when $n=\Omega(d^3\log p)$, with high probability: (a) The support of $\widehat{L}$ is contained within $L^{\ast}$; meaning there are no false negatives. Furthermore, when $(m/n)\Omega_{n}(\Theta_{Y}^{-1})\rightarrow 0$ as $(m,n)\rightarrow \infty$, part (b) asserts that the element-wise $\ellinf$-norm, $\|\widehat{L}-L^{\ast}\|_{\infty}$, vanishes asymptotically (see Remark \ref{rem: control on bias of periodogram} for further discussion on the asymptotic decay of the error norm). Finally, part (c) establishes the sign consistency of $\widehat{L}$. Crucial is the requirement of $\vert L^{\ast}_{\min}\vert = \Omega\left(\delta_{\Theta_{Y}^{-1}}(m,n,p)\right)$, which limits the minimum value (in absolute) of the nonzero entries in $L^\ast$. This condition parallels the familiar \emph{beta-min} condition in the LASSO literature (see \cite{wainwright2009sharp, ravikumar2011high,deb2024regularized}). Finally, since each estimate $\widehat{L}_j$ for $j = 1, \ldots, n-1$ satisfies the same statistical guarantees with high probability, any $\widehat{L}_j$ can be selected as a candidate estimator for $L^{\ast}$.

The error bound $\delta_{\Theta_Y^{-1}}$ in Theorem \ref{thm: Gaussian time series} quantifies the deviation of the estimator $\widehat{L}_j$ from the true Laplacian $L^{\ast}$ in the element-wise $\ell_\infty$-norm. It has two components: the first term, $\sqrt{\log p / m}$, captures the leading statistical error, while the second, involving $\Omega_n$ and $L_n$, accounts for temporal and contemporaneous dependencies in the data. As defined in equations \eqref{eq: model dependent quantities Omega_n} and \eqref{eq: model dependent quantities L_n}, these terms vanish under i.i.d. data and increase with stronger temporal dependence in the data.

We also emphasize the strength of the above result. Although $\widehat{L}_j$ is derived from the Whittle approximation, Theorem \ref{thm: Gaussian time series} ensures support recovery and norm consistency. Prior works such as \cite{rao2021reconciling} have studied the discrepancy between the Gaussian and Whittle likelihoods. While formally quantifying this approximation error is beyond the scope of the present work, we view it as a valuable direction for future research.

We state a corollary to Theorem \ref{thm: Gaussian time series} that gives error-consistency rates for $\widehat{L}$ in the Frobenius and operator norms. Let $\mathcal{E}(L^{\ast}) = \{(i,j): L^{\ast}_{ij}\neq 0, \text{for all} \hspace{3px} i\neq j\}$ be the edge set. 
\begin{corollary}
    Let $s=|\mathcal{E}(L^{\ast})|$ be the cardinality of the edge set $\mathcal{E}(L^{\ast})$. Under the hypothesis as in Theorem \ref{thm: Gaussian time series}, with probability greater than $1-\frac{1}{p^{\tau-2}}$, the estimator $\widehat{L}$ defined in \eqref{eq: l_1 regularized Whittle likelihood} satisfies
    \begin{align*}
    \Vert \widehat{L}\!-\!L^{\ast}\Vert_{F} & \leq 8\nu^{\prime} (\sqrt{s+p})\delta_{\Theta_{Y}^{-1}}(m,n,p) \quad \text{ and }\\
    \Vert{\widehat{L}\!-\!L^{\ast}}\Vert_{2} &\leq 8\nu^{\prime}\min\{d,\sqrt{s+p}\}\delta_{\Theta_{Y}^{-1}}(m,n,p),  
\end{align*}
where $\nu^{\prime}$ and $\delta_{\Theta_{Y}^{-1}}(m,n,p)$ are defined in Theorem \ref{thm: Gaussian time series}. 

\end{corollary}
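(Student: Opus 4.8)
The plan is to obtain both bounds as a short deterministic deduction from the two conclusions of Theorem~\ref{thm: Gaussian time series}, carried out on the very event on which that theorem holds. Since Theorem~\ref{thm: Gaussian time series} is valid with probability at least $1-p^{-(\tau-2)}$, there is no fresh randomness to control and no union bound to take: I would simply condition on that event throughout, so the corollary inherits exactly the same probability. Write $\Delta\triangleq\widehat{L}-L^{\ast}$; the whole argument is about converting the element-wise control $\|\Delta\|_{\infty}\le 8\nu^{\prime}\delta_{\Theta_{Y}^{-1}}(m,n,p)$ of part (b) into Frobenius and operator bounds, using the sparsity pattern guaranteed by part (a).

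For the Frobenius norm, I would first use part (a), which gives $[\widehat{L}]_{E^{c}}=0$; since $L^{\ast}$ is itself supported on $E=\mathcal{E}(L^{\ast})\cup\{(1,1),\dots,(p,p)\}$, the matrix $\Delta$ is supported on $E$, a set of cardinality $|E|=s+p$. Combining this with part (b),
\[
\|\Delta\|_{F}^{2}=\sum_{(i,j)\in E}\Delta_{ij}^{2}\le |E|\,\|\Delta\|_{\infty}^{2}\le (s+p)\big(8\nu^{\prime}\delta_{\Theta_{Y}^{-1}}(m,n,p)\big)^{2},
\]
and taking square roots yields the stated Frobenius bound.

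For the operator norm I would combine two elementary estimates and keep the smaller. On one hand, $\|\Delta\|_{2}\le\|\Delta\|_{F}\le 8\nu^{\prime}\sqrt{s+p}\,\delta_{\Theta_{Y}^{-1}}(m,n,p)$ from the previous step. On the other hand, the feasible set of \eqref{eq: l_1 regularized Whittle likelihood} consists of symmetric positive definite matrices and $L^{\ast}$ is symmetric, so $\Delta$ is symmetric; hence $\vertiii{\Delta}_{1}=\vertiii{\Delta}_{\infty}$ and the Schur-test interpolation bound gives $\|\Delta\|_{2}\le\sqrt{\vertiii{\Delta}_{1}\vertiii{\Delta}_{\infty}}=\vertiii{\Delta}_{\infty}$. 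By part (a) and the maximum-degree assumption, each row of $\Delta$ has at most $d$ nonzero entries, so $\vertiii{\Delta}_{\infty}=\max_{i}\sum_{j}|\Delta_{ij}|\le d\,\|\Delta\|_{\infty}\le 8d\,\nu^{\prime}\delta_{\Theta_{Y}^{-1}}(m,n,p)$. Taking the minimum of the two bounds gives $\|\Delta\|_{2}\le 8\nu^{\prime}\min\{d,\sqrt{s+p}\}\,\delta_{\Theta_{Y}^{-1}}(m,n,p)$. There is no genuine obstacle here: this corollary is a routine consequence of Theorem~\ref{thm: Gaussian time series}. The only points needing a little care are the symmetric-matrix inequality $\|\Delta\|_{2}\le\vertiii{\Delta}_{\infty}$ (which uses $\vertiii{\Delta}_{1}=\vertiii{\Delta}_{\infty}$) and counting the per-row support of $\Delta$ consistently with the convention under which the maximum degree $d$ accounts for the diagonal entry of the Laplacian; any off-by-one there is absorbed into the constant.
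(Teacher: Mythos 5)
Your proposal is correct and follows essentially the same route as the paper's proof: converting the element-wise $\ell_\infty$ bound of Theorem~\ref{thm: Gaussian time series}(b) together with the support guarantee of part (a) into the Frobenius bound via the count $|E|=s+p$, and obtaining the operator-norm bound by taking the minimum of $\Vert\Delta\Vert_2\le\Vert\Delta\Vert_F$ and $\Vert\Delta\Vert_2\le\vertiii{\Delta}_{\infty}\le d\Vert\Delta\Vert_\infty$ on the same high-probability event. The only cosmetic difference is that you spell out the Schur-test/symmetry justification for $\Vert\Delta\Vert_2\le\vertiii{\Delta}_{\infty}$, which the paper cites as a standard norm-equivalence fact.
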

\emph{\textbf{Proof sketch:}}. Both the Frobenius and operator norm bounds follow by applying standard matrix norm inequalities to the $\ellinf$ consistency bound in part (b) of Theorem \ref{thm: Gaussian time series}. Importantly, $s+p$ is the bound on the maximum number of non-zero entries in $L^\ast$, where $s$ is the total number of off-diagonal non-zeros in $L^\ast$. Complete details are in Appendix \ref{app: frobenius norm bounds for gaussian time series}.

\subsubsection{Structure learning for non-Gaussian injections }\label{subsection: non-Gaussian time series}
We consider a class of WSS processes $\{X_t\}_{t\in \mathbb{Z}}$ that are not necessarily Gaussian. Examples include Vector Auto Regressive (VAR($p$)) and Vector Auto Regressive Moving Average (VARMA ($p,q$)) models with non-Gaussian noise terms. Such models, and many others, belong to a family of linear WSS processes with absolute summable coefficients:
\begin{align}\label{eq: linear process}
    X_{t} = \sum_{l=0}^{\infty}A_{l}\epsilon_{t-l},
\end{align}
where $A_{l}\in\mathbb{R}^{p\times p}$ is known and $\epsilon_{t}\in\mathbb{R}^{p}$ is a zero mean i.i.d. process with tails possibly heavier than Gaussian. The absolute summability $\sum_{l=0}^{\infty}|A_{l}(i,j)|<\infty$ ensures {\color{black} stationarity} for all $i,j\in \{1,\ldots,p\}$ \cite{rosenblatt2012stationary}. {\color{black} We assume that $\epsilon_{kl}$ (for all $k\in [p]$), the $k$-th component of $\epsilon_{l} \in \mathbb{R}^p$, is given by one of the distributions below:}

\smallskip 
\noindent\textbf{[B1] Sub-Gaussian:} There exists $\sigma>0$ such that for all $\eta>0$, we have $\mathbb{P}[|\epsilon_{kl}|>\eta]\leq 2\exp(-\frac{\eta^{2}}{2\sigma^{2}})$. 

\smallskip 

\noindent\textbf{[B2] Generalized sub-exponential with parameter $\rho>0$:} There exists constants $a$ and $b$ such that for all $\eta>0$: $ \mathbb{P}[|\epsilon_{kl}|>\eta^{\rho}]\leq a\exp(-b\eta)$.

\smallskip 
\noindent\textbf{[B3] Distributions with finite $4^{\text{th}}$ moment:} There exists a constant $M>0$ such that $\mathbb{E}[\epsilon_{kl}^{4}]\leq M <\infty$.

\medskip 

We need additional notation. Let $n_{k}=\Omega(d^{3}\mathcal{T}_{k})$ represent the family of sample sizes indexed by $k=\{1,2,3\}$, where $\mathcal{T}_{1}=\log p$ correspond to the distribution in [\textbf{B1}], $\mathcal{T}_{2}=(\log p)^{4+4\rho}$ in [\textbf{B2}], and $\mathcal{T}_{3}=p^{2}$ in [\textbf{B3}].  
\begin{theorem}\label{thm: Linear Process}
    Let $X_{t}$ be given by \eqref{eq: linear process} and $Y_{t} = {L^{\ast}}^{-1}X_{t}$. Fix $\omega_{j}\in[-\pi,\pi]$. Let $n_{k}=\Omega(d^{3}\mathcal{T}_{k})$, where $k=\{1,2,3\}$. Then for some $\tau>2$, with probability greater than $1-1/p^{\tau-2}$:
    \begin{enumerate}[label=(\alph*), itemsep=0pt]
    \item $\widehat{L}$ exactly recovers the sparsity structure i.e., $\widehat{L}_{E^{c}}=0$.
    \item The $\ellinf$ bound of the error satisfies:
    \begin{align}
        \|\widehat{L}-L^{\ast}\|_{\infty} = \mathcal{O}(\delta_{\Theta_{Y}^{-1}}^{(k)}(n,m,p)).
    \end{align}
    \item $\widehat{L}$ satisfies sign consistency if:
    \begin{align}
        |L^{\ast}_{\min}(E)| = \Omega(\delta_{\Theta_{Y}^{-1}}^{(k)}(n,m,p)),
    \end{align}
    \end{enumerate}
    where $\delta_{\Theta_{Y}^{-1}}^{(k)}(n,m,p)$ for $k=\{1,2,3\}$ is given by
        \begin{align*}
            \delta_{\Theta_{Y}^{-1}}^{(1)}(n,m,p) &= \vertiii{\Theta_{Y}^{-1}}_{\infty}\frac{(\tau\log p)^{1/2}}{\sqrt{m}}+\bigtriangleup(n,m,\Theta_{Y}^{-1})\\
            \delta_{\Theta_{Y}^{-1}}^{(2)}(n,m,p) &= \vertiii{\Theta_{Y}^{-1}}_{\infty}\frac{(\tau\log p)^{2+2\rho}}{\sqrt{m}}+\bigtriangleup(n,m,\Theta_{Y}^{-1})\\
            \delta_{\Theta_{Y}^{-1}}^{(3)}(n,m,p) &= \vertiii{\Theta^{-1}_{Y}}_{\infty}\frac{p^{1+\tau}}{\sqrt{m}}+\bigtriangleup(n,m,\Theta_{Y}^{-1}),
        \end{align*}
        where $\bigtriangleup(n,m,\Theta_{Y}^{-1})=\frac{m+\frac{1}{2\pi}}{n}\Omega_{n}(\Theta_{Y}^{-1})+\frac{1}{2\pi}L_{n}(\Theta_{Y}^{-1})$. 
\end{theorem}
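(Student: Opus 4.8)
\emph{\textbf{Proof proposal:}} The plan is to reuse, essentially verbatim, the deterministic core of the proof of Theorem~\ref{thm: Gaussian time series} and to replace only its single stochastic ingredient---a concentration bound on the averaged periodogram---by tail-adapted analogues for each of the innovation families [B1]--[B3]. Recall that the analysis behind Theorem~\ref{thm: Gaussian time series} is a primal-dual witness argument \cite{ravikumar2011high}: one solves the $\ell_1$-penalized program restricted to the augmented support $E$, verifies strict dual feasibility on $E^{c}$ using mutual incoherence [A1] and the condition-number bound [A3], and controls the second-order remainder in the Taylor expansion of $\nabla_L\log\det(L)$ about $L^\ast$ (existence and uniqueness of the witness being guaranteed by Lemma~\ref{lma: convexity of objective}). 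None of these steps uses Gaussianity of $\{X_t\}$; they only require that the sampling error $\Delta_P \triangleq \| P(\omega_j) - f_Y^\ast(\omega_j)\|_{\infty}$ be small, and they convert any high-probability bound $\Delta_P \leq \delta$---together with the matched choice $\lambda_n \asymp \nu_{D^2}\nu_{L^\ast}\delta/\alpha$ and sample/bandwidth sizes large enough that the curvature conditions hold---into conclusions (a)--(c) with $\delta_{\Theta_{Y}^{-1}}$ replaced by a constant multiple of $\delta$. So the whole content of the theorem reduces to showing, for each $k\in\{1,2,3\}$, that $\mathbb{P}\bigl[\Delta_P \leq \delta_{\Theta_{Y}^{-1}}^{(k)}(n,m,p)\bigr] \geq 1 - p^{-(\tau-2)}$.

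For this I would use the bias-variance decomposition $P(\omega_j)-f_Y^\ast(\omega_j) = (P(\omega_j) - \mathbb{E} P(\omega_j)) + (\mathbb{E} P(\omega_j) - f_Y^\ast(\omega_j))$. The bias term depends only on the second-order structure of $\{Y_t\}$; since $Y_t = {L^\ast}^{-1}X_t = \sum_{l\geq 0}{L^\ast}^{-1}A_l\,\epsilon_{t-l}$ is itself a linear process with absolutely summable moving-average coefficients, the same frequency-averaging and truncation estimates as in the Gaussian case bound it entrywise by $\frac{m+1/(2\pi)}{n}\Omega_{n}(\Theta_{Y}^{-1}) + \frac{1}{2\pi}L_{n}(\Theta_{Y}^{-1}) = \bigtriangleup(n,m,\Theta_{Y}^{-1})$, which is exactly why $\bigtriangleup$ appears unchanged in all three $\delta^{(k)}$. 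The variance term is a quadratic form in the DFT vectors $\{d(\omega_{j+r})\}_{|r|\leq m}$, hence a bilinear form in the i.i.d. innovations $\{\epsilon_t\}$ with deterministic coefficient arrays built from the $A_l$'s and ${L^\ast}^{-1}$; this is where the tail assumptions enter. Under [B1] the innovations are sub-Gaussian, so each entry of the centered periodogram is a sub-exponential quadratic form, and a Hanson--Wright/Bernstein bound plus a union bound over the $p^2$ entries and the $2m+1$ averaged frequencies yields the rate $\vertiii{\Theta_{Y}^{-1}}_\infty\sqrt{\tau\log p/m}$. Under [B2] I would truncate the innovations at level $\asymp(\log p)^{1+\rho}$, apply a Bernstein bound to the truncated part and a moment bound to the tail, producing the inflated logarithmic factor $(\tau\log p)^{2+2\rho}$. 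Under [B3] only four moments are available, so a Chebyshev bound applied entrywise after the union bound forces the polynomial rate $p^{1+\tau}/\sqrt m$. These are precisely the kinds of periodogram concentration statements established for linear processes in \cite{deb2024regularized, basu2015regularized}, which I would adapt while carrying the extra factors $\nu_{{L^\ast}^{-1}}$ introduced by the map $X_t\mapsto Y_t$.

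Substituting $\delta = \delta_{\Theta_{Y}^{-1}}^{(k)}(n,m,p)$ into the deterministic conclusions of Theorem~\ref{thm: Gaussian time series}, and requiring $m \geqsim \vertiii{\Theta_{Y}^{-1}}_\infty^{2}\zeta^{2}d^{2}\mathcal{T}_k$ so that the variance term both dominates the bias and is small enough to activate the curvature/incoherence arguments, yields $n_k = \Omega(d^{3}\mathcal{T}_k)$ and the three displayed bounds; the beta-min condition in (c) then follows from (b) and the triangle inequality exactly as in the Gaussian case. I expect the main obstacle to be the variance bound in the heavy-tailed regimes [B2] and [B3]: controlling a quadratic form in non-sub-Gaussian variables with enough uniformity to survive the union bound over $p^2$ matrix entries, while keeping the dependence on the temporal-dependence constants explicit so that $\bigtriangleup(n,m,\Theta_{Y}^{-1})$ stays negligible relative to the stated rates. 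The [B1] case should be close to a transcription of the Gaussian argument with Hanson--Wright concentration in place of Gaussian-chaos concentration.
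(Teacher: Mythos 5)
Your proposal is correct and follows essentially the same route as the paper: the paper's proof of this theorem consists precisely of rerunning the primal--dual witness analysis of Theorem~\ref{thm: Gaussian time series} (which, as you observe, is deterministic once one conditions on the event $\|W\|_{\infty}\le\delta$) and swapping in tail-adapted concentration bounds for the averaged periodogram under the families [B1]--[B3]. The only difference is that the paper simply invokes the existing linear-process periodogram concentration result of \cite{sun2018large} (restated as Lemma~\ref{lma: averaged periodogram for linear processes} in the appendix) rather than re-deriving it via Hanson--Wright, truncation, and Chebyshev arguments as you sketch.
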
 

\begin{remark}\label{rem: control on bias of periodogram}
(Asymptotic decay rate of the error $\|\widehat{L}-L^{\ast}\|_{\infty}$) The model-dependent quantities $\Omega_{n}(\Theta^{-1}_{Y})$ and $L_{n}(\Theta^{-1}_{Y})$, as defined in \eqref{eq: model dependent quantities Omega_n} and \eqref{eq: model dependent quantities L_n}, are critical for bounding the element-wise $\ell_{\infty}$-norm of the error $\|\widehat{L} - L^{\ast}\|_{\infty}$ in Theorems \ref{thm: Gaussian time series} and \ref{thm: Linear Process}. We examine conditions under which this error vanishes asymptotically. Specifically, by definition in \eqref{eq: model dependent quantities L_n}, the quantity $(\sqrt{\log p / m},L_{n}(\Theta^{-1}_{Y})) \rightarrow 0$ as $(m,n) \rightarrow \infty$. Furthermore, if $(m/n)\Omega_{n}(\Theta^{-1}_{Y}) \rightarrow 0$ as $(m,n) \rightarrow \infty$, then the error norm vanishes asymptotically. This condition holds in scenarios where the autocovariance function $\Phi_{Y}(l)$ exhibits a geometric decay rate or if $\{Y_t\}_{t \in \mathbb{Z}}$ is a VAR(d) process or other stationary processes with strong mixing conditions (see Proposition 3.4 in \cite{sun2018large}). As a consequence, the condition $(m/n)\Omega_{n}(\Theta^{-1}_{Y}) \rightarrow 0$ as $(m,n)\rightarrow \infty$ holds for a wide range of stationary processes, leading to asymptotic decay of the error norm.
\end{remark}

\subsection{Outline of technical analysis for main results}\label{subsection: Outline of main analysis}

We summarize the key techniques used to prove Theorems \ref{thm: Gaussian time series} and \ref{thm: Linear Process}. Complete details are in Appendix \ref{subsection: Proofs of all technical results}. We leverage the primal-dual witness (PDW) method---a general technique used to derive statistical guarantees for sparse convex estimators \cite{ravikumar2011high,wainwright2009sharp}. Before detailing the PDW method, we state differences in our proof approach compared to the cited literature. First, our analysis is in the frequency domain, this accounts for temporal dependencies from the WSS process, requiring careful treatment of the Hermitian matrices $P_j$ and $D^2$ in \eqref{eq: l_1 regularized Whittle likelihood}. Second, unlike most literature where the objective function's dependence on the optimization variable \(L\) is linear, our objective function in \eqref{eq: l_1 regularized Whittle likelihood} has a quadratic dependence. This distinction in the frequency domain necessitates stricter control of the Hessian matrix $\Gamma^{\ast}$ via our assumption [\textbf{A3}].

In the PDW method, we construct an optimal primal-dual pair $(\widetilde{L},\widetilde{Z})$ that satisfies the zero sub-gradient condition of the problem in \eqref{eq: l_1 regularized Whittle likelihood}. (i) The primal 
$\widetilde{L}$ is constrained to have the correct signed support $E$ of the true Laplacian matrix $L^{\ast}$ and (ii)  The dual $\widetilde{Z}$ is the sub-gradient of $\|L\|_{1,\text{off}}$ evaluated at $\widetilde{L}$. If the dual $\widetilde{Z}$ satisfies the strict dual feasibility condition $\|\widetilde{Z}_{E^{c}}\|_{\infty}<1$. Then the dual acts as a witness to certify that $\widetilde{L} = \widehat{L}$ and $\widetilde{L}$ is indeed the unique global optimum.

\subsection{The primal-dual construction and supporting lemmata}\label{subsection: Supporting lemmas}

We construct an optimal primal-dual pair $(\widetilde{L},\widetilde{Z})$. Lemma \ref{lma: Sufficient conditions for strict dual feasibility} gives conditions under which this construction succeeds. First, we determine $\widetilde{L}$ by solving the restricted problem:
\begin{align}\label{eq: restricted l_1 whittle likelihood}
   \hspace{-2.5mm} \widetilde{L} \triangleq \argmin_{L\succ 0,L_{E^{c}}=0} \Tr(DLPLD)\!-\!\log \det(L^{2})\!+\!\lambda_{n}\|L\|_{1,\text{off}}.
\end{align}
Notice that $\widetilde{L}\succ 0$ and $\widetilde{L}_{E^{c}} = 0$. We choose the dual $\widetilde{Z}\in \|\widetilde{L}\|_{1,\text{off}}$ to satisfy the zero sub-gradient condition of \eqref{eq: restricted l_1 whittle likelihood} by setting $\lambda_{n}\widetilde{Z}_{ij}\!=\!-2[\Psi_{1}\widetilde{L}P_{1}]_{ij} + 2[\Psi_{2}\widetilde{L}P_{2}]_{ij}+2[\widetilde{L}^{-1}]_{ij}$, for all $(i,j)\in E^{c}$, where $P_1$ (resp. $\Psi_1$) and $P_2$ (resp. $\Psi_1$) are the real and imaginary parts of $P$ (resp. $D$). Therefore, the pair $(\widetilde{L},\widetilde{Z})$ satisfies the zero sub-gradient condition of the restricted problem in \eqref{eq: restricted l_1 whittle likelihood}.

We verify the strict dual feasibility condition: $|\widetilde{Z}_{ij}|<1$, for any $(i,j)\in E^{c}$. We introduce three quantities. First, $W\triangleq P-{\Theta}^{-1}_{Y}$ quantifies the error between the averaged periodogram $P$ and the true spectral density matrix ${\Theta}^{-1}_{Y}$. Second, let $\Delta \triangleq \widetilde{L}-L^{\ast}$ be the measure of distortion between $\widetilde{L}$ given by \eqref{eq: restricted l_1 whittle likelihood} and the true Laplacian matrix $L^{\ast}$. The final quantity $R(\Delta)$ captures higher order terms in the Taylor expansion of the gradient $\nabla \log\det(\widetilde{L})$ centered around $L^{\ast}$. In fact, expand $\nabla \log\det(\widetilde{L})= \widetilde{L}^{-1} =  {L^{\ast}}^{-1} + {L^{\ast}}^{-1}\Delta {L^{\ast}}^{-1} + {\widetilde{L}^{-1}-{L^{\ast}}^{-1}-{L^{\ast}}^{-1}\Delta {L^{\ast}}^{-1}}$, and then define $\widetilde{L}^{-1}-{L^{\ast}}^{-1}-{L^{\ast}}^{-1}\Delta {L^{\ast}}^{-1} = R(\Delta)$.

The following lemma establishes the sufficient conditions for ensuring strict dual feasibility.
\begin{lemma}(Conditions for strict-dual-feasibility)\label{lma: Sufficient conditions for strict dual feasibility}
Let $\lambda_{n}>0$ and $\alpha$ be defined as in \textbf{[A1]}. Suppose that $\max\{2\nu_{D^{2}}(d\|\Delta\|_{\infty}+\nu_{L^{\ast}})\|W\|_{\infty}, \|R(\Delta)\|_{\infty},2\nu_{D^{2}}d\|\Delta\|_{\infty}\|\Theta^{-1}_{Y}\|_{\infty}\}\leq \frac{\alpha\lambda_{n}}{24}$. Then the dual vector $\widetilde{Z}_{E^{c}}$ satisfies $\|\widetilde{Z}_{E^{c}}\|_{\infty}<1$, and hence, $\widetilde{L} = \widehat{L}$. 
\end{lemma}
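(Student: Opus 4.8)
\textbf{Proof proposal for Lemma~\ref{lma: Sufficient conditions for strict dual feasibility}.}

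The plan is to start from the zero sub-gradient condition satisfied by the primal-dual pair $(\widetilde{L},\widetilde{Z})$ of the restricted problem \eqref{eq: restricted l_1 whittle likelihood}, namely $2\Psi_1\widetilde{L}P_1 - 2\Psi_2\widetilde{L}P_2 - 2\widetilde{L}^{-1} + \lambda_n\widetilde{Z} = 0$ (this is the same stationarity equation appearing in Lemma~\ref{lma: convexity of objective}), and to rewrite it as a perturbation of the corresponding equation at the truth $L^{\ast}$. Since $\Psi_1 + \im\Psi_2 = D^2 = \Theta_X(\omega_j)$ and $P_1 + \im P_2 = P$, the combination $\Psi_1 L P_1 - \Psi_2 L P_2$ is exactly the real part of $D^2 L P$; at $L = L^{\ast}$, using $W = P - \Theta_Y^{-1}$ and the constraint identity $\Theta_Y = L^{\ast}\Theta_X L^{\ast}$ together with $\Theta_X = D^2$, the ``signal'' term $\mathfrak{R}(D^2 L^{\ast}\Theta_Y^{-1})$ collapses to ${L^{\ast}}^{-1}$, which cancels against the $\widetilde{L}^{-1}$ term up to the Taylor remainder. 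First I would carry out this algebra to obtain a linear(ized) equation of the schematic form
\begin{align*}
\Gamma^{\ast}\,\mvec(\Delta) = -\,\mvec\!\big(\text{[periodogram error terms in }W\text{]}\big) - \mvec(R(\Delta)) - \tfrac{\lambda_n}{2}\mvec(\widetilde{Z}),
\end{align*}
where $\Gamma^{\ast} = {L^{\ast}}^{-1}\otimes{L^{\ast}}^{-1}$ is the Hessian from \textbf{[A1]}, the $W$-terms collect the contributions $2\,\mathfrak{R}(D^2 \widetilde{L} W)$ (and, via $\Delta = \widetilde{L}-L^{\ast}$, cross-terms $2\,\mathfrak{R}(D^2\Delta\Theta_Y^{-1})$), and $R(\Delta)$ is the log-det Taylor remainder defined in the text.

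Next I would restrict this equation to the block structure $(E, E^c)$. On the $E$-block, $\widetilde{Z}_E$ is a known sign vector (since $\widetilde{L}$ has the correct signed support), so the $E$-equation can be solved for $\mvec(\Delta_E)$ by inverting $\Gamma^{\ast}_{EE}$ (which is invertible because $\Gamma^{\ast}\succ0$). Substituting this into the $E^c$-equation and solving for $\widetilde{Z}_{E^c}$ yields the standard primal-dual-witness bound
\begin{align*}
\|\widetilde{Z}_{E^c}\|_{\infty} \le \vertiii{\Gamma^{\ast}_{E^c E}{\Gamma^{\ast}_{EE}}^{-1}}_{\infty} + \frac{2}{\lambda_n}\Big(1 + \vertiii{\Gamma^{\ast}_{E^c E}{\Gamma^{\ast}_{EE}}^{-1}}_{\infty}\Big)\big\| \text{remainder} \big\|_{\infty},
\end{align*}
and by \textbf{[A1]} the first term is at most $1-\alpha$, while the prefactor $(1 + \vertiii{\Gamma^{\ast}_{E^c E}{\Gamma^{\ast}_{EE}}^{-1}}_{\infty}) \le 2-\alpha \le 2$. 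Hence it suffices to show the $\ell_\infty$-norm of the remainder (the $W$-terms, the $R(\Delta)$-term, and the $\Delta\Theta_Y^{-1}$ cross-term) is bounded by $\alpha\lambda_n/8$. Here I would use the sub-multiplicativity of $\|\cdot\|_\infty$ with the $\ellinf$-matrix norm $\nu_{(\cdot)}$: the term $2\,\mathfrak{R}(D^2\widetilde{L} W)$ is controlled by $2\nu_{D^2}(\nu_{L^{\ast}} + d\|\Delta\|_\infty)\|W\|_\infty$ after bounding $\nu_{\widetilde{L}} \le \nu_{L^{\ast}} + \nu_{\Delta} \le \nu_{L^{\ast}} + d\|\Delta\|_\infty$ (using that $\Delta$ is supported on $E$, which has at most $d$ nonzeros per row); the cross-term $2\,\mathfrak{R}(D^2\Delta\Theta_Y^{-1})$ is bounded by $2\nu_{D^2}d\|\Delta\|_\infty\|\Theta_Y^{-1}\|_\infty$; and the Taylor remainder contributes $\|R(\Delta)\|_\infty$. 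Each of these three pieces is exactly one of the three quantities assumed in the hypothesis to be at most $\alpha\lambda_n/24$, so their sum is at most $3\cdot\alpha\lambda_n/24 = \alpha\lambda_n/8$, giving $\|\widetilde{Z}_{E^c}\|_\infty \le 1-\alpha + 2\cdot\frac{2}{\lambda_n}\cdot\frac{\alpha\lambda_n}{8} = 1-\alpha+\frac{\alpha}{2} < 1$.

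Finally, strict dual feasibility $\|\widetilde{Z}_{E^c}\|_\infty < 1$ means $\widetilde{Z}$ is a valid element of $\partial\|\widetilde{L}\|_{1,\text{off}}$ that is strictly interior off the support, so $(\widetilde{L},\widetilde{Z})$ satisfies the KKT/zero-subgradient conditions of the \emph{unrestricted} problem \eqref{eq: l_1 regularized Whittle likelihood}; combined with the strict convexity and uniqueness from Lemma~\ref{lma: convexity of objective}, this forces $\widetilde{L} = \widehat{L}$. I expect the main obstacle to be the bookkeeping in the first step: correctly splitting the real-part operator $\mathfrak{R}(D^2 \widetilde{L} P)$ into a ``signal'' piece that cancels ${L^{\ast}}^{-1}$ using the conservation-law constraint and an ``error'' piece, while keeping careful track of which terms involve $W$, which involve $\Delta$, and which involve the Taylor remainder $R(\Delta)$ — the Hermitian/real-part structure is precisely the complication (noted in Section~\ref{subsection: Outline of main analysis}) that does not arise in the purely symmetric treatment of \cite{rayas_learning_2022,ravikumar2011high}. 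The subsequent block-inversion and norm-bounding steps are routine given assumptions \textbf{[A1]} and the definitions of the three controlling quantities.
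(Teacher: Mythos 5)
Your proposal follows essentially the same route as the paper's proof: you start from the zero sub-gradient condition of the restricted problem, use the conservation-law identity to collapse $\mathfrak{R}(D^2 L^{\ast}\Theta_Y^{-1})$ to ${L^{\ast}}^{-1}$, vectorize to obtain the linearized equation in $\Gamma^{\ast}$, block-partition over $E$ and $E^c$, invert $\Gamma^{\ast}_{EE}$, and invoke \textbf{[A1]} together with sub-multiplicative $\ell_\infty$-bounds on the three error pieces (the $W$-term, the $\Delta\Theta_Y^{-1}$ cross-term, and $R(\Delta)$), which are exactly the quantities controlled in the hypothesis, arriving at $\|\widetilde{Z}_{E^c}\|_\infty \le 1-\alpha+\alpha/2 < 1$ just as the paper does. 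Your concluding appeal to the uniqueness in Lemma~\ref{lma: convexity of objective} to identify $\widetilde{L}=\widehat{L}$ is also the paper's argument, so the proposal is correct and matches the paper's proof in all essentials (modulo minor constant bookkeeping, which is at least as careful as the paper's own).
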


\emph{\textbf{Proof sketch:}} Express the sub-gradient condition in Lemma \ref{lma: convexity of objective} in a vectorized form as a function of $R(\Delta)$, $W = P-\Theta_{Y}^{-1}$, and $\Theta^{-1}_{Y}$. We decompose the vectorized sub-gradient condition into two linear equations corresponding to the edge set $E$ and its complement $E^{c}$. An expression for $\widetilde{Z}_{E^{c}}$ is obtained as a function of $R(\Delta), W$ and $\Theta_{Y}^{-1}$. We finish the proof by utilizing the mutual incoherence condition stated in $\textbf{[A1]}$.

\smallskip 
The following results provide us with dimension and model complexity dependent bounds on the remainder term $R(\Delta)$.  The proof, adapted from \cite[lemma 5]{ravikumar2011high}, relies on matrix expansion techniques; see Appendix \ref{app: control remainder} for details.

\begin{lemma}\label{lma: control remainder}
    Suppose that the $\ell_{\infty}$-norm $\|\Delta\|_{\infty}\leq {1}/{(3\nu_{{L^{\ast}}^{-1}}d)}$, then $\|R(\Delta)\|_{\infty}\leq \frac{3}{2}d\|\Delta\|_{\infty}^{2}\nu_{{L^{\ast}}^{-1}}^{3}$.
\end{lemma}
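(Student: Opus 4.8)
The plan is to bound $\|R(\Delta)\|_\infty$ by expressing $R(\Delta)$ as a convergent matrix Neumann series and then controlling its entries via the $\ellinf$-operator norm $\nu_{{L^{\ast}}^{-1}}$ together with the bounded-degree structure of $\Delta$. Recall from the construction preceding the lemma that
\begin{align*}
R(\Delta) = \widetilde{L}^{-1} - {L^{\ast}}^{-1} - {L^{\ast}}^{-1}\Delta {L^{\ast}}^{-1},
\end{align*}
where $\Delta = \widetilde{L} - L^{\ast}$, so $\widetilde{L} = L^{\ast} + \Delta = L^{\ast}(I + {L^{\ast}}^{-1}\Delta)$. First I would write $J \triangleq {L^{\ast}}^{-1}\Delta$ and invert: $\widetilde{L}^{-1} = (I+J)^{-1}{L^{\ast}}^{-1} = \sum_{k=0}^\infty (-J)^k {L^{\ast}}^{-1}$, provided the series converges. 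Subtracting the zeroth and first order terms (which are exactly ${L^{\ast}}^{-1}$ and $-{L^{\ast}}^{-1}\Delta {L^{\ast}}^{-1} = -J{L^{\ast}}^{-1}$) leaves $R(\Delta) = \sum_{k=2}^\infty (-J)^k {L^{\ast}}^{-1} = J^2 \big(\sum_{k=0}^\infty (-J)^k\big){L^{\ast}}^{-1} = J^2 (I+J)^{-1}{L^{\ast}}^{-1} = J^2 \widetilde{L}^{-1}$.

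The next step is to control the relevant norms. The key observation (used already in Lemma~\ref{lma: control remainder}'s hypothesis) is that although $\Delta$ need not be low-rank, it is supported on the augmented edge set $E$, so each row of $\Delta$ has at most $d+1$ nonzeros; hence $\nu_\Delta = \vertiii{\Delta}_\infty \le (d+1)\|\Delta\|_\infty$, or more conveniently one uses the bound $\vertiii{\Delta}_\infty \le d\|\Delta\|_\infty$ in the regime of interest (absorbing the $+1$ appropriately, as is standard in \cite{ravikumar2011high}). Then $\vertiii{J}_\infty \le \nu_{{L^{\ast}}^{-1}}\vertiii{\Delta}_\infty \le \nu_{{L^{\ast}}^{-1}} d \|\Delta\|_\infty \le 1/3$ under the stated hypothesis $\|\Delta\|_\infty \le 1/(3\nu_{{L^{\ast}}^{-1}}d)$. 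This guarantees convergence of the Neumann series and gives $\vertiii{(I+J)^{-1}}_\infty \le \sum_{k\ge 0}\vertiii{J}_\infty^k = 1/(1-\vertiii{J}_\infty) \le 3/2$. I would then bound the $\ellinf$-\emph{element-wise} norm of $R(\Delta) = J^2(I+J)^{-1}{L^{\ast}}^{-1}$ by passing through $\ellinf$-operator norms: for any matrices, $\|AB\|_\infty \le \vertiii{A}_\infty \|B\|_\infty$ (row-sum times max-entry) — actually the cleanest route is $\|R(\Delta)\|_\infty \le \vertiii{J}_\infty \cdot \|J (I+J)^{-1}{L^{\ast}}^{-1}\|_\infty$, and iterating, $\|R(\Delta)\|_\infty \le \vertiii{J}_\infty^2 \cdot \vertiii{(I+J)^{-1}}_\infty \cdot \|{L^{\ast}}^{-1}\|_\infty$... but that introduces $\|{L^{\ast}}^{-1}\|_\infty$ rather than a power of $\nu_{{L^{\ast}}^{-1}}$. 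Instead I would peel off factors as $\|J^2(I+J)^{-1}{L^{\ast}}^{-1}\|_\infty \le \vertiii{{L^{\ast}}^{-1}\Delta}_\infty \cdot \vertiii{{L^{\ast}}^{-1}\Delta}_\infty \cdot \vertiii{(I+J)^{-1}}_\infty \cdot \|{L^{\ast}}^{-1}\|_\infty$ and use $\|{L^{\ast}}^{-1}\|_\infty \le \nu_{{L^{\ast}}^{-1}}$, $\vertiii{{L^{\ast}}^{-1}\Delta}_\infty \le \nu_{{L^{\ast}}^{-1}} d \|\Delta\|_\infty$, yielding $\|R(\Delta)\|_\infty \le (\nu_{{L^{\ast}}^{-1}} d\|\Delta\|_\infty)^2 \cdot \tfrac{3}{2} \cdot \nu_{{L^{\ast}}^{-1}} = \tfrac{3}{2} d^2 \|\Delta\|_\infty^2 \nu_{{L^{\ast}}^{-1}}^3 \cdot d^{0}$ — giving a $d^2$ rather than the claimed $d$. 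Reconciling this with the stated bound $\tfrac{3}{2}d\|\Delta\|_\infty^2\nu_{{L^{\ast}}^{-1}}^3$ is the crux: one must use the sparsity of $\Delta$ more carefully, namely that in the product $J^2 = {L^{\ast}}^{-1}\Delta{L^{\ast}}^{-1}\Delta$ only \emph{one} of the two $\Delta$ factors should contribute a factor of $d$ to the row-sum while the other, after multiplication by the max-norm bound, contributes only $\|\Delta\|_\infty$; concretely $\vertiii{J^2}_\infty = \vertiii{{L^{\ast}}^{-1}\Delta {L^{\ast}}^{-1}\Delta}_\infty$ should be bounded by $\nu_{{L^{\ast}}^{-1}}^2 \vertiii{\Delta}_\infty \cdot d\|\Delta\|_\infty$ using that the inner $\Delta$ has at most $d$ nonzeros per row, not by $\nu_{{L^{\ast}}^{-1}}^2\vertiii{\Delta}_\infty^2$.

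I would therefore structure the final step as: (i) establish convergence and the factor $3/2$ as above; (ii) prove the key inequality $\|{L^{\ast}}^{-1}\Delta {L^{\ast}}^{-1}\Delta M\|_\infty \le d\|\Delta\|_\infty \cdot \nu_{{L^{\ast}}^{-1}} \cdot \|\Delta\|_\infty \cdot \nu_{{L^{\ast}}^{-1}}^2 \cdot (\text{row-sum of } M)$ carefully, tracking which factor picks up the combinatorial degree $d$ (from summing over the support of a row of $\Delta$) versus which picks up merely $\|\Delta\|_\infty$ (a single entry bound) — this bookkeeping, mirroring \cite[Lemma~5]{ravikumar2011high}, is where the asymmetry $d^2 \rightsquigarrow d$ arises and is the main obstacle; and (iii) assemble everything with $M = (I+J)^{-1}{L^{\ast}}^{-1}$, using $\vertiii{(I+J)^{-1}}_\infty\le 3/2$ and $\nu_{{L^{\ast}}^{-1}}$ for the trailing ${L^{\ast}}^{-1}$, to conclude $\|R(\Delta)\|_\infty \le \tfrac{3}{2} d \|\Delta\|_\infty^2 \nu_{{L^{\ast}}^{-1}}^3$. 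The only subtlety beyond \cite{ravikumar2011high} is that here $L^\ast$ (hence $\Delta$) plays the role their precision matrix did, and $R(\Delta)$ comes from expanding $\widetilde L^{-1}$ rather than a more complicated gradient, so the adaptation is essentially notational once the degree-counting step is set up correctly.
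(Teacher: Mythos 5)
Your skeleton --- the Neumann expansion $\widetilde{L}^{-1}=\sum_{k\geq 0}(-J)^{k}{L^{\ast}}^{-1}$ with $J={L^{\ast}}^{-1}\Delta$, the identity $R(\Delta)=J^{2}(I+J)^{-1}{L^{\ast}}^{-1}$, convergence from $\vertiii{J}_{\infty}\leq \nu_{{L^{\ast}}^{-1}}d\|\Delta\|_{\infty}\leq 1/3$, and the factor $3/2$ --- is exactly the matrix-expansion route of Lemma 5 in \cite{ravikumar2011high}, which is all the paper itself invokes (it omits the details and defers to that reference). But the one step you leave open is the step that actually yields the stated bound, and the concrete fix you propose for it is incorrect. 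The claim that $\vertiii{J^{2}}_{\infty}=\vertiii{{L^{\ast}}^{-1}\Delta{L^{\ast}}^{-1}\Delta}_{\infty}$ can be bounded with only one factor of $d$ is false: take ${L^{\ast}}^{-1}=I$ and $\Delta$ equal to $\|\Delta\|_{\infty}$ times a symmetric $d\times d$ all-ones block padded with zeros; then every row of $\Delta$ has $d$ nonzeros but $\vertiii{J^{2}}_{\infty}=d^{2}\|\Delta\|_{\infty}^{2}$. So the $d^{2}\rightsquigarrow d$ saving cannot be harvested at the level of the operator norm of $J^{2}$; it arises only because the quantity being bounded is the \emph{element-wise} norm of the full product. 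In addition, your ``key inequality'' in step (ii) carries $\nu_{{L^{\ast}}^{-1}}\cdot\nu_{{L^{\ast}}^{-1}}^{2}$ for the two explicit ${L^{\ast}}^{-1}$ factors and step (iii) then multiplies by another $\nu_{{L^{\ast}}^{-1}}$ for the trailing one, which as written produces $\nu_{{L^{\ast}}^{-1}}^{4}$ rather than the claimed $\nu_{{L^{\ast}}^{-1}}^{3}$.

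The correct bookkeeping (this is precisely what the cited lemma does) is: use the mixed inequality $\|AB\|_{\infty}\leq \|A\|_{\infty}\vertiii{B}_{1}$ with $A={L^{\ast}}^{-1}\Delta$ and $B={L^{\ast}}^{-1}\Delta(I+J)^{-1}{L^{\ast}}^{-1}$, where $\vertiii{\cdot}_{1}$ is the maximum column sum. The lone factor is bounded entrywise, $\|{L^{\ast}}^{-1}\Delta\|_{\infty}\leq \nu_{{L^{\ast}}^{-1}}\|\Delta\|_{\infty}$ (no $d$ appears, since each entry is a row of ${L^{\ast}}^{-1}$ against a single column of $\Delta$), while $\vertiii{B}_{1}\leq \vertiii{{L^{\ast}}^{-1}}_{1}\vertiii{\Delta}_{1}\vertiii{(I+J)^{-1}}_{1}\vertiii{{L^{\ast}}^{-1}}_{1}$; by symmetry of ${L^{\ast}}^{-1}$ and $\Delta$ one has $\vertiii{{L^{\ast}}^{-1}}_{1}=\nu_{{L^{\ast}}^{-1}}$, $\vertiii{\Delta}_{1}=\vertiii{\Delta}_{\infty}\leq d\|\Delta\|_{\infty}$, and $\vertiii{(I+J)^{-1}}_{1}=\vertiii{(I+\Delta{L^{\ast}}^{-1})^{-1}}_{\infty}\leq (1-\vertiii{\Delta{L^{\ast}}^{-1}}_{\infty})^{-1}\leq 3/2$ under the hypothesis. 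Multiplying gives $\|R(\Delta)\|_{\infty}\leq \nu_{{L^{\ast}}^{-1}}\|\Delta\|_{\infty}\cdot \nu_{{L^{\ast}}^{-1}}\cdot d\|\Delta\|_{\infty}\cdot \tfrac{3}{2}\cdot \nu_{{L^{\ast}}^{-1}}=\tfrac{3}{2}d\|\Delta\|_{\infty}^{2}\nu_{{L^{\ast}}^{-1}}^{3}$, with exactly one $d$ and exactly three powers of $\nu_{{L^{\ast}}^{-1}}$. With this substitution your steps (i) and (iii) go through verbatim; without it, the argument as written either stalls at $d^{2}$ or rests on an operator-norm claim that is false.
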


 The result below provides a sufficient condition under which the $\ellinf$-bound on $\Delta$ in Lemma \ref{lma: control remainder} holds. Full proof in Appendix \ref{app: control distortion}. 

\begin{lemma}\label{lma: Control of distortion} Define $ r\!\triangleq\! 8\nu_{{\Gamma^{\ast}}^{-1}}[\nu_{D^{2}}\nu_{L^{\ast}}\|W\|_{\infty}\!+\!\lambda_{n}/4]$ and suppose $r\leq\min\{{1}/{(3\nu_{{L^{\ast}}^{-1}}d)},{1}/{(6\nu_{{\Gamma^{\ast}}^{-1}}\nu_{{L^{\ast}}^{-1}}^{3}d)}\}$.  Then we have the element-wise $\ell_{\infty}$-bound: $\|\Delta\|_{\infty} = \|\widetilde{L}-L^{\ast}\|_{\infty}\leq r$.  

\end{lemma}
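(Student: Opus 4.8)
The plan is to prove the $\ell_{\infty}$-bound by a Brouwer fixed-point argument applied to the zero-subgradient system of the restricted problem \eqref{eq: restricted l_1 whittle likelihood}, following the primal--dual witness template of \cite{ravikumar2011high} but adapted to the quadratic-in-$L$, frequency-domain objective here. Writing $\widetilde{L} = L^{\ast} + \Delta$ with $\Delta$ symmetric and supported on $E$, I would substitute $P = \Theta_{Y}^{-1} + W$ and the expansion $\widetilde{L}^{-1} = {L^{\ast}}^{-1} - {L^{\ast}}^{-1}\Delta{L^{\ast}}^{-1} + R(\Delta)$ into the stationarity condition $[2\Psi_{1}\widetilde{L}P_{1} - 2\Psi_{2}\widetilde{L}P_{2} - 2\widetilde{L}^{-1} + \lambda_{n}\widetilde{Z}]_{E} = 0$ from Lemma \ref{lma: convexity of objective}, and use the identity $\mathfrak{R}(D^{2}L^{\ast}\Theta_{Y}^{-1}) = {L^{\ast}}^{-1}$ (which holds since $\Theta_{Y}^{-1} = {L^{\ast}}^{-1}\Theta_{X}^{-1}{L^{\ast}}^{-1}$ and $D^{2} = \Theta_{X}$). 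The zeroth-order terms then cancel, $2\Psi_{1}\widetilde{L}P_{1} - 2\Psi_{2}\widetilde{L}P_{2} = 2\mathfrak{R}(D^{2}\widetilde{L}P)$, and after vectorizing over $E$ the condition rearranges to
\[
\Gamma^{\ast}_{EE}\,\mvec(\Delta)_{E} = \mvec\!\left(-\mathfrak{R}(D^{2}L^{\ast}W) - \mathfrak{R}(D^{2}\Delta\Theta_{Y}^{-1}) - \mathfrak{R}(D^{2}\Delta W) + R(\Delta) - \tfrac{\lambda_{n}}{2}\widetilde{Z}\right)_{E},
\]
with $\Gamma^{\ast} = {L^{\ast}}^{-1}\otimes{L^{\ast}}^{-1}$ as in \textbf{[A1]}. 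The key structural choice is to leave the first-order term $\mathfrak{R}(D^{2}\Delta\Theta_{Y}^{-1})$ coming from the quadratic part of the objective on the right-hand side as a perturbation, rather than folding it into the Hessian; this preserves the clean Kronecker form of $\Gamma^{\ast}$ and lets us invoke \textbf{[A1]} and \textbf{[A3]}.

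Next I would define $F\colon \mathbb{R}^{|E|} \to \mathbb{R}^{|E|}$ by $F(v) = (\Gamma^{\ast}_{EE})^{-1}\mvec(-\mathfrak{R}(D^{2}L^{\ast}W) - \mathfrak{R}(D^{2}M(v)\Theta_{Y}^{-1}) - \mathfrak{R}(D^{2}M(v)W) + R(M(v)) - \tfrac{\lambda_{n}}{2}\bar{Z})_{E}$, where $M(v)$ is the symmetric $E$-supported matrix whose entries are $v$ and $\bar{Z}$ is the pinned sign pattern ($\bar{Z}_{ii}=0$, $\bar{Z}_{ij}=\text{sign}(L^{\ast}_{ij})$ on the off-diagonal support). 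A fixed point $v$ of $F$ yields $\Delta = M(v)$ such that $L^{\ast}+\Delta$ satisfies the restricted stationarity condition with subgradient $\bar{Z}$. The heart of the proof is the claim that $F$ maps $\mathbb{B}(r) = \{v:\|v\|_{\infty}\le r\}$ into itself. Using $\vertiii{(\Gamma^{\ast}_{EE})^{-1}}_{\infty}\le \nu_{{\Gamma^{\ast}}^{-1}}$, the bound $\|AB\|_{\infty}\le\nu_{A}\|B\|_{\infty}$, and the fact that each row of $M(v)$ has at most $d+1$ nonzeros, the five contributions are bounded as: (i) $\nu_{{\Gamma^{\ast}}^{-1}}\|\mathfrak{R}(D^{2}L^{\ast}W)\|_{\infty}\le \nu_{{\Gamma^{\ast}}^{-1}}\nu_{D^{2}}\nu_{L^{\ast}}\|W\|_{\infty}\le r/8$ and (ii) $\tfrac{\lambda_{n}}{2}\nu_{{\Gamma^{\ast}}^{-1}}\le r/4$, both immediate from the definition of $r$; (iii) $\nu_{{\Gamma^{\ast}}^{-1}}\|\mathfrak{R}(D^{2}M(v)W)\|_{\infty}\le 2d\,\nu_{{\Gamma^{\ast}}^{-1}}\nu_{D^{2}}\|W\|_{\infty}\,r\le r/12$, because combining $r\le 1/(3\nu_{{L^{\ast}}^{-1}}d)$ with the definition of $r$ (and $\nu_{L^{\ast}}\nu_{{L^{\ast}}^{-1}}\ge 1$) forces $\nu_{{\Gamma^{\ast}}^{-1}}\nu_{D^{2}}\|W\|_{\infty}\le 1/(24d)$; (iv) $\nu_{{\Gamma^{\ast}}^{-1}}\|\mathfrak{R}(D^{2}M(v)\Theta_{Y}^{-1})\|_{\infty}\le 2d\,\nu_{{\Gamma^{\ast}}^{-1}}\nu_{D^{2}}\|\Theta_{Y}^{-1}\|_{\infty}\,r\le r/4$ by the condition-number bound \textbf{[A3]}; and (v) $\nu_{{\Gamma^{\ast}}^{-1}}\|R(M(v))\|_{\infty}\le \tfrac{3}{2}d\,\nu_{{L^{\ast}}^{-1}}^{3}\nu_{{\Gamma^{\ast}}^{-1}}\,r^{2}$ by Lemma \ref{lma: control remainder} (applicable since $\|M(v)\|_{\infty}\le r\le 1/(3\nu_{{L^{\ast}}^{-1}}d)$), which is $\le r/4$ precisely because $r\le 1/(6\nu_{{\Gamma^{\ast}}^{-1}}\nu_{{L^{\ast}}^{-1}}^{3}d)$. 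Summing gives $\|F(v)\|_{\infty}\le r$.

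Since $\mathbb{B}(r)$ is compact and convex and $F$ is continuous on it (with the sign vector pinned to $\bar{Z}$, the remainder map and the linear maps are continuous), Brouwer's theorem yields a fixed point $v^{\ast}\in\mathbb{B}(r)$. The radius bound $r\le 1/(3\nu_{{L^{\ast}}^{-1}}d)$ keeps $\bar{L}:=L^{\ast}+M(v^{\ast})$ strictly positive definite (via $\|M(v^{\ast})\|_{2}\le\vertiii{M(v^{\ast})}_{\infty}<\lambda_{\min}(L^{\ast})$), so $\bar{L}$ solves the restricted KKT system, and by strict convexity and coercivity of the restricted objective (Lemma \ref{lma: convexity of objective}) its minimizer is unique, hence $\bar{L}=\widetilde{L}$. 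Therefore $\|\Delta\|_{\infty}=\|M(v^{\ast})\|_{\infty}\le r$.

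I expect the main obstacle, relative to the i.i.d.\ log-determinant analyses this mimics, to be handling the extra first-order term $\mathfrak{R}(D^{2}\Delta\Theta_{Y}^{-1})$ (and $\mathfrak{R}(D^{2}\Delta W)$) produced by the quadratic dependence on $L$: one must verify that treating it as a perturbation—rather than inverting $\Gamma^{\ast}_{EE}$ together with the quadratic-part Hessian, which would destroy the Kronecker structure that \textbf{[A1]} relies on—still closes the self-map inequality, and this is exactly what assumption \textbf{[A3]} and the first constraint on $r$ are calibrated to ensure. A related nuisance is propagating the Hermitian real/imaginary-part splitting $\mathfrak{R}(D^{2}(\cdot)P)=\Psi_{1}(\cdot)P_{1}-\Psi_{2}(\cdot)P_{2}$ cleanly through all the $\|\cdot\|_{\infty}$ estimates. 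Finally, a minor technical point: pinning the subgradient to $\bar{Z}=\text{sign}(L^{\ast})$ on the off-diagonal support makes $F$ single-valued and continuous, and the resulting fixed point genuinely solves the restricted KKT system provided the signs of $\widetilde{L}$ and $L^{\ast}$ agree there—which holds once $r<|L^{\ast}_{\min}|$, the regime of the main theorems.
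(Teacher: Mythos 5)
Your overall architecture is the paper's: vectorize the restricted stationarity condition, isolate $\Gamma^{\ast}_{EE}$, treat the quadratic-part contributions $\mathfrak{R}(D^{2}\Delta\Theta_{Y}^{-1})$ and $\mathfrak{R}(D^{2}\Delta W)$ as perturbations rather than folding them into the Hessian, control the remainder via Lemma \ref{lma: control remainder}, and close a Brouwer self-map on the ball of radius $r$ using \textbf{[A3]} and the definition of $r$; your term-by-term estimates essentially reproduce the paper's bounds $T_{1}$--$T_{4}$. The genuine gap is in how you pin the subgradient. By fixing $\bar{Z}=\mathrm{sign}(L^{\ast})$ on the support, your Brouwer fixed point $\bar{L}=L^{\ast}+M(v^{\ast})$ solves a \emph{pinned-sign} stationarity system, and to identify $\bar{L}$ with the restricted solution $\widetilde{L}$ you must verify $\bar{Z}\in\partial\|\bar{L}\|_{1,\text{off}}$, i.e.\ sign agreement of $\bar{L}$ with $L^{\ast}$ on $E$. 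As you concede, this forces the extra hypothesis $r\le |L^{\ast}_{\min}|$, which is not among the lemma's assumptions: the lemma is a deterministic $\ell_{\infty}$ bound that feeds part (b) of Theorems \ref{thm: Gaussian time series}--\ref{thm: Linear Process}, where no beta-min condition is imposed (beta-min enters only for sign consistency in part (c)). So as written your argument proves a strictly weaker statement than the lemma.

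The paper's construction avoids this entirely: it builds the map from the stationarity function $G$ evaluated with the \emph{actual} optimal dual $\widetilde{Z}$ of the restricted problem \eqref{eq: restricted l_1 whittle likelihood} (only $\|\widetilde{Z}\|_{\infty}\le 1$ is used in the estimates), sets $F(\bar{\Delta}_{E})=\bar{\Delta}_{E}-(\Gamma^{\ast}_{EE})^{-1}[\bar{G}(\Delta+L^{\ast})]_{E}$, and notes that with $\widetilde{Z}$ fixed the zero of $G$ on $E$ is unique (it is the stationarity condition of the strictly convex objective with the penalty linearized at $\widetilde{Z}$), so the true $\Delta_{E}$ is the unique fixed point of $F$; Brouwer then places it in $\mathbb{B}_{r}$ with no condition on $|L^{\ast}_{\min}|$. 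Your proof is repaired by the same substitution, and your self-map bounds carry over since they only use $\|\bar{Z}\|_{\infty}\le 1$ (note also that \textbf{[A1]} is never needed in this lemma). One further caution on constants: your bound (i) of $r/8$ is legitimate only via the modulus route $\|\mathfrak{R}(D^{2}L^{\ast}W)\|_{\infty}\le\|D^{2}L^{\ast}W\|_{\infty}\le\nu_{D^{2}}\nu_{L^{\ast}}\|W\|_{\infty}$; if you instead propagate the real/imaginary split $\Psi_{1}L^{\ast}W_{1}-\Psi_{2}L^{\ast}W_{2}$ with the triangle inequality (as you do in (iii)--(iv) and as the paper does), term (i) doubles to $r/4$ and your total $23r/24$ becomes $26r/24>r$, so the budget no longer closes without rebalancing.
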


\emph{\textbf{Proof sketch:}} Since $\widetilde{L}_{E^{c}} = L^{\ast}_{E^{c}} = 0$, we note $\|\Delta\|_{\infty} = \|\Delta_{E}\|_{\infty}$, where $\Delta_{E} = \widetilde{L}_{E} - L^{\ast}_{E}$ and it is the solution of the sub-gradient associated with the restricted problem in \eqref{eq: restricted l_1 whittle likelihood}. We construct a continuous function $F: \mathbb{R}^{|E|}\rightarrow \mathbb{R}^{|E|}$ with two properties: (i) it has a unique fixed point $\Delta_{E}$ and (ii) On invoking assumption $\textbf{[A3]}$, $F$ is a contraction---specifically, $F(\mathbb{B}_{r})\subseteq \mathbb{B}_{r}$, where $\mathcal{B}_{r} = \{A\in \mathbb{R^{|E|}}: \|A\|_{\infty}\leq r\}$ and $r$. The proof follows by invoking Brower's fixed point theorem \cite{kellogg} and exploiting the unique fixed point property of $F$ to show that $\Delta_{E}\in \mathcal{B}_{r}$, and hence, $\|\Delta\|_{\infty}\leq r$. 
\begin{remark}\label{remark: Usefulness of bounded hessian}
    A consequence of assumption \textbf{[A3]} is the lower bound on the norm of the Hessian $\vertiii{\Gamma^{\ast}}_{\infty}$.  This implies that the curvature at the true minimum $L^{\ast}$ is lower bounded. This bound on the curvature is specific to our problem and helps in attaining control on the distortion parameter $\Delta$, as demonstrated in Lemma \ref{lma: Control of distortion}.
\end{remark}

\section{Simulations}\label{section: Experiments}
 We report the results of multiple simulations to validate our theoretical claims. The results in Theorems \ref{thm: Gaussian time series} and \ref{thm: Linear Process} involve several constants, along with the dimensional parameters $(n, m, d, p)$. Therefore, we do not expect the theoretical results to capture the nuanced behavior of the simulations in every detail. However, we observe that the learning performance of the estimator in \eqref{eq: l_1 regularized Whittle likelihood} improves as the rescaled sample size $n/({d^3 \log(p)})$ increases, and that the error norm decreases with increasing $n/\log p$. Additionally, the experimental results are also influenced by the choice of the regularization $\lambda_{n}$. 
We ran the experiments using CVXPY 1.2, an open-source Python package. The reproducible code for generating simulation results in this paper is publicly available at \url{https://tinyurl.com/LNSWSSP}.

\subsection{Setup and accuracy evaluation metrics}\label{subsec: Setup and accuracy evaluation metrics}

Our experiments assess the finite-sample performance of the proposed estimator 
for two families of stochastic injections $\{X_{t}\}_{t\in \mathbb{Z}}$, namely, vector autoregressive (VAR (1)) and vector autoregressive moving average (VARMA (2,2)) processes. These processes not only satisfy our technical assumptions but are also widely used for empirical studies. 

\smallskip 
\textit{(i) VAR(1) process}: Here the injections $\{X_{t}\}_{t\in \mathbb{Z}}$ satisfy $X_{t} = AX_{t-1} + \epsilon_{t}$ where $\epsilon_t \overset{\text{i.i.d.}}{\sim} \mathcal{N}(0,1)$ and $A = 0.7 I_{p}$. The PSD matrix of this process, for $z=e^{-i\omega}$ and $\omega \in [-\pi, \pi]$, is 
\begin{align*}
    f_{X}(\omega) = \frac{1}{2\pi} \left(I_{p} - A z\right) \left(I_{p} - A z^{-1}\right)^{-1} . 
\end{align*}

\smallskip 
\textit{(ii) VARMA(2,2) process}: We let
$X_t = A_{1}X_{t-1} + A_{2}X_{t-2} + \epsilon_{t} + B_{1}\epsilon_{t-1} + B_{2}\epsilon_{t-2}$ where $\epsilon_t \overset{\text{i.i.d.}}{\sim} \mathcal{N}(0,1)$. The PSD matrix of this process, for $z=e^{-i\omega}$ and with $\omega \in [-\pi, \pi]$, is \cite{brockwell2009time}
\begin{align} \label{eq: Noise PSD}
\hspace{-2.5mm}f_{X}(\omega) = \frac{1}{2\pi} \mathcal{A}(z)\mathcal{B}(z)\mathcal{B}^{\dagger}(z)(\mathcal{A}^{-1}(z^{-1}))^{\dagger},
\end{align}
where $\mathcal{A}(z) = I_{p}-\sum_{t=1}^{2}A_{t}z^t$ and $\mathcal{B}(z) = I_{p}-\sum_{t=1}^{2}B_{t}z^t$. We set \(A_{1} = 0.4 I_{p}\) and \(A_{2} = 0.2 I_{p}\). Furthermore, \(B_{1} = 1.5(I_{5} + J_{5})\) and \(B_{2} = 0.75(I_{5} + J_{5})\), where $J_{k} \in \mathbb{R}^{k \times k}$ is the matrix of all ones.

For the above processes, we assume that the nodal observation data $
\{Y_{t}\}_{t\in \mathbb{Z}}$ satisfy $Y_{t} = {L^{\ast}}^{-1} X_{t}$, where we consider $L^{\ast}$ for synthetic, benchmark, and real-world networks (discussed later). The periodogram of $
\{Y_{t}\}_{t=1}^n$ at frequency $\omega_{j}$ is then computed as $P(\omega_j) = \frac{1}{2\pi(2m+1)} \sum_{|k|\leq m} d(\omega_{j+k}) d(\omega_{j+k})^{\dagger}$.
For simplicity, we set the centering frequency as $\omega_{j}=0$. However, our numerical and theoretical analysis applies to any non-zero Fourier frequency. Further, the bandwidth parameter $m = \sqrt{n}$, which is theoretically justified because we consider the regime $m/n \to 0$ as $(m,n) \to \infty$ where the periodogram is asymptotically unbiased (see Remark \ref{rem: control on bias of periodogram} and \cite{sun2018large}).

We consider sparsistency (the ability to recover the correct edge structure) and norm-consistency (the Frobenius norm of the deviation between $\widehat{L}$ and $L^{\ast}$) metrics to evaluate the estimation performance. We assess sparsistency via the F-score: $\text{F-score} = {2\text{TP}}/({2\text{TP} + \text{FP} + \text{FN})} \in [0,1]$,
where $\text{TP}$ (true positives) is the number of correctly detected edges, $\text{FP}$ (false positives) is the number of non-existent edges detected, and $\text{FN}$ (false negatives) is the number of actual edges not detected. The higher the $\text{F-score}$, the better the performance of the estimator in learning the true structure, with $\text{F-score}=1$ signifying perfect structure recovery. 

\subsection{Synthetic networks}
\begin{figure*}[!t]
    \centering
    \includegraphics[width=\textwidth]{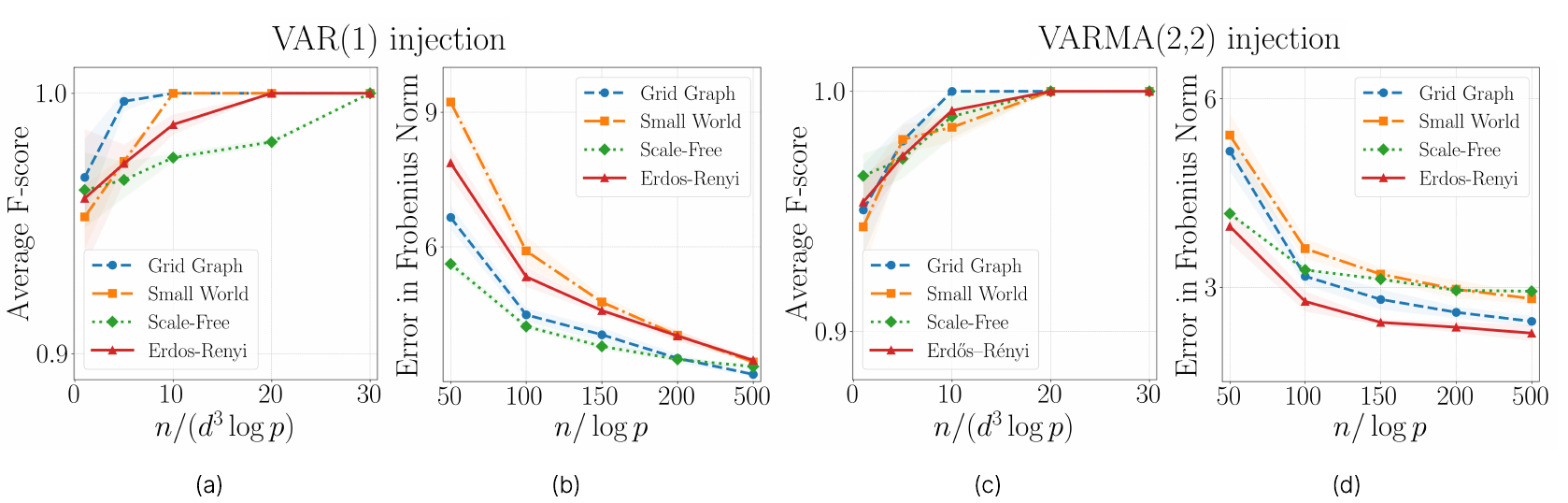}
    \caption{\small We evaluate the support recovery metric (F-score) and the Frobenius norm error for synthetic random networks under VAR(1) and VARMA(2,2) stochastic injections. Synthetic networks of size $ p=30$ are examined, with results averaged over 50 independent trials. Solid curves represent mean performance, while shaded regions around each curve indicate one-sigma standard deviations. The random networks analyzed include grid, small-world, scale-free, and Erdős-Rényi, with maximum degrees $d=\{4,3,9,4\}$, respectively. Panels (a,b) present the average F-score and Frobenius norm error versus rescaled sample size for VAR(1) injection, while panels (c,d) display the same metrics for VARMA(2,2) injection. The rescaled sample size for the F-score is  $n/(d^3 \log p)$, and for the Frobenius norm error, it is \( n/\log p \), based on asymptotic convergence rates in Theorem \ref{thm: Gaussian time series}. Notably, rescaling the sample size to \( n/(d^3 \log p) \) aligns all curves on top of each other as predicted by Theorem \ref{thm: Gaussian time series}.}
    \label{fig: Synthetic combined}
\end{figure*}
We present simulations evaluating the performance of the proposed estimator on synthetic random networks. All synthetic networks have a fixed size of $p=30$. The random networks examined in Figure~\ref{fig: Synthetic combined} include Erdős-Rényi, Small-World (Watts-Strogatz model), and Scale-Free (Barabási-Albert model) networks, with maximum degrees $d=\{4,3,9\}$, respectively. Additionally, a synthetic grid graph ($d=4$) is constructed by connecting each node to its fourth-nearest neighbor. 

For details on constructing the Laplacian matrix $L^{\ast}$ for the synthetic random networks, we refer the readers to \cite{narasimhan2019learning} and the GitHub repository\footnote{\url{https://github.com/psjayadev/Predicting-Links-Conserved-Networks}}. Once $L^{\ast}$ is obtained, we ensure its positive definiteness by adding a small diagonal perturbation of $0.1$ (positive definiteness by diagonal perturbation follows from the Gershgorin circle theorem). This perturbed matrix is no longer a Laplacian in the strict sense. However, this perturbation is acceptable since our estimation task focuses only on recovering the sparsity pattern of $L^{\ast}$ and not its spectral properties. In Figure~\ref{fig: Synthetic combined}, we plot the average F-score and the average Frobenius norm of the error (averaged over 50 independent trials) versus rescaled sample size under VAR(1) and VARMA(2,2) injections. Panels (a-b) depict these metrics for VAR(1) injection, while panels (c-d) show results for VARMA(2,2). The rescaled sample size is $n/(d^3 \log p)$ for F-score and $n/{\log p}$ for Frobenius norm error, based on asymptotic convergence rates in Theorem~\ref{thm: Gaussian time series}. As shown in panels (a) and (c), the F-score increases with $n/(d^3\log p)$, achieving perfect structure recovery, as predicted by Theorem~\ref{thm: Gaussian time series}. This causes all plots in panels (a) and (c) to align on top of each other. Panels (b) and (d) demonstrate similar behavior for the Frobenius norm error metric, where the error norm decreases with an increase in $n/\log p$.

In Figure~\ref{fig: Synthetic comparison}, we compare F-scores for i.i.d., VAR(1), and VARMA(2,2) injections on an Erdős-Rényi network with size $p=30$ and maximum degree $d=4$. The results indicate that fewer samples are needed to achieve perfect structure recovery (that is, $\text{F-score}=1$) with i.i.d. injections compared to injections of VAR (1) and VARMA (2,2). This trend aligns with theoretical expectations: structure recovery under i.i.d. injections requires $n=\mathcal{O}(d^2\log p)$ samples (see \cite{rayas_learning_2022}), compared to the higher sample complexity of $n=\mathcal{O}(d^3\log p)$ for VAR(1) and VARMA(2,2) (see Theorem~\ref{thm: Gaussian time series}).

\begin{figure}
    \centering
    \includegraphics[width=0.35\textwidth]{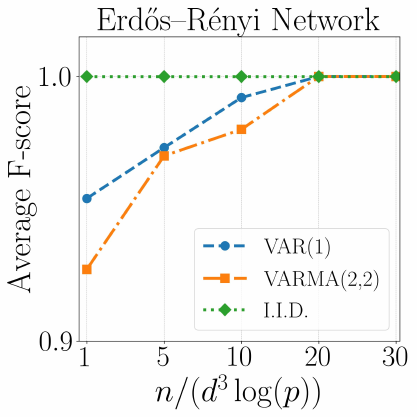}  
    \caption{Average F-score comparison for $\{X_{t}\}_{t \in \mathbb{Z}}$ governed by i.i.d., VAR(1), and VARMA(2,2) processes versus rescaled sample size for an Erdős-Rényi network ($p=30$, $d=4$). Perfect structure recovery under VAR(1) and VARMA(2,2) injections requires more samples than under i.i.d. injections.}
    \label{fig: Synthetic comparison}
\end{figure}

Finally, we comment on obtaining the regularization parameter $\lambda_{n}$ for experiments in Figure~\ref{fig: Synthetic combined} and \ref{fig: Synthetic comparison}. 
We apply the extended Bayesian information criterion (EBIC) \cite{chen2008extended} to select $\lambda_{n}$. The EBIC is given by:
\begin{align}
    \text{EBIC}_{\gamma}(\widehat{L}) = -2\mathcal{L}_{n}(\widehat{L}) + |\widehat{E}|\log n + 4\gamma |\widehat{E}|\log p, 
\end{align}
where $\mathcal{L}_{n}(\widehat{L})$ is the log-likelihood in \eqref{eq: l_1 regularized Whittle likelihood}, $\widehat{E} = E(\widehat{L})$ represents the edge set of the candidate graph $\widehat{L}$, and $\gamma \in [0,1]$ is a tuning parameter that influences the penalization. Higher values of $\gamma$ lead to sparser networks. The optimal regularization parameter is $\lambda_{n} = \argmin_{\lambda > 0} \text{EBIC}_{\gamma}(\widehat{L})$. 

 The results in Figure \ref{fig: Synthetic combined} and Figure \ref{fig: Synthetic comparison} are for $\gamma = 0.4$. In Figure \ref{fig: Tuning regularization}, we fix a sample size $n=1000$ and plot the regularization path for both the F-score and Frobenius norm error across various network types. Notably, we observe that for a class of random networks, and the fixed sample size $n=1000$ the value $\log(\lambda_{n})\approx -2$ simultaneously maximizes both the F-score and minimizes the Frobenius norm error.

\begin{figure}
    \centering
    \includegraphics[width=0.7\textwidth]{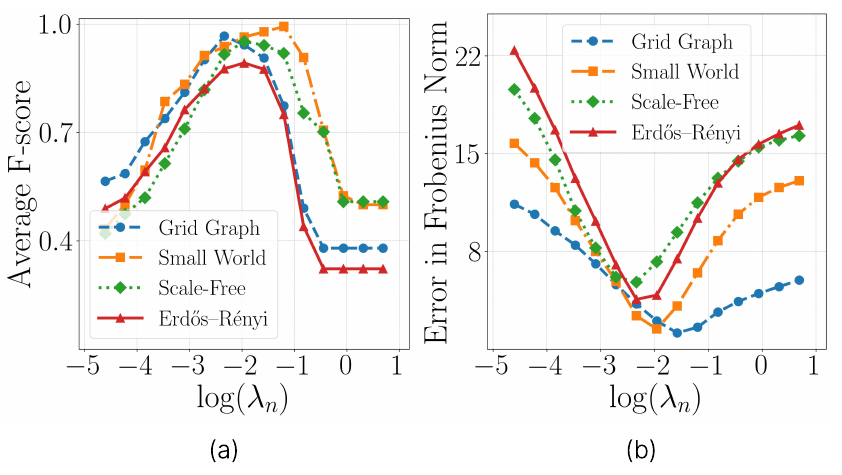} 
    \caption{For a fixed sample size $n=1000$, we plot (a) Regularization path for F-score and (b) regularization path for Frobenius norm error, both on a linear-log scale. All networks have \( p = 30 \) nodes, with maximum degrees as follows: grid (\( d = 4 \)), small-world (\( d = 3 \)), scale-free (\( d = 9 \)), and Erdős–Rényi (\( d = 4 \)). }
    \label{fig: Tuning regularization}
\end{figure}

\subsection{Benchmark networks}
For $\{X_{t}\}_{t \in \mathbb{Z}}$ governed by the VARMA(2,2) process, we evaluate the performance of our estimator on three benchmark networks: the power distribution network, water network, and the brain network. Each network has an associated ground truth matrix $L^{\ast}=A+\epsilon I_{p}$, where $A$ is the adjacency matrix that defines the edge structure of the network, $\epsilon = \{2, 2, 3\}$ for the power, water, and brain networks, respectively, and $I_{p}$ is the $p$-dimensional identity matrix. This diagonal perturbation ensures that $L^{\ast}$ is positive definite while preserving its sparsity pattern and thus does not affect the structure learning objective.

\textit{1) Power distribution network:} We consider the IEEE 33-bus power distribution network whose raw data files are publicly available\footnote{\url{https://www.mathworks.com/matlabcentral/fileexchange/73127-ieee-33-bus-system}}. An adjacency matrix $A$ can be constructed from this dataset. The network corresponding to $A$ consists of 33 buses and 32 branches (edges) with maximum degree $d=3$. 
\smallskip 

\textit{2) Water distribution network:} We examine the Bellingham water distribution network, using data sourced from the database described in \cite{hernadez2016water}. The raw data files are publicly accessible\footnote{\url{https://www.uky.edu/WDST/index.html}}. The ground truth adjacency matrix $A$, containing 121 nodes and 162 edges with maximum degree $d=6$, is generated by loading the raw data files into the WNTR simulator\footnote{\url{https://github.com/USEPA/WNTR}}. Complete details on obtaining the adjacency matrix are provided in \cite{seccamonte2023bilevel}. 

\smallskip 
\textit{3) Brain network:}  The ground truth adjacency matrix $A$ for this study is publicly accessible\footnote{\url{https://osf.io/yw5vf/}}, with the detailed methodology regarding its construction described in \cite{vskoch2022human}. The matrix $A$ is a $90 \times 90$ 
matrix (i.e., $90$ nodes), where each row and column corresponds to a specific region of interest (ROI) in the brain, as defined by the Automated Anatomical Labeling (AAL) atlas. From 88 patient-derived connectivity matrices found in the database, one was selected (filename: S001.csv) for numerical analyses. The selected network consists of $90$ nodes, $141$ edges and maximum degree $d=7$.

Figure~\ref{fig:Comparison} compares the performance of the proposed single-step Whittle likelihood estimator with a two-step baseline method (square root). The matrix $L^{\ast}$ is an IEEE 33-bus power distribution network and $X_{t}$ is a Gaussian VAR(1) stochastic injection with diagonal auto-covariance: $\Phi_X(l) = \rho^{|l|} I$, with $\rho = 0.1$ and $l = \{1, \ldots, n-1\}$. The single-step approach estimates $L^{\ast}$ from samples of $Y_{t}$ as described in earlier experiments.

In contrast, the two-step procedure first estimates the inverse spectral density matrix $\Theta_Y(\omega)$ from samples of $Y_{t}$ and then computes its positive definite square root to estimate $L^{\ast}$. In this experiment, we fix the frequency at $\omega = 0$, where $\Theta_Y(0) = {L^{\ast}}^2 K.I$, where $K$ is some constant and $I$ is the identity matrix. In more general settings where $\Theta_X(\omega)$ is non-diagonal, the baseline would compute $\widehat{L} = \widehat{\Theta}_Y \Theta_X^{-1/2}$.

Panels (a) and (b) show the average F-score and Frobenius norm error, respectively, as functions of sample size $n$, averaged over 50 trials. The single-step estimator recovers the structure with fewer samples and achieves lower error compared to the two-step approach, thereby highlighting its superior performance over the baseline approach. As $\Theta_Y$ has degree $d^2$ (presence of two-hop neighbors) versus $d$ for $L^{\ast}$, Theorem~\ref{thm: Gaussian time series} implies that the two-step method requires $O(d^6 \log p)$ samples as compared to $O(d^3 \log p)$ for the proposed approach.

Figure \ref{fig: Benchmark networks} shows the F-score and element-wise $\ell_{\infty}$-norm of the error versus the rescaled sample size. For benchmark networks with varying sizes $p$ and maximum degrees $d$, there is a sharp increase in the F-score when the sample size is $n/(d^3\log p)\approx 1$, thus validating the sample complexity of $n = \mathcal{O}(d^3 \log p)$ as suggested by Theorem \ref{thm: Gaussian time series}. This sharp increase in F-score is consistent across different benchmark networks with differing size $p$ and maximum degree $d$. Similarly, across the benchmark networks, the element-wise $\ell_{\infty}$-norm of the error decreases sharply at $n/(\log p)\approx 1$.

\begin{figure}
    \centering
    \includegraphics[width=0.7\linewidth]{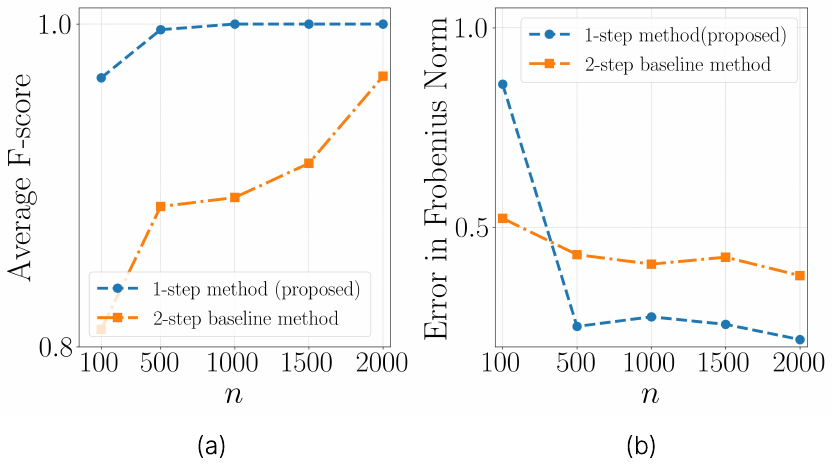} 
    \caption{Performance comparison between the proposed single-step Whittle likelihood estimator and a two-step baseline method on the IEEE 33-bus power distribution network under VAR(1) stochastic injection with diagonal auto-covariance structure $\Phi_{X}(l) = \rho^{|l|}I$ ($\rho = 0.1$). Panel (a) shows the average F-score versus sample size $n$, and panel (b) shows the average Frobenius norm error versus $n$. The single-step estimator achieves perfect structure recovery with fewer samples and exhibits faster error decay compared to the two-step approach, thereby signifying better performance. All results are averaged over $50$ independent trials. }
    \label{fig:Comparison}
\end{figure}

\begin{figure}
    \centering
    \includegraphics[width=0.7\linewidth]{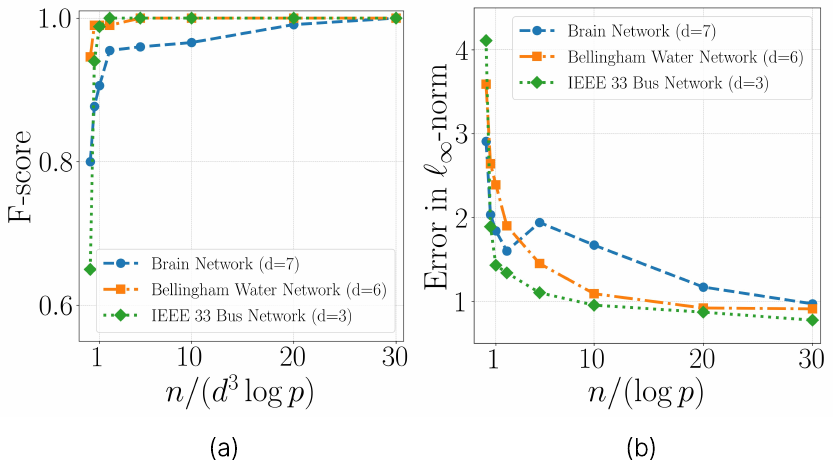} 
    \caption{(a) F-score versus rescaled sample size $(n/({d^3\log p}))$ across different benchmark networks. (b) Element-wise $\ell_{\infty}$-norm of the error versus rescaled sample size $(n/\log p)$ for the same networks. Both panels compare the human brain structural connectivity network (size $p=90$), Bellingham water network ($p=120$), and IEEE 33 bus power distribution network ($p=33$).
}
    \label{fig: Benchmark networks}
\end{figure}

\subsection{Real world brain network} \label{subsec: real world brain network}

We aim to estimate the brain networks for the control and autism groups using fMRI data (obtained under resting-state conditions) from the Autism Brain Imaging Data Exchange (ABIDE) dataset\footnote{\url{https://fcon_1000.projects.nitrc.org/indi/abide/}}. The pre-processed dataset is accessible\footnote{\url{http://preprocessed-connectomes-project.org/abide/}}, we refer to \cite{pruttiakaravanich2020convex} for more details. For each subject, we have access to $249$ samples of time series measurements across 90 anatomical regions of interest (ROIs) that result in a data matrix, $\{Y_t\}_{t=1}^{249} \in \mathbb{R}^{90}$. We collect such measurements for 86 subjects (46 from the autism group and 40 from the control group), from \url{https://github.com/jitkomut/cvxsem}. 

\begin{figure*}[!t]
    \centering
    \includegraphics[width=\textwidth]{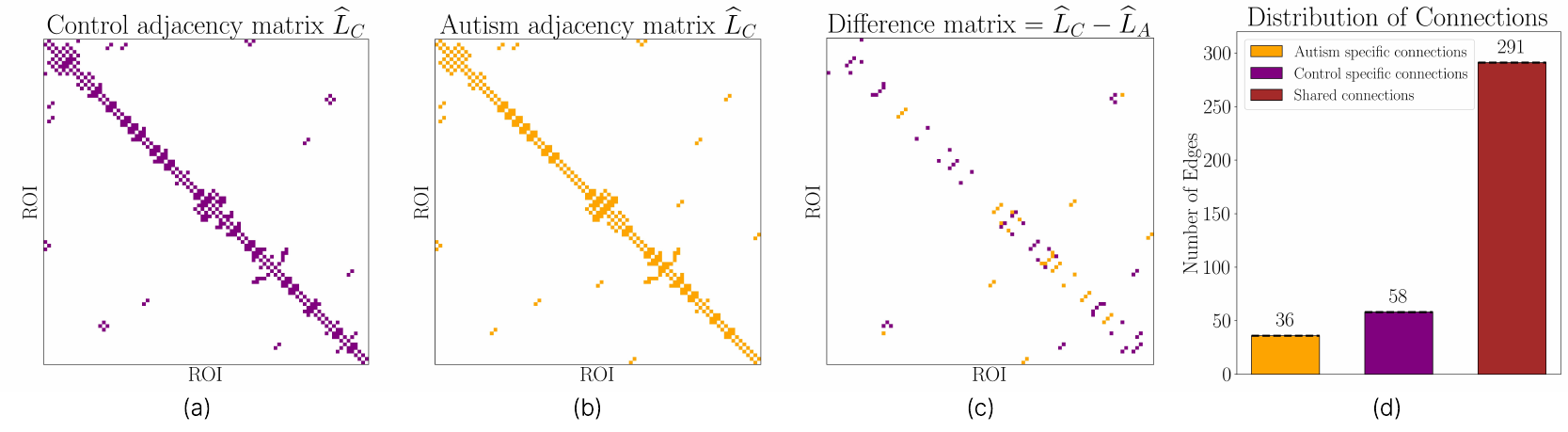}
    \caption{The results here are obtained using a fixed regularization parameter of $\lambda_{n} = 0.23$. Each dot in the heatmaps represents a statistically significant edge, i.e., an edge present in more than $90\%$ of the subjects. Panels (a) and (b) display the heatmaps of the estimated common adjacency matrices for the control group ($\widehat{L}_{C}$) and autism group ($\widehat{L}_{A}$), respectively, while panel (c) illustrates the difference matrix, $\widehat{L}_{C} - \widehat{L}_{A}$. This difference matrix captures both control-specific and autism-specific connections. Panel (d) provides a bar plot representing the distribution of connections, detailing the number of group-specific and shared connections. The bar plot indicates that the control adjacency matrix is denser than that of the autism group.}
    \label{fig: Brain Heatmap}
\end{figure*}

Using this dataset, we estimate a common brain network for each group: one for the control group (among $40$ subjects) and one for the autism group (among $46$ subjects). The common networks are constructed by identifying the \textit{statistically significant edges} (to be defined later) present across subjects in each group. While our goal is to evaluate the common brain network estimates against the ground truth using metrics like the F-score and Frobenius norm, this is not possible since the true network $L^{\ast}$ is unknown for both groups. Instead, we analyze the relative similarities and differences between the estimated common networks for the control and autism groups.

We begin the experiment by modeling the autocovariance matrix of the noise $\{X_t\}$ as $\Phi_X(l) = \rho^{|l|}I_{p}$ with $\rho = 0.1$, $l = \{1, \ldots, 248\}$ and $I_{p}$ is the p-dimensional identity matrix. The noise $\{X_t\}$ is therefore a WSS process. The PSD matrix $f_X(\omega) = D^2$ is computed as the Fourier transform of the autocovariance function $\Phi_{X}(l)$ at $\omega = 0$. Our estimator is then applied with regularization $\lambda_{n} = 0.23$  (tuned via grid search) across all 86 subjects. The common brain networks for each group are then constructed by retaining the statistically significant edges, that is, the edges that appear in over 90\% of the subjects.
 
Figure \ref{fig: Brain Heatmap} (a,b) illustrates the sparsity pattern of the estimated common adjacency matrix for the control group ($\widehat{L}_{C}$) and the autism group ($\widehat{L}_{A}$) brain networks. Each colored point in Figure \ref{fig: Brain Heatmap} (a,b) represents a statistically significant edge. We observe that the estimated adjacency matrix for both groups exhibits sparsity as proposed in \cite{hagmann2008mapping, bassett2006small}. In Figure \ref{fig: Brain Heatmap} (c), we plot the difference matrix $\widehat{L}_{C} - \widehat{L}_{A}$ to highlight control-specific connections, indicating more connections in the control group than in the autism group. Furthermore, we identify connections that are unique to each group as well as shared across groups. Figure \ref{fig: Brain Heatmap} (d) displays a bar plot of the distribution of the group-specific and shared connections, showing that while both groups share numerous connections, the control group exhibits greater connectivity, suggesting a denser network compared to the autism group. This sparsity trend persists for values of $\lambda_n$ between $0.1$ and $0.23$. For values below $0.1$, the estimated networks become too dense to support any meaningful conclusions. Similarly, for values above $0.23$, the networks become overly sparse and lack interpretability. At $\lambda_n = 0.23$, the estimated network recovers several connections reported in the literature. 

In  Appendix~\ref{app:roi}, we list all estimated neural connections present only in the control group. Table~\ref{table: Control specific connections} links these control-specific connections to well-established cognitive functions, including social interaction, face and image recognition, working memory, and language comprehension. Each of these findings is supported by prior neuroscience literature cited in Table~\ref{table: Control specific connections}.

\section{Parallels with other structure learning problems}\label{sec: connections}

In this section, we loop back to emphasize the generality of the network learning framework considered in this paper. Towards this, we present four examples here that fit well into the framework presented in \eqref{eq: intro_problem}. It is worth noting that many of these assume that $\{Y_t\}_{t\in\mathbb{Z}}$ is i.i.d.; so $f_Y(\omega)$ is constant. However, we allow for ${\{Y_t\}_{t\in\mathbb{Z}}}$ to be a WSS process (which subsumes the i.i.d. case); that is, we do not require $f_Y(\omega)$ to be a constant.

\smallskip 
\textit{1) Graph signal processing (GSP)} extends classical signal processing by analyzing signals supported on a graph. For random signals, a simple generative model is $Y_t=H(\alpha)X_t$. Here $X_t$ is white noise and $H(\alpha)=\sum_{k=0}^{K-1}\alpha_k S^k$ is the graph filter for a given $\alpha_k$ and $K$. The shift matrix $S$ (e.g., adjacency or Laplacian) encodes the edge connectivity of the graph. \cite{mateos2019connecting} discusses several methods to infer sparsity pattern of $S$ from finitely many observations of $Y_t$ for a variety of loss functions $\mathcal{L}[\cdot]$. Note that when $K\to \infty$, $\alpha_k=1$, and $S=L-I$, we have\footnote{The invertibility of the Laplacian matrix $L$ is discussed in Remark \ref{rem: invertibility assumption}.} $H(\alpha)=(I-S)^{-1}=L^{-1}$. Thus, $f_Y(\omega)=H(\alpha)f_X(\omega)H(\alpha)^\transpose$ becomes the constraint in our learning problem in \eqref{eq: whittle likelihood_SB}. 
    
\smallskip  
\textit{2) Structural equation models (SEMs)} are used to model cause-and-effect relationships between variables, allowing us to infer the causal structure of systems in medicine, economics, and social sciences. Networks generated by SEMs, including directed acyclic graphs are of great interest \cite{drton2017structure}.

A random vector $Y_t \in \mathbb{R}^p$ follows linear SEM if $Y_t=B^\transpose Y_t+X_t$. The path (or autoregressive) matrix $B$ is upper triangular---a structure essential for modeling causal relationships. Therefore, we can take $L=I-B^\transpose$ in \eqref{eq: whittle likelihood_SB} to reproduce this problem setup. However, our theoretical results need to be suitably adapted to handle a non-symmetric matrix $L$ needed for SEMs, and we leave this for future work. 

\smallskip 
\textit{3) Cholesky decomposition for correlation networks}: Let $Y_t\sim \mathcal{N}(0,\Sigma)$. The sparsity pattern of $\Sigma$ or the inverse $\Omega=\Sigma^{-1}$ allows us to construct the correlation and partial correlation networks, respectively \cite{pourahmadi2011covariance}. Learning sparse covariance or inverse covariance matrices has been well-studied (see Section \ref{sec: lit-rev}).

However, for a clear statistical interpretation, one wants to learn the underlying Cholesky matrices $T$ or $W$, where $\Sigma = TD_1T^\transpose$ or $\Omega = WD_2W^\transpose$. The sparse triangular matrices $ T $ and $W$ can be learned using our framework in \eqref{eq: l_1 regularized Whittle likelihood} by letting $f_X(\omega) = D$ and $L^{\ast} = W^{-1}$. However, our approach is more general and does not constrain $L^{\ast}$ to be triangular.

\smallskip 
\textit{4) Factor analysis (FA)} is a statistical method that discovers latent structures within high-dimensional data and is used in Finance and Psychology. The fundamental FA equation is $\tilde{X}_t=\Lambda Y_t+\Phi U_t$. Here $Y_t$ and $U_t$ are called the common and unique factors; and $\Lambda$ (loading) and $\Phi$ (diagonal) are parametric matrices \cite[Chapter 5]{trendafilov2021multivariate}. Assuming the contribution from the unique factor is known, define $X_t\triangleq \tilde{X}_t-\Phi U_t=\Lambda Y_t$, where $\Lambda$ plays the role of $L^{\ast}$. Then by treating $\tilde{X}_t$ as a latent random signal, we can use the estimator in \eqref{eq: whittle likelihood_SB} to learn $\Lambda$.

\section{Conclusion and Future Work}\label{section: Discussion and Future Work}
 We study the structure learning problem in systems obeying conservation laws under wide-sense stationary (WSS) stochastic injections. This problem appears in domains like power, the human brain, finance, and social networks. We propose a novel $\ell_1$-regularized (approximate) Whittle likelihood estimator to solve the network learning problem for WSS injections that include Gaussian and a few classes of non-Gaussian processes. Our theoretical analysis demonstrates that the estimator is convex and has a unique minimum in the high-dimensional regime. We establish sample complexity guarantees for recovering the sparsity structure of $L^{\ast}$, along with norm-consistency bounds (that is, estimation error computed using element-wise maximum, Frobenius, and operator norms). We validate our theoretical results on synthetic, benchmark, and real-world networks under VAR(1) and VARMA(2,2) injections. 

\medskip  

We identify three significant future extensions. First, deriving minimax lower bounds to establish the statistical optimality of our estimator building upon the tools developed in \cite{ying2021minimax}. Second, the work in \cite{kumar2020unified} showed that incorporating diagonal dominance and non-positive off-diagonal constraints of Laplacian matrices could improve the estimation performance for precision matrices modeled as Laplacians. Thus, it would be interesting to exploit such constraints into the estimator in \eqref{eq: whittle likelihood_SB}, and also to relax the symmetry assumption. Non-symmetric Laplacian matrices model directional flows and appear in many fields like transportation, hydrodynamics, and neuronal networks; see \cite{van2017modeling}. 

Finally, we could broaden the class of distributions considered for the nodal injection process $X_{t}$. Although we model $X_{t}$ as a WSS process, non-stationarity often arises in applications such as task-based fMRI signals in neuroscience \cite{preti2017dynamic} and stock market data, which is frequently modeled by Brownian or Lévy processes \cite{peng2024financial, engelke2024evy}. Characterizing sample complexity results for non-stationary processes is challenging and much work needs to be done. 

\section*{Acknowledgment}
This work was supported in part by the National Science Foundation (NSF) award CCF-2048223 and the National Institutes of Health (NIH) under the award 1R01GM140468-01. D. Deka acknowledges the funding provided by LANL’s Directed Research and Development (LDRD) project: “High-Performance Artificial Intelligence” (20230771DI).


{\small \putbib[references_main]} 
\end{bibunit}

\newpage
\appendix
\counterwithin{equation}{section}
\counterwithin{theorem}{section}

\begin{bibunit}[IEEEtran]
\renewcommand{\bibname}{References for Appendix}
\renewcommand{\thesection}{\Alph{section}} 
\setcounter{section}{0}  
\setcounter{subsection}{0}  

    \begin{center}
        {\Large \bfseries Learning Network Structures from Wide-Sense Stationary Processes: Supplemental Material}\\
        \vspace{0.3cm}  
        Anirudh Rayas, Jiajun Cheng, Rajasekhar Anguluri, Deepjyoti Deka, and Gautam Dasarathy
        \vspace{0.5cm}
    \end{center}
    
We restate all theorems, lemmas, and corollaries with their original numbering consistent with the main text. For any new numbered environments introduced exclusively in the appendix, we prefix the labels with "A" (e.g., Lemma A.1). We use $\det(A)$ or $|A|$ to denote the determinant of matrix $A$. 

\section{Proofs of all technical results}\label{subsection: Proofs of all technical results}
 After giving a brief overview of the problem set-up and the necessary assumptions, we provide proof for all the technical results. Recall that our observation model is $Y_t = {L^{\ast}}^{-1}X_t$, where $L^{\ast}$ is a $p\times p$ Laplacian matrix (which encodes network structure, that is, $L^{\ast}_{ij} = 0$ for all $(i,j)\in E^{c}$); $X_t\in \mathbb{R}^{p}$ is a wide sense stationary stochastic (WSS) process with a spectral density matrix $f_{X}(\omega_{j})$ and $Y_t\in \mathbb{R}^{p}$ is a random vector of node potentials. Given $n$ samples of $\{Y_{t}\}_{t\in \mathbb{Z}}$ our goal is to learn the sparsity structure of the matrix $L^{\ast}$. We propose the following $\ell_{1}$-regularized Whittle likelihood estimator $\widehat{L}_{j}$ to obtain a sparse estimate of $L^{\ast}$: 
\begin{align}\label{appeq: l_1 regularized whittle likelihood}
    \hspace{-1.5mm}\widehat{L}_j = \argmin_{L\succ 0}\,\,\Tr(DLP_{j}LD)\!-\!\log |L^{2}|\!+\!\lambda_{n}\|L\|_{1,\text{off}},
\end{align}
where $D\in \mathbb{R}^{p\times p}$ is the unique Hermitian positive definite square root matrix of $\Theta_{X}$, and $P_{j}=P(\omega_{j})$ is the averaged periodogram. Hereafter, we refer to $P_j$ and $\widehat{L}_{j}$ as $P$ and $\widehat{L}$, respectively, since our results hold for all $\omega_j \in \mathcal{F}_n$. We recall the assumptions to prove our results.
\vspace{10px}

\noindent\textbf{[A1] Mutual incoherence condition:} Let $\Gamma^{\ast}$ be the Hessian of the log-determinant in \eqref{appeq: l_1 regularized whittle likelihood}:  
\begin{align}\label{appeq: Hessian}
    \Gamma^{\ast} \triangleq \nabla_{L}^{2}\log\det(L)\vert_{L=L^{\ast}} = {L^{\ast}}^{-1}\otimes {L^{\ast}}^{-1}. 
\end{align}
We say that the Laplacian $L^*$ satisfies the mutual incoherence condition if $\vertiii{\Gamma^{\ast}_{E^{c}E}{\Gamma^{\ast}_{EE}}^{-1}}_{\infty}\leq 1-\alpha$, for some $\alpha\in (0,1]$.

The incoherence condition on $L^\ast$ controls the influence of irrelevant variables (elements of the Hessian matrix restricted to $E^c\times E$ on relevant ones (elements restricted to $E\times E$). The $\alpha$-incoherence assumption, commonly used in the literature, has been validated for various graphs like chain and grid graphs \cite{ravikumar2011high_app}. While $\alpha$-incoherence in \cite{ravikumar2011high_app,deb2024regularized_app} is imposed on the inverse covariance or spectral density matrix, we enforce it on \(L^\ast\). 
\smallskip 

\noindent\textbf{[A2] Bounding temporal dependence:} $\{Y_{t}\}_{t\in\mathbb{Z}}$ has short range dependence: $\sum_{l=-\infty}^{\infty}\|\Phi_{Y}(l)\|_{\infty}<\infty$.
Thus, the autocorrelation function $\Phi_Y(l)$ decreases quickly as the time lag $l$ increases, leading to negligible temporal dependence between samples that are far apart in time.

This mild assumption holds if the nodal injections $\{X_{t}\}_{t\in\mathbb{Z}}$ exhibits short range dependence: $\sum_{l=-\infty}^{\infty}\|\Phi_{X}(l)\|_{\infty}<\infty$. In fact, $\sum_{l=-\infty}^{\infty}\|\Phi_{Y}(l)\|_{\infty} = \sum_{l=-\infty}^{\infty}\|{L^\ast}^{-1}\Phi_{X}(l){L^\ast}^{-1}\|_{\infty}\leq\nu_{{L^\ast}^{-1}}^{2}\sum_{l=-\infty}^{\infty}\|\Phi_{X}(l)\|_{\infty}<\infty$, where $\nu_{{L^\ast}^{-1}}$ is the $\ellinf$ matrix norm of ${L^\ast}^{-1}$. 

\smallskip

\noindent \textbf{[A3] Condition number bound on the Hessian:} The condition number $\kappa(\Gamma^{\ast})$ of the Hessian matrix in \ref{appeq: Hessian} satisfies: 
\begin{align}\label{appeq: Condition number bound}
    \kappa(\Gamma^{\ast})\triangleq \vertiii{{\Gamma^{\ast}}}_{\infty}\vertiii{{\Gamma^{\ast}}^{-1}}_{\infty}\leq \frac{1}{4d\nu_{D^{2}}\|\Theta^{-1}_{Y}\|_{\infty}C_{\alpha}}, 
\end{align}
where $C_{\alpha} = 1+\frac{24}{\alpha}$, $\alpha\in (0,1]$, and $d$ is the maximum number of non-zero entries across all rows in $L^{\ast}$ (or equivalently the maximum degree of the network underlying $L^*$). Bounding $\kappa(\Gamma^{\ast})$ to derive estimation consistency results is standard in the high-dimensional graphical model literature \cite{cai2011constrained_app, rothman2008sparse_app}.

\smallskip 
We employ the primal-dual witness (PDW) construction to validate the behavior of the estimator $\widehat{L}$. The PDW technique involves the construction of a primal-dual pair $(\widetilde{L},\widetilde{Z})$, where $\widetilde{L}$ represents the optimal primal solution defined as the minimum of the following restricted $\ell_1$-regularized problem:
\begin{align}\label{appeq: rest whittle}
\hspace{-2.5mm}\widetilde{L} \triangleq \argmin_{L\succ 0,L_{E^{c}}=0}\left[\Tr(DLPLD)\!-\!\log|L^{2}|\!+\!\lambda_{n}\|L\|_{1,\text{off}}\right],
\end{align}
where $\widetilde{Z}\in \partial \|\widetilde{L}\|_{1,\text{off}}$ denotes the optimal dual solution. By definition, the primal solution $\widetilde{L}$ satisfies $\widetilde{L}_{E^{c}} = L^{\ast}_{E^{c}}=0$. Further, $(\widetilde{L},\widetilde{Z})$ satisfies the zero gradient conditions of the restricted problem $\eqref{appeq: rest whittle}$. Therefore, when the PDW construction succeeds, the solution $\widehat{L}$ is equal to the primal solution $\widetilde{L}$, ensuring the support recovery property, i.e., $\widehat{L}_{E^{c}}=0$.

\smallskip 

\textbf{Key Technical Contributions:}  We show that the 

estimator in \eqref{appeq: l_1 regularized whittle likelihood} is convex and admits a unique solution $\widehat{L}$ (Lemma \ref{applma: convexity and uniqueness}). 
 We derive sufficient conditions under which the PDW construction succeeds (Lemma \ref{applma: Sufficiency of strict dual feasibility}).  
 We then guarantee that the remainder term $R(\Delta)$ is bounded if $\Delta$ is bounded (see Lemma \ref{applma: control remainder}). 
 Furthermore, for a specific choice of radius $r$ as a function of $\Vert W\Vert_{\infty}$, we show that $\Delta$ lies in a ball $\mathbb{B}_{r}$ of radius $r$ (see Lemma \ref{applma: Control of Delta}). 
 Using known concentration results on the averaged periodogram for Gaussian and linear processes, we derive sufficient conditions on the number of samples required for the proposed estimator $\widehat{L}$ to recover the exact sparsity structure of $L^{\ast}$. We also show that under these sufficient conditions $\widehat{L}$ is consistent with $L^{\ast}$ in the element-wise $\ellinf$-norm and achieves sign consistency if $\vert L^{\ast}_{\min}\vert$ (the minimum non-zero entries of $L^{\ast}$) is lower bounded (see Theorem \ref{appthm: Gaussian time series} and Theorem~\ref{appthm: Linear Process support recovery}). Finally, we show that $\widehat{L}$ is consistent in the Frobenius and spectral norm.


\setcounter{appendixlemma}{0}
\label{app: convexity and uniqueness}
\begin{appendixlemma}\label{applma: convexity and uniqueness}{{\emph(Convexity and uniqueness)}}:
For any $\lambda_{n}\!>\!0$ and $L\!\succ\! 0$, if all the diagonals of the averaged periodogram $P_{ii}>0$, then (i) the $\ell_{1}$-regularized Whittle likelihood estimator in \eqref{appeq: l_1 regularized whittle likelihood} is strictly convex and (ii) $\widehat{L}$ in \eqref{appeq: l_1 regularized whittle likelihood} is the unique minima satisfying the sub-gradient condition $2\Psi_{1}\widehat{L}P_{1} - 2\Psi_{2}\widehat{L}P_{2}- 2\widehat{L}^{-1}\!+\!\lambda_{n}\widehat{Z}\!=\!0$, where $\widehat{Z}\in \partial\Vert L \Vert_{1,\text{off}}$ is evaluated at $\widehat{L}$.
\end{appendixlemma}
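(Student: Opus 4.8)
The plan is to analyze the objective $g(L)\defeq\Tr(DLPLD)-\log\det(L^{2})+\lambda_{n}\|L\|_{1,\text{off}}$ of \eqref{appeq: l_1 regularized whittle likelihood} over the open convex cone $\{L\succ0\}$ in four steps: put $g$ in a convenient real form; prove $g$ is strictly convex; prove $g$ attains its infimum (coercivity); and read off the first-order condition at the minimizer. Strict convexity forces \emph{at most} one minimizer, coercivity forces \emph{at least} one, and the stationarity condition then gives both claims. For the rewriting: since $L=L^{\transpose}$, $D=D^{\dagger}$ is the Hermitian square root of $\Theta_{X}$, and $P=P^{\dagger}\succeq0$, cyclicity of the trace gives $\Tr(DLPLD)=\Tr\!\big((LD)^{\dagger}P(LD)\big)=\|P^{1/2}LD\|_{F}^{2}\ge0$, and also $\Tr(DLPLD)=\Tr(D^{2}LPL)=\Tr(\Theta_{X}LPL)$; splitting $\Theta_{X}$ and $P$ into real and imaginary parts $\Psi_{1},\Psi_{2}$ and $P_{1},P_{2}$, the cross terms $\Tr(\Psi_{1}LP_{2}L)$ and $\Tr(\Psi_{2}LP_{1}L)$ each equal minus their own transpose (antisymmetry of $\Psi_{2},P_{2}$ together with cyclicity) and so vanish, leaving $\Tr(DLPLD)=\Tr(\Psi_{1}LP_{1}L)-\Tr(\Psi_{2}LP_{2}L)\in\mathbb{R}$. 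Finally $\log\det(L^{2})=2\log\det L$ on $\{L\succ0\}$.

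For strict convexity I would write $g=g_{1}+g_{2}+g_{3}$ with $g_{1}(L)=\|P^{1/2}LD\|_{F}^{2}$, $g_{2}(L)=-2\log\det L$, and $g_{3}(L)=\lambda_{n}\|L\|_{1,\text{off}}$. Here $g_{1}$ is convex, being the composition of the linear map $L\mapsto LD$ with the PSD quadratic form $M\mapsto\Tr(M^{\dagger}PM)=\sum_{k}\sigma_{k}\|M^{\dagger}u_{k}\|^{2}$ (the $\sigma_{k}\ge0$ being the eigenvalues of $P$, so each summand is convex in $M$); $g_{2}$ is strictly convex on $\{L\succ0\}$ since along any segment $L+tH$ with $H=H^{\transpose}\ne0$ its second derivative equals $2\|(L+tH)^{-1/2}H(L+tH)^{-1/2}\|_{F}^{2}>0$; and $g_{3}$ is convex, being $\lambda_{n}>0$ times the seminorm $\|\cdot\|_{1,\text{off}}$. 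A sum of a convex, a strictly convex and a convex function on a convex set is strictly convex, which is part (i) and already yields uniqueness of any minimizer.

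The crux is existence, and it is the only place the hypothesis $P_{ii}>0$ is used. I would show every sublevel set $S_{c}=\{L\succ0:g(L)\le c\}$ is compact; since $g_{1},g_{3}\ge0$ forces $\det L\ge e^{-c/2}$ on $S_{c}$, once $S_{c}$ is bounded it is automatically bounded away from the cone boundary ($\lambda_{\min}(L)\ge e^{-c/2}\lambda_{\max}(L)^{-(p-1)}$) and closed in $\{L\succ0\}$, hence compact. For boundedness, the key inequality is that on a \emph{diagonal} matrix $\Lambda=\mathrm{diag}(a)$ one has $g_{1}(\Lambda)=\Tr(\Theta_{X}\Lambda P\Lambda)=a^{\transpose}(\Theta_{X}\circ\bar P)a\ge\lambda_{\min}(\Theta_{X}\circ\bar P)\,\|a\|_{2}^{2}$, with $\circ$ the Hadamard product; because $\Theta_{X}\succ0$ and $\bar P\succeq0$ has strictly positive diagonal $\bar P_{ii}=P_{ii}>0$, the strict form of the Schur product theorem gives $\Theta_{X}\circ\bar P\succ0$, so this lower bound carries a positive constant. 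Then, for any $L_{k}\in S_{c}$ with $\|L_{k}\|_{F}\to\infty$, split $L_{k}$ into its diagonal part $\mathrm{diag}(a_{k})$ and its off-diagonal part: if $\|L_{k}\|_{1,\text{off}}$ stays bounded then $\|a_{k}\|_{2}\to\infty$ and $g_{1}(L_{k})\ge\big(\sqrt{\lambda_{\min}(\Theta_{X}\circ\bar P)}\,\|a_{k}\|_{2}-\|P^{1/2}\|_{2}\|D\|_{2}\|L_{k}\|_{1,\text{off}}\big)^{2}$ grows quadratically in $\|a_{k}\|_{2}$, dominating $g_{2}(L_{k})\ge-2p\log\|a_{k}\|_{2}$; if instead $\|L_{k}\|_{1,\text{off}}\to\infty$, then either $\|a_{k}\|_{2}$ outgrows $\|L_{k}\|_{1,\text{off}}$ and $g_{1}$ again dominates $g_{2}$, or $\|a_{k}\|_{2}\lesssim\|L_{k}\|_{1,\text{off}}$ and the log-barrier obeys $g_{2}(L_{k})\ge-2p\log\|L_{k}\|_{1,\text{off}}-\mathcal{O}(1)$, which is dominated by $g_{3}(L_{k})=\lambda_{n}\|L_{k}\|_{1,\text{off}}$. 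In every regime $g(L_{k})\to\infty$, contradicting $L_{k}\in S_{c}$. Carrying out this case analysis with the cross-term estimates made fully precise is the main obstacle, but the mechanism is exactly as the main text suggests: the quadratic trace term controls the diagonal directions through $P_{ii}>0$ (via $\Theta_{X}\circ\bar P\succ0$), the $\ell_{1}$ penalty controls the off-diagonal directions, and the log-barrier is negligible against one of the two in each regime.

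Finally, since $g$ is convex and finite on the open set $\{L\succ0\}$, the (unique) minimizer $\widehat{L}$ is characterized by $0\in\partial g(\widehat{L})=\nabla g_{1}(\widehat{L})+\nabla g_{2}(\widehat{L})+\lambda_{n}\,\partial\|\widehat{L}\|_{1,\text{off}}$. With $\nabla g_{2}(L)=-2L^{-1}$, and differentiating the quadratic $g_{1}$ over symmetric matrices while inserting the real/imaginary decomposition from the first step to obtain $\nabla g_{1}(\widehat{L})=2\Psi_{1}\widehat{L}P_{1}-2\Psi_{2}\widehat{L}P_{2}$ (the symmetrization being automatic when differentiating over symmetric matrices), the stationarity condition becomes $2\Psi_{1}\widehat{L}P_{1}-2\Psi_{2}\widehat{L}P_{2}-2\widehat{L}^{-1}+\lambda_{n}\widehat{Z}=0$ for the choice $\widehat{Z}\in\partial\|\widehat{L}\|_{1,\text{off}}$ realizing the inclusion, which is exactly the condition in the statement; uniqueness of $\widehat{L}$ is the strict convexity already established.
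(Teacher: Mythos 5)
Your proposal is correct and follows the same overall skeleton as the paper's proof: establish strict convexity of the objective on $\{L \succ 0\}$, establish that the minimum is attained via coercivity (compact sublevel sets), and then read off the stated sub-gradient condition from the real/imaginary decomposition $D^{2}=\Psi_{1}+\im\Psi_{2}$, $P=P_{1}+\im P_{2}$, exactly as the paper does. The differences are in where the weight is placed. First, the paper attributes strict convexity to the quadratic term $\|DLM\|_{F}^{2}$ (with $M=P^{1/2}$) and treats $-\log\det$ as merely convex, whereas you hang strict convexity on $-2\log\det L$ and treat the quadratic term as merely convex; your attribution is the more robust one, since $L\mapsto\|P^{1/2}LD\|_{F}^{2}$ is strictly convex only when $P$ is nonsingular, which typically fails for the averaged periodogram in the regime $2m+1<p$ that the lemma is meant to cover. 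Second, the paper does not reproduce the coercivity argument at all — it defers to the proof of Lemma 1 in the earlier i.i.d. paper, noting only that $D$ and $M$ are now complex — while you supply a self-contained argument: the hypothesis $P_{ii}>0$ enters through $\Tr(\Theta_{X}\Lambda P\Lambda)=a^{\transpose}(\Theta_{X}\circ\bar P)a$ and the strict Schur product theorem ($\Theta_{X}\succ 0$, $\bar P\succeq 0$ with positive diagonal $\Rightarrow\Theta_{X}\circ\bar P\succ 0$), so the quadratic term controls the diagonal directions, the $\ell_{1}$ penalty controls the off-diagonal directions, and the log-barrier (which also keeps sublevel sets away from the cone boundary) grows only logarithmically; this is the same mechanism the cited prior work uses, but your Hadamard-product formulation makes the role of $P_{ii}>0$ explicit in the present complex-valued setting. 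The remaining looseness (the cross-term bookkeeping in the case analysis, and the cavalier symmetrization of the gradient $2\Psi_{1}\widehat{L}P_{1}-2\Psi_{2}\widehat{L}P_{2}$ when differentiating over symmetric matrices) is at the same level of rigor as the paper itself, which states the sub-gradient condition in exactly this unsymmetrized form, so I do not count it as a gap.
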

\begin{proof}
    The proof follows the same argument as in \cite[Lemma 1]{rayas_learning_2022_app}, but needs to account for complex-valued matrices. To show convexity, we rewrite the objective in \eqref{appeq: l_1 regularized whittle likelihood} as
    \begin{align}\label{appeq: l_1 likelihood in norm}
      \mathcal{L}_\lambda(L)\triangleq\|DLM\|^{2}_{F} - 2\log\det(L) + \lambda_{n}\|L\|_{1,\text{off}},
    \end{align}
    where $M$ is the unique positive semidefinite square root of the averaged periodogram $P$.  Then the objective \eqref{appeq: l_1 likelihood in norm} is strictly convex since the Frobenius norm $\|DLM\|^{2}_{F}$ is strictly convex and $\log\det(L)$ is convex for any positive definite $L\succeq 0$. However, this does not guarantee that the estimator is unique since strictly convex functions have unique minima, \emph{if} attained \cite{boyd2004convex_app}. To show that the minima is attained it is sufficient if the convex objective \eqref{appeq: l_1 likelihood in norm} is coercive (see Def 11.10 and Proposition 11.14 in \cite{bauschke2011convex}). The proof of coercivity follows along the same lines as that provided in \cite{rayas_learning_2022_app} (see proof of Lemma 1), with the exception that the matrices $D$ and $M$ are complex-valued. It remains to show the sub-gradient condition of the $\ell_{1}$-regularized Whittle likelihood estimator in \eqref{appeq: l_1 regularized whittle likelihood}. The sub-gradient satisfies
    \begin{align}
        \frac{\partial}{\partial L}[\Tr(D^{2}LPL)-2\log\det(L)+\|L\|_{1,\text{off}}]|_{L=\widehat{L}} = 0.
    \end{align}
Let $D^{2} = \Psi_{1}+\im \Psi_{2}$, where $\Psi_{1} = \mathfrak{R}(D^{2})$ is the real part of $D^{2}$ and $\Psi_{2} = \mathfrak{I}(D^{2})$ is the imaginary part of $D^{2}$. Similarly, let $P = P_1+\im P_2$. Then
\begin{align*}
    \Tr((\Psi_{1}+\im \Psi_{2})L(P_1&+\im P_2)L) = \Tr(\Psi_{1}LP_{1}L - \Psi_{2}LP_{2}L)
    \\&+\im [Tr(\Psi_{2}LP_{1}L)+\Tr(\Psi_{1}LP_{2}L)].
\end{align*}
Since $D^2$ and $P$ are Hermitian, it follows that $\Psi_{1} = \mathfrak{R}(D^{2})$ and $\mathfrak{R}(P)$ are symmetric and the imaginary part $\Psi_{2}=\mathfrak{I}(D^{2})$ and $\mathfrak{I}(P)$ are skew-symmetric. As a result
\begin{align*}
    \hspace{-1.5mm}\frac{\partial}{\partial L}[\Tr(D^{2}LPL)] & = \frac{\partial}{\partial L}[\mathfrak{R}(\Tr(D^{2}LPL))\!+\!\im \mathfrak{I}(\Tr(D^{2}LPL))] \\ 
    & = 2\Psi_{1}LP_{1}-2\Psi_{2}LP_{2} + \im 0,
\end{align*}
where in the second equality we used the matrix trace derivative result in \cite{petersen2008matrix} and the fact that $D^{2}$ and $P$ are skew-symmetric. 

The derivative of $\log \det(L)$ with respect to $L$ is $L^{-1}$. Finally, the sub-gradient of $\|L\|_{1,\text{off}}$ is given by
\begin{align*}
    \hspace{-1.5mm}\frac{\partial}{\partial L}\|L\|_{1,\text{off}}= \frac{\partial}{\partial L} \sum_{i\neq j}|L_{ij}|=
    \begin{cases}
        0 &  i=j \\
    \text{sign}(L_{ij}) & i\neq j, L_{ij}\neq 0\\
    \in [-1,1] & i\neq j, L_{ij} = 0.
    \end{cases}
\end{align*}
Putting all these pieces together, the sub-gradient condition of \eqref{appeq: l_1 regularized whittle likelihood} evaluated at $\widehat{L}$ is then given by
\begin{align}\label{appeq: sub-gradient condition}
   \partial \mathcal{L}_\lambda(L)\triangleq 2\Psi_{1}\widehat{L}P_{1} - 2\Psi_{2}\widehat{L}P_{2}- 2\widehat{L}^{-1}\!+\!\lambda_{n}\widehat{Z}=0,
\end{align}
where $\widehat{Z}\in \partial \|L\|_{1,\text{off}}$ is evaluated at $\widehat{L}$.
\end{proof}

\begin{appendixlemma}\label{applma: Sufficiency of strict dual feasibility}(Conditions for strict dual feasibility) Let $\lambda_{n}>0$ and $\alpha$ be defined as in \textbf{[A1]}. Suppose that $\max\{2\nu_{D^{2}}(d\|\Delta\|_{\infty}+\nu_{L^{\ast}})\|W\|_{\infty}, \|R(\Delta)\|_{\infty},2\nu_{D^{2}}d\|\Delta\|_{\infty}\|\Theta^{-1}_{Y}\|_{\infty}\}\leq \frac{\alpha\lambda_{n}}{24}$. Then the dual vector $\widetilde{Z}_{E^{c}}$ satisfies $\|\widetilde{Z}_{E^{c}}\|_{\infty}<1$, and hence, $\widetilde{L} = \widehat{L}$. 

\end{appendixlemma}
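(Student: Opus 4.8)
\emph{Proof proposal.} The plan is a primal--dual witness (PDW) argument in the style of \cite{ravikumar2011high_app}, adapted to the frequency domain and to the fact that the objective is quadratic in $L$. By construction $\widetilde{L}$ solves the restricted problem \eqref{appeq: rest whittle} and $\widetilde{Z}\in\partial\|\widetilde{L}\|_{1,\text{off}}$, so $\|\widetilde{Z}_{E}\|_{\infty}\le 1$, and the zero-subgradient condition of Lemma~\ref{applma: convexity and uniqueness},
\begin{align*}
2\Psi_{1}\widetilde{L}P_{1}-2\Psi_{2}\widetilde{L}P_{2}-2\widetilde{L}^{-1}+\lambda_{n}\widetilde{Z}=0,
\end{align*}
holds on the block $E$; we then \emph{define} $\widetilde{Z}_{E^{c}}$ to be whatever this identity forces on $E^{c}$. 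Once strict dual feasibility $\|\widetilde{Z}_{E^{c}}\|_{\infty}<1$ is established, the strict convexity from Lemma~\ref{applma: convexity and uniqueness} certifies that $\widetilde{L}$ is the unique minimizer $\widehat{L}$ of the unrestricted estimator, with $\widehat{L}_{E^{c}}=\widetilde{L}_{E^{c}}=0$.

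First I would collapse the leading terms into $2\mathfrak{R}(D^{2}\widetilde{L}P)$ (valid since $\Psi_{1}LP_{1}-\Psi_{2}LP_{2}=\mathfrak{R}(D^{2}LP)$ for real $L$) and make two substitutions: $P=\Theta_{Y}^{-1}+W$ with $W$ the periodogram error, and $\widetilde{L}=L^{\ast}+\Delta$. Invoking the constraint identity $D^{2}L^{\ast}\Theta_{Y}^{-1}={L^{\ast}}^{-1}$ (a consequence of $\Theta_{Y}=L^{\ast}D^{2}L^{\ast}$) together with the expansion of $\widetilde{L}^{-1}$ about ${L^{\ast}}^{-1}$ that defines the remainder $R(\Delta)$, the zeroth-order contributions cancel, leaving
\begin{align*}
2{L^{\ast}}^{-1}\Delta{L^{\ast}}^{-1}+\underbrace{2\mathfrak{R}(D^{2}\Delta\Theta_{Y}^{-1})}_{\text{quadratic-objective term}}+\underbrace{2\mathfrak{R}\!\left(D^{2}(L^{\ast}+\Delta)W\right)}_{\text{periodogram error}}-\,2R(\Delta)+\lambda_{n}\widetilde{Z}=0 .
\end{align*}
Applying $\mvec(\cdot)$ turns the first term into $2\Gamma^{\ast}\mvec(\Delta)$ with $\Gamma^{\ast}={L^{\ast}}^{-1}\otimes{L^{\ast}}^{-1}$ as in \textbf{[A1]}; I would lump the remaining three matrices into a single perturbation $U$. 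Since $\widetilde{L}_{E^{c}}=L^{\ast}_{E^{c}}=0$, the vector $\mvec(\Delta)$ is supported on $E$, so the $E$-block and $E^{c}$-block of the vectorized identity read $2\Gamma^{\ast}_{EE}\mvec(\Delta)_{E}+\mvec(U)_{E}+\lambda_{n}\widetilde{Z}_{E}=0$ and $2\Gamma^{\ast}_{E^{c}E}\mvec(\Delta)_{E}+\mvec(U)_{E^{c}}+\lambda_{n}\widetilde{Z}_{E^{c}}=0$. Eliminating $\mvec(\Delta)_{E}$ from the first (using invertibility of $\Gamma^{\ast}_{EE}$) and substituting gives the closed form
\begin{align*}
\lambda_{n}\widetilde{Z}_{E^{c}}=\Gamma^{\ast}_{E^{c}E}(\Gamma^{\ast}_{EE})^{-1}\!\left(\lambda_{n}\widetilde{Z}_{E}+\mvec(U)_{E}\right)-\mvec(U)_{E^{c}} .
\end{align*}

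Then I would take $\|\cdot\|_{\infty}$ and use the mutual incoherence assumption \textbf{[A1]}, $\vertiii{\Gamma^{\ast}_{E^{c}E}(\Gamma^{\ast}_{EE})^{-1}}_{\infty}\le 1-\alpha$, with $\|\widetilde{Z}_{E}\|_{\infty}\le 1$, to obtain $\|\widetilde{Z}_{E^{c}}\|_{\infty}\le (1-\alpha)+\tfrac{2-\alpha}{\lambda_{n}}\|U\|_{\infty}$. It remains to bound $\|U\|_{\infty}$ entrywise: splitting the periodogram-error matrix as $2\mathfrak{R}(D^{2}L^{\ast}W)+2\mathfrak{R}(D^{2}\Delta W)$ and using $\|AB\|_{\infty}\le\vertiii{A}_{\infty}\|B\|_{\infty}$ together with the fact that $\Delta$ (and $L^{\ast}$) have at most $d$ nonzeros per row yields $\|2\mathfrak{R}(D^{2}(L^{\ast}+\Delta)W)\|_{\infty}\le 2\nu_{D^{2}}(\nu_{L^{\ast}}+d\|\Delta\|_{\infty})\|W\|_{\infty}$; the same estimates give $\|2\mathfrak{R}(D^{2}\Delta\Theta_{Y}^{-1})\|_{\infty}\le 2\nu_{D^{2}}d\|\Delta\|_{\infty}\|\Theta_{Y}^{-1}\|_{\infty}$; and the remainder piece is $2\|R(\Delta)\|_{\infty}$. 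Each of the three quantities appearing in the hypothesis is at most $\alpha\lambda_{n}/24$, so $\|U\|_{\infty}\le\tfrac{\alpha\lambda_{n}}{24}+\tfrac{\alpha\lambda_{n}}{24}+\tfrac{\alpha\lambda_{n}}{12}=\tfrac{\alpha\lambda_{n}}{6}$, whence $\|\widetilde{Z}_{E^{c}}\|_{\infty}\le(1-\alpha)+\tfrac{(2-\alpha)\alpha}{6}<1$; the constant $24$ in the hypothesis is chosen precisely so this slack stays positive, and $\widetilde{L}=\widehat{L}$ follows.

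The part I expect to be the main obstacle is the term $2\mathfrak{R}(D^{2}\Delta\Theta_{Y}^{-1})$: because the objective is \emph{quadratic} rather than linear in $L$, linearizing its gradient around $L^{\ast}$ produces this $\Delta$-dependent contribution, which is \emph{not} a Taylor remainder and is a priori of the same order as the dominant Hessian term $2{L^{\ast}}^{-1}\Delta{L^{\ast}}^{-1}$; forcing it to behave as a genuinely lower-order perturbation is exactly what introduces the third quantity in the hypothesis, and that requirement (once $\|\Delta\|_{\infty}\le r$ is available from Lemma~\ref{applma: Control of Delta}) is ultimately what necessitates the condition-number bound \textbf{[A3]} downstream. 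A secondary difficulty is bookkeeping the Hermitian structure: one must use the skew-symmetry of $\Psi_{2}=\mathfrak{I}(D^{2})$ and $\mathfrak{I}(P)$ so that the real-part reduction from Lemma~\ref{applma: convexity and uniqueness} is legitimate and the $\mvec$/Kronecker identities hold with a \emph{real} $\Gamma^{\ast}$, and then verify that $\|\mathfrak{R}(M)\|_{\infty}\le\|M\|_{\infty}$ suffices to carry the complex matrices $D^{2},\Theta_{Y}^{-1},W$ through all the $\ellinf$-type bounds.
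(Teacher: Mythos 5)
Your proposal is correct and follows essentially the same route as the paper's proof: the same PDW construction, the substitutions $P=\Theta_Y^{-1}+W$ and $\widetilde{L}=L^\ast+\Delta$, cancellation of the zeroth-order term via $D^2L^\ast\Theta_Y^{-1}={L^\ast}^{-1}$, vectorization with $\Gamma^\ast$, block elimination of $\mvec(\Delta)_E$, mutual incoherence, and the same three $\nu$-norm bounds using the $d$-sparsity of $\Delta$. The only differences are cosmetic — you carry the complex matrices and take real parts at the end (via $\|\mathfrak{R}(M)\|_\infty\le\|M\|_\infty$) where the paper separates $T_1,T_2$ into real and imaginary pieces, and your constant bookkeeping ($\|U\|_\infty\le\alpha\lambda_n/6$, giving $(1-\alpha)+(2-\alpha)\alpha/6<1$) is in fact slightly tighter than the paper's intermediate threshold.
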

\begin{proof}
   We start by deriving a suitable expression for the sub-gradient $\widetilde{Z}_{E^{c}}$ by using the optimality condition of the restricted $\ell_{1}$-regularized problem defined in \eqref{appeq: rest whittle}. From \eqref{appeq: sub-gradient condition} we have,
   \begin{align}\label{eq: appendix sub-grad pwd}
         \partial \mathcal{L}_\lambda(\tilde{L})\triangleq 2\Psi_{1}\widetilde{L}P_{1} - 2\Psi_{2}\widetilde{L}P_{2}- 2\widetilde{L}^{-1}\!+\!\lambda_{n}\widetilde{Z}\!=\!0.
   \end{align}
where $\widetilde{L}$ is the primal solution given by \eqref{appeq: rest whittle} and $\widetilde{Z}\in \|\widetilde{L}\|_{1,\text{off}}$ is the optimal dual. Recall that the measure of distortion is given by $\Delta = \widetilde{L}-L^{\ast}$ and the measure of noise is given by $W = P - \Theta_{Y}^{-1}$. We have the following chain of equations:
\begin{align*}
    \partial \mathcal{L}_\lambda(\tilde{L})&=2(\Psi_{1}\Delta P_{1}+\Psi_{1}L^{\ast}P_1)\\&\quad -2(\Psi_{2}\Delta P_{2}+\Psi_{2}L^{\ast}P_2)
    -2\widetilde{L}^{-1}+\lambda_{n}\widetilde{Z} \\
    &=2(\Psi_{1}\Delta W_{1} + \Psi_{1}L^{\ast} W_{1} + \Psi_{1}\Delta \mathfrak{R}(\Theta_{Y}^{-1})\\&\quad  +\Psi_{1}L^{\ast} \mathfrak{R}(\Theta_{Y}^{-1}))
    -2(\Psi_{2}\Delta W_{2} + \Psi_{2}L^{\ast} W_{2}\\& \quad + \Psi_{2}\Delta \mathfrak{I}(\Theta_{Y}^{-1}) + \Psi_{2}L^{\ast} \mathfrak{I}(\Theta_{Y}^{-1}))
    -2\widetilde{L}^{-1}+\lambda_{n}\widetilde{Z}.
\end{align*}
Define the following terms: 
\begin{align}
    T_1&=\Psi_{1}\Delta W_{1} + \Psi_{1}L^{\ast} W_{1} + \Psi_{1}\Delta \mathfrak{R}(\Theta_{Y}^{-1})\\
    T_2&=\Psi_{2}\Delta W_{2} + \Psi_{2}L^{\ast} W_{2} + \Psi_{2}\Delta \mathfrak{I}(\Theta_{Y}^{-1})\\
    T_3&=\Psi_{1}L^{\ast} \mathfrak{R}(\Theta_{Y}^{-1}) -\Psi_{2}L^{\ast} \mathfrak{I}(\Theta_{Y}^{-1}), 
\end{align}
and note that 
\begin{align}\label{eq: appendix simplified optimality}
\partial \mathcal{L}_\lambda(\tilde{L})=2T_1-2T_2+2T_3-2\widetilde{L}^{-1}\lambda_n\tilde{Z}. 
\end{align}
We now show that $T_{3} = {L^{\ast}}^{-1}$. By definition $\Theta_{Y} = {L^{\ast}}\Theta_{X}{L^{\ast}}$, where $\Theta_Y$ and $\Theta_X$ are Hermitian positive definite matrices. So
$L^{\ast}\mathfrak{R}(\Theta_{Y}^{-1}) = \mathfrak{R}(\Theta_{X}^{-1}){L^{\ast}}^{-1}$ and $L^{\ast}\mathfrak{I}(\Theta_{Y}^{-1}) = \mathfrak{I}(\Theta_{X}^{-1}){L^{\ast}}^{-1}$. From these two identities, we establish that
\begin{align*}
    &\Psi_{1}L^{\ast}\mathfrak{R}(\Theta_{Y}^{-1})-\Psi_{2}L^{\ast}\mathfrak{I}(\Theta_{Y}^{-1}) \\
    & \hspace{30.0mm} =[\Psi_{1}\mathfrak{R}(\Theta_{X}^{-1})-\Psi_{2}\mathfrak{I}(\Theta_{X}^{-1})]{L^{\ast}}^{-1}. 
\end{align*}

To show $\Psi_{1}\mathfrak{R}(\Theta_{X}^{-1})-\Psi_{2}\mathfrak{I}(\Theta_{X}^{-1})=I$ proceed as follows. Recall that $D^{2} \triangleq \Psi_{1}+\im \Psi_{2} = \Theta_{X}$. Thus, $\Psi_{1}\Theta_{X}^{-1}+\im \Psi_{2}\Theta_{X}^{-1} = I$.
Decompose $\Theta_{X}^{-1}$ into real and imaginary parts to see that $\Psi_{1}\mathfrak{R}(\Theta_{X}^{-1})-\Psi_{2}\mathfrak{I}(\Theta_{X}^{-1}) = I_{p\times p}$. 

Substituting $T_3=(L^*)^{-1}$ in \eqref{eq: appendix simplified optimality}, followed by algebraic manipulations (Taylor series of ${L^{\ast}}^{-1}$ around $\tilde{L}$), yield us
\begin{align}
   \partial \mathcal{L}_\lambda(\tilde{L})=T_{1} - T_{2} - R(\Delta) - {L^{\ast}}^{-1}\Delta{L^{\ast}}^{-1}+\lambda_{n}^{\prime}\widetilde{Z}, 
\end{align}
where $\lambda_n'=0.5\lambda$ and $R(\Delta) = \tilde{L}^{-1}-{L^{\ast}}^{-1}-{L^{\ast}}^{-1}\Delta{L^{\ast}}^{-1}$ is the remainder term of the Taylor series. Apply vec operator\footnote{We use $\mvec(A)$ or $\xbar{A}$ to denote the $p^2$ vector formed by stacking the columns of the $p\times p$-dimensional matrix $A$.} on both sides and use the relation in \eqref{eq: appendix sub-grad pwd} to finally obtain

\begin{align}\label{appeq: vectorized optimality}
    \mvec(-{L^{\ast}}^{-1}\Delta{L^{\ast}}^{-1}+T_{1}-R(\Delta)-T_{2}+\lambda^{\prime}_{n}\widetilde{Z})=0.
\end{align}

From standard Kronecker product matrix rules \cite{laub2005matrix}, we have $\mvec({L^{\ast}}^{-1}\Delta{L^{\ast}}^{-1}) = \Gamma^{\ast}\xbar{\Delta}$, where $\Gamma^{\ast} = {L^{\ast}}^{-1}\otimes{L^{\ast}}^{-1}$ and $\xbar{\Delta} = \mvec(\Delta)$. For compatible matrices $ A,B,C$, we have $\mvec(ABC) = \Gamma(AB)\xbar{C}$, where $\Gamma(AB) = I\otimes AB$ and $I$ is the $p\times p$ identity matrix. Using the Kronecker product rules, \eqref{appeq: vectorized optimality} becomes
\begin{align}\label{appeq: vectorized stationary eq}
\xbar{T}_{1}-\xbar{T}_{2}-\Gamma^{\ast}\xbar{\Delta}-\xbar{R}(\Delta)+\lambda_{n}^{\prime}\xbar{\widetilde{Z}}=0,
\end{align}
where
\begin{align}
   \hspace{-2mm}\xbar{T}_{1}&= (\Gamma(\Psi_{1}\Delta)\xbar{W}_{1}+\Gamma(\Psi_{1}L^{\ast})\xbar{W}_{1}+\Gamma(\Psi_{1}\Delta)\xbar{\mathfrak{R}}(\Theta_{Y}^{-1}))\label{appeq: T1 bar}\\
   \hspace{-2mm}\xbar{T}_{2}&=(\Gamma(\Psi_{2}\Delta)\xbar{W}_{2}+\Gamma(\Psi_{2}L^{\ast})\xbar{W}_{2}+\Gamma(\Psi_{2}\Delta)\xbar{\mathfrak{I}}(\Theta_{Y}^{-1})).\label{appeq: T2 bar}
\end{align}

We partition equation \eqref{appeq: vectorized stationary eq} above into two separate equations corresponding to the sets $E$ and $E^{c}$. Recall that $E$ is the augmented edge set defined as $E = \{\mathcal{E}(L^{\ast})\cup(1,1),\ldots,\cup (p,p)\}$, where $\mathcal{E}(L^{\ast})$ is the edge set of $L^{\ast}$ and $E^{c}$ is the complement of the set $E$. Recall that we use the notation $A_{E}$ to denote the sub-matrix of $A$ containing all elements $A_{ij}$ such that $(i,j)\in E$. We partition the above linear equation into two separate linear equations corresponding to the sets $E$ and $E^{c}$:
\begin{align}
   -\Gamma^{\ast}_{EE}\xbar{\Delta}_{E} + \xbar{T_1}_{E}-\xbar{T_2}_{E}-\xbar{R}_{E}(\Delta)+\lambda^{\prime}_{n}\xbar{\widetilde{Z}}_{E} &= 0\label{appeq: vectorized optimality wrt E}\\
   -\Gamma^{\ast}_{E^{c}E}\xbar{\Delta}_{E} + \xbar{T_1}_{E^{c}}-\xbar{T_2}_{E^{c}}-\xbar{R}_{E^{c}}(\Delta)+\lambda^{\prime}_{n}\xbar{\widetilde{Z}}_{E^{c}} &= 0,\label{appeq: vectorized optimality wrt E^{c}}
\end{align}
where the latter equation follows by definition $\Delta_{E^{c}}=0$. From \eqref{appeq: vectorized optimality wrt E} solving for $\xbar{\Delta}_{E}$ gives us
\begin{align}\label{appeq: Delta_E}
    \xbar{\Delta}_{E} =  (\Gamma^{\ast}_{EE})^{-1}\underbrace{\left[\xbar{T_1}_{E}-\xbar{T_{2}}_{E}-\xbar{R}_{E}(\Delta)+\lambda^{\prime}_{n}\xbar{\widetilde{Z}}_{E}\right].}_{\triangleq M}
\end{align}
Substituting for $\xbar{\Delta}_{E}$ in \eqref{appeq: vectorized optimality wrt E^{c}} we have
\begin{align}\label{appeq: Z_{E^{c}}}
  \hspace{-3mm} \xbar{T_1}_{E^{c}}\!-\!\xbar{T_2}_{E^{c}}\!-\!\Gamma^{\ast}_{E^{c}E}{\Gamma^{\ast}_{EE}}^{-1}M\!-\!\xbar{R}_{E^{c}}(\Delta)\!-\!\lambda^{\prime}_{n}\xbar{\widetilde{Z}}_{E^{c}} = 0.
\end{align}
Solving for $\xbar{\widetilde{Z}}_{E^{c}}$ in \eqref{appeq: Z_{E^{c}}} we get
\begin{align}
   \hspace{-2mm} \xbar{\widetilde{Z}}_{E^{c}}\leq \frac{1}{\lambda_{n}^{\prime}}\left[ \Gamma^{\ast}_{E^{c}E}{\Gamma^{\ast}_{EE}}^{-1}M \!-\!\xbar{T_1}_{E^{c}}\!-\!\xbar{T_2}_{E^{c}}+\xbar{R}_{E^{c}}(\Delta)\right].
\end{align}
From this inequality the element-wise $\ell_{\infty}$ norm is bounded as
\begin{align}\label{appeq: element-wise of Z_{E^{c}}}
    \|\xbar{\widetilde{Z}}_{E^{c}}\|_{\infty}&\leq \frac{1}{\lambda_{n}^{\prime}}[\vertiii{\Gamma^{\ast}_{E^{c}E}(\Gamma^{\ast}_{EE})^{-1}}_{\infty}\|M\|_{\infty}+\|\xbar{T}_{1}\|_{\infty}\nonumber\\&+\|\xbar{T}_{2}\|_{\infty}+\|\xbar{R}(\Delta)\|_{\infty}].
\end{align}
The term $M$ in \eqref{appeq: Delta_E}, with the facts that $\|A_{E}\|_{\infty}\leq \|A\|_{\infty}$ and $\|\xbar{\widetilde{Z}}_{E}\|_{\infty}\leq 1$, satisfies: 
\begin{align}\label{appeq: bound M}
    \|M\|_{\infty}{\leq} \underbrace{\|\xbar{T_{1}}\|_{\infty}+ \|\xbar{T_{2}}\|_{\infty} + \|\xbar{R}(\Delta)\|_{\infty}}_{\triangleq H} + \lambda_{n}^{\prime}.
\end{align}
Finally, from Assumption \textbf{[A1]}, we have
\begin{align}
    \|\xbar{\widetilde{Z}}_{E^{c}}\|_{\infty}&\leq \frac{1}{\lambda_{n}^{\prime}}\left[(1-\alpha)(H+\lambda_{n}^{\prime}) +H \right]\\
    &=(1-\alpha)+\frac{2-\alpha}{\lambda_{n}^{\prime}}H. \label{appeq: bound Z in H}
\end{align}
We now upper bound $H$. Recall from \eqref{appeq: bound M} we have
\begin{align}\label{appeq: upper bound H}
    H = \|\xbar{T_{1}}\|_{\infty}+ \|\xbar{T_{2}}\|_{\infty} + \|\xbar{R}(\Delta)\|_{\infty}. 
\end{align}
Substituting for $\xbar{T_{1}}$ and $\xbar{T_{2}}$ from equation \eqref{appeq: T1 bar} and \eqref{appeq: T2 bar} in equation \eqref{appeq: upper bound H} we have,
\begin{align*}
    H &= \|\Gamma(\Psi_{1}\Delta)\xbar{W_{1}}+\Gamma(\Psi_{1}L^{\ast})\xbar{W_{1}}+\Gamma(\Psi_{1}\Delta)\xbar{\mathfrak{R}}(\Theta^{-1}_{Y})\|_{\infty}\\
    &+\|\Gamma(\Psi_{2}\Delta)\xbar{W_{2}}+\Gamma(\Psi_{2}L^{\ast})\xbar{W_{2}}+\Gamma(\Psi_{2}\Delta)\xbar{\mathfrak{I}}(\Theta^{-1}_{Y})\|_{\infty}\\
    &+\|\xbar{R}(\Delta)\|_{\infty}.
\end{align*}
From the sub-multiplicative property of $\|\cdot\|_\infty$-norm  
\begin{align*}
H&\leq \vertiii{\Gamma(\Psi_{1}\Delta)}_{\infty}\|W_{1}\|_{\infty}+ \vertiii{\Gamma(\Psi_{1}L^{\ast})}_{\infty}\|W_1\|_{\infty}\\&+\vertiii{\Gamma(\Psi_{1}\Delta)}_{\infty}\|\mathfrak{R}(\Theta^{-1}_{Y})\|_{\infty}
    +\vertiii{\Gamma(\Psi_{2}\Delta)}_{\infty}\|W_{2}\|_{\infty}\\&+ \vertiii{\Gamma(\Psi_{2}L^{\ast})}_{\infty}\|W_2\|_{\infty}+\vertiii{\Gamma(\Psi_{2}\Delta)}_{\infty}\|\mathfrak{I}(\Theta^{-1}_{Y})\|_{\infty}\\
    &+\|R(\Delta)\|_{\infty}.
\end{align*}
Further, $\max\{\vertiii{\mathfrak{R}(A)}_{\infty},\vertiii{\mathfrak{I}(A)}_{\infty}\}\leq \vertiii{A}_{\infty}$ for any $A\in \mathbb{C}^{p\times p}$. Thus, 
\begin{align}
    H &\leq 2\Big[\left(\vertiii{\Gamma(D^{2}\Delta)}_{\infty}+\vertiii{\Gamma(D^{2}L^{\ast})}_{\infty}\right)\|W\|_{\infty}\nonumber\\&+\vertiii{\Gamma(D^{2}\Delta)}_{\infty}\|\Theta^{-1}_{Y}\|_{\infty}]+\|R(\Delta)\|_{\infty} \Big]. 
\end{align}
Once again using the sub-multiplicative property of $\ell_{\infty}$-norm,
\begin{align}
    H&\leq 2\Big[(\nu_{D^{2}}\vertiii{\Delta}_{\infty}+\nu_{D^{2}}\nu_{L^{\ast}})\|W\|_{\infty}\nonumber\\&+\nu_{D^{2}}\vertiii{\Delta}_{\infty}\|\Theta^{-1}_{Y}\|_{\infty}+\|R(\Delta)\|_{\infty}\Big]\nonumber  \\
    &\overset{(a)}{\leq}2\Big[\nu_{D^{2}}(d\|\Delta\|_{\infty}+\nu_{L^{\ast}})\|W\|_{\infty}\nonumber\\&+\nu_{D^{2}}d\|\Delta\|_{\infty}\|\Theta^{-1}_{Y}\|_{\infty} + \|R(\Delta)\|_{\infty}  \Big]=H^{\prime}, \label{eq: appendix h prime}
\end{align}
where (a) follows from $\vertiii{\Delta}_{\infty}\leq d\|\Delta\|_{\infty}$ since there are at most $d$ non-zeros in every row in $\Delta$. Using the above bound in \eqref{eq: appendix h prime}, expression in \eqref{appeq: bound Z in H} becomes
\begin{align}
    \|\xbar{\widetilde{Z}}_{E^{c}}\|_{\infty}\leq (1-\alpha)+\frac{2-\alpha}{\lambda_{n}^{\prime}}H^{\prime}. 
\end{align}
Setting $H^{\prime}\leq \frac{\alpha\lambda_{n}^{\prime}}{4}$, we can conclude that $\|\xbar{\widetilde{Z}}_{E^{c}}\|_{\infty}<1$ (the strict dual feasibility condition) in the following way:
\begin{align*}
    \|\xbar{\widetilde{Z}}_{E^{c}}\|_{\infty}&\leq (1-\alpha)+\frac{2-\alpha}{\lambda_{n}^{\prime}}H^{\prime}\\
    &\leq (1-\alpha)+\frac{2-\alpha}{\lambda_{n}^{\prime}}\left(\frac{\alpha\lambda_{n}^{\prime}}{4}\right)\\
    &\leq (1-\alpha)+\frac{\alpha}{2}<1.
\end{align*}
This concludes the proof.
\end{proof}

\noindent The following lemma shows that the remainder term $R(\Delta)$ is bounded if $\Delta$ is bounded. The proof is adapted from \cite{ravikumar2011high_app, rayas_learning_2022_app}, where a similar result is derived using matrix expansion techniques. We omit the proof and refer the readers to \cite{ravikumar2011high_app, rayas_learning_2022_app}. This lemma is used in the proof of our main results (see Theorem \ref{appthm: Gaussian time series} and Theorem \ref{appthm: Linear Process support recovery}) to show that with a sufficient number of samples, $\|R(\Delta)\|_{\infty} \leq \alpha\lambda_{n}/24$.

\label{app: control remainder}
\begin{appendixlemma}\label{applma: control remainder}
    Suppose the $\ell_{\infty}$-norm $\|\Delta\|_{\infty}\leq \frac{1}{3\nu_{{L^{\ast}}^{-1}}d}$, then $\|R(\Delta)\|_{\infty}\leq \frac{3}{2}d\|\Delta\|_{\infty}^{2}\nu_{{L^{\ast}}^{-1}}^{3}$.
\end{appendixlemma}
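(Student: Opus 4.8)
The plan is to adapt the matrix-expansion argument of \cite[Lemma~5]{ravikumar2011high_app} to our setting. Write $\widetilde{L}=L^{\ast}+\Delta$. Iterating the resolvent identity $\widetilde{L}^{-1}-{L^{\ast}}^{-1}=-{L^{\ast}}^{-1}\Delta\,\widetilde{L}^{-1}$ once shows that the remainder in the second-order expansion of $\widetilde{L}^{-1}$ about $L^{\ast}$ has the closed form $R(\Delta)={L^{\ast}}^{-1}\Delta\,{L^{\ast}}^{-1}\Delta\,\widetilde{L}^{-1}$. Every matrix appearing here---$L^{\ast}$, $\widetilde{L}$, their inverses, and $\Delta$---is symmetric, and $\Delta$ is supported on the augmented edge set $E$, hence has at most $d$ nonzero entries per row and per column; this sparsity of $\Delta$ is what the whole bound hinges on.

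The first step is a range condition. Since $\vertiii{\Delta}_{\infty}\leq d\|\Delta\|_{\infty}$, sub-multiplicativity of $\vertiii{\cdot}_{\infty}$ together with the hypothesis $\|\Delta\|_{\infty}\leq 1/(3\nu_{{L^{\ast}}^{-1}}d)$ gives $\vertiii{{L^{\ast}}^{-1}\Delta}_{\infty}\leq\nu_{{L^{\ast}}^{-1}}d\|\Delta\|_{\infty}\leq 1/3$. Hence the Neumann series $\widetilde{L}^{-1}=(I+{L^{\ast}}^{-1}\Delta)^{-1}{L^{\ast}}^{-1}=\sum_{k\geq 0}(-1)^{k}({L^{\ast}}^{-1}\Delta)^{k}{L^{\ast}}^{-1}$ converges, and summing the geometric tail yields the auxiliary estimate $\vertiii{\widetilde{L}^{-1}}_{\infty}\leq\nu_{{L^{\ast}}^{-1}}/(1-\tfrac13)=\tfrac32\,\nu_{{L^{\ast}}^{-1}}$.

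The crux---and the step I expect to require the most care---is to bound $\|R(\Delta)\|_{\infty}$ so that only a \emph{single} factor of $d$ survives: a naive application of $\|AB\|_{\infty}\leq\vertiii{A}_{\infty}\|B\|_{\infty}$ to each factor of ${L^{\ast}}^{-1}\Delta\,{L^{\ast}}^{-1}\Delta\,\widetilde{L}^{-1}$ would spend the sparsity of $\Delta$ twice and leave a spurious $d^{2}$. Instead I would split $R(\Delta)=({L^{\ast}}^{-1}\Delta)\,\big({L^{\ast}}^{-1}\Delta\,\widetilde{L}^{-1}\big)$ and use the mixed inequality $\|AB\|_{\infty}\leq\|A\|_{\infty}\,\vertiii{B^{\transpose}}_{\infty}$. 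For the first factor, bounding each entry of $\Delta$ crudely by $\|\Delta\|_{\infty}$ and summing the entries of ${L^{\ast}}^{-1}$ over \emph{all} columns gives the $d$-free bound $\|{L^{\ast}}^{-1}\Delta\|_{\infty}\leq\nu_{{L^{\ast}}^{-1}}\|\Delta\|_{\infty}$. For the second factor, symmetry gives $B^{\transpose}=\widetilde{L}^{-1}\Delta\,{L^{\ast}}^{-1}$, and sub-multiplicativity of $\vertiii{\cdot}_{\infty}$ combined with $\vertiii{\Delta}_{\infty}\leq d\|\Delta\|_{\infty}$ and the auxiliary bound on $\vertiii{\widetilde{L}^{-1}}_{\infty}$ gives $\vertiii{B^{\transpose}}_{\infty}\leq\tfrac32\,\nu_{{L^{\ast}}^{-1}}\cdot d\|\Delta\|_{\infty}\cdot\nu_{{L^{\ast}}^{-1}}$. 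Multiplying the two estimates yields $\|R(\Delta)\|_{\infty}\leq\tfrac32\,d\,\|\Delta\|_{\infty}^{2}\,\nu_{{L^{\ast}}^{-1}}^{3}$, which is the claim. The remaining pieces---the resolvent-identity algebra and the geometric-series summation---are routine.
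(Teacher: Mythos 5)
Your proposal is correct and follows essentially the same route as the paper, which omits the argument and defers to the matrix-expansion proof of Lemma~5 in \cite{ravikumar2011high_app}: the closed form $R(\Delta)={L^{\ast}}^{-1}\Delta\,{L^{\ast}}^{-1}\Delta\,\widetilde{L}^{-1}$, the Neumann-series bound $\vertiii{\widetilde{L}^{-1}}_{\infty}\leq \tfrac{3}{2}\nu_{{L^{\ast}}^{-1}}$, and the mixed bound $\|AB\|_{\infty}\leq\|A\|_{\infty}\vertiii{B^{\transpose}}_{\infty}$ that spends the sparsity of $\Delta$ only once are exactly the ingredients of that argument. (Note only that your $R(\Delta)$ is the remainder of the first-order expansion, i.e.\ $R(\Delta)=\widetilde{L}^{-1}-{L^{\ast}}^{-1}+{L^{\ast}}^{-1}\Delta{L^{\ast}}^{-1}$, which is the intended definition; the sign in the paper's displayed definition appears to be a typo.)
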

We show that for a specific choice of radius $r$, the distortion defined as $\Delta = \widetilde{L}-L^{\ast}$ lies in a ball of radius $r$.

\label{app: control distortion}
\begin{appendixlemma}\label{applma: Control of Delta}(Control of $\Delta$) Let 
\begin{align}
    r&\triangleq 8\nu_{{\Gamma^{\ast}}^{-1}}[\nu_{D^{2}}\nu_{L^{\ast}}\|W\|_{\infty}+0.25\lambda_{n}] \quad\text{be such that}\nonumber \\ r&\leq\min\Big\{\frac{1}{3\nu_{{L^{\ast}}^{-1}}d},\frac{1}{6\nu_{{\Gamma^{\ast}}^{-1}}\nu_{{L^{\ast}}^{-1}}^{3}d}\Big\}.
\end{align}
 Then the element-wise $\ell_{\infty}$-bound $\|\Delta\|_{\infty} = \|\widetilde{L}-L^{\ast}\|_{\infty}\leq r$.   
\end{appendixlemma}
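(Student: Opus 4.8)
Since $\widetilde L_{E^c}=L^*_{E^c}=0$, we have $\|\Delta\|_\infty=\|\Delta_E\|_\infty$ with $\Delta_E=\widetilde L_E-L^*_E$, so it suffices to bound $\Delta_E$. The plan is a fixed-point argument in the spirit of \cite[Lemma 6]{ravikumar2011high_app}. Starting from the zero-subgradient condition of the restricted problem \eqref{appeq: rest whittle} and performing exactly the reductions that led to \eqref{appeq: vectorized optimality wrt E}--\eqref{appeq: Delta_E}, one sees that $\Delta_E$ satisfies the fixed-point equation $\Delta_E=F(\Delta_E)$, where
\[
F(\Delta)\;\triangleq\;(\Gamma^*_{EE})^{-1}\big[\,\xbar{T_1}_E-\xbar{T_2}_E-\xbar{R}_E(\Delta)+\lambda'_n\xbar{\widetilde Z}_E\,\big],\qquad \lambda'_n=\tfrac{\lambda_n}{2},
\]
with $T_1,T_2$ the (now $\Delta$-dependent) quantities in \eqref{appeq: T1 bar}--\eqref{appeq: T2 bar}, $R(\Delta)=(L^*+\Delta)^{-1}-{L^*}^{-1}-{L^*}^{-1}\Delta{L^*}^{-1}$, and $\widetilde Z_E$ the subgradient of $\|\cdot\|_{1,\text{off}}$ at $L^*+\Delta$ (zero on the diagonal, the $\mathrm{sign}$ of the entry on off-diagonal edges). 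On the closed $\ell_\infty$-ball $\mathbb B_r=\{A:\|A\|_\infty\le r\}$, $F$ is well defined and continuous: the hypotheses on $r$ force $\vertiii{\Delta}_\infty\le d r<\lambda_{\min}(L^*)$, so $L^*+\Delta\succ0$ and $R$ is smooth there, and for $r$ below $|L^*_{\min}|$ (which we may assume, consistent with the accompanying beta-min condition) no off-diagonal entry of $L^*$ changes sign on $\mathbb B_r$, so $\widetilde Z_E$ is the constant vector $\mathrm{sign}(L^*_E)$. Any fixed point of $F$ is a zero-subgradient solution of \eqref{appeq: rest whittle}; by the strict convexity and coercivity established in Lemma~\ref{applma: convexity and uniqueness} (which are inherited on the affine slice $L_{E^c}=0$), this solution is unique and equals $\Delta_E$.

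The heart of the argument is to show $F(\mathbb B_r)\subseteq\mathbb B_r$; Brouwer's fixed-point theorem then produces a fixed point of $F$ in $\mathbb B_r$, which by the uniqueness just noted must be $\Delta_E$, giving $\|\Delta\|_\infty\le r$. For $\|\Delta\|_\infty\le r$ we estimate
\[
\|F(\Delta)\|_\infty\;\le\;\vertiii{(\Gamma^*_{EE})^{-1}}_\infty\big(\|\xbar{T_1}\|_\infty+\|\xbar{T_2}\|_\infty+\|\xbar{R}(\Delta)\|_\infty+\lambda'_n\big),
\]
and for the bracket we reuse the decomposition already obtained inside the proof of Lemma~\ref{applma: Sufficiency of strict dual feasibility}, namely
\[
\|\xbar{T_1}\|_\infty+\|\xbar{T_2}\|_\infty+\|\xbar{R}(\Delta)\|_\infty\;\le\;2\nu_{D^2}\big(d\|\Delta\|_\infty+\nu_{L^*}\big)\|W\|_\infty+2\nu_{D^2}d\|\Delta\|_\infty\|\Theta_Y^{-1}\|_\infty+\|R(\Delta)\|_\infty.
\]
Two ingredients then close the bound. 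First, Lemma~\ref{applma: control remainder}: since $r\le 1/(3\nu_{{L^*}^{-1}}d)$, we get $\|R(\Delta)\|_\infty\le\tfrac32 d\|\Delta\|_\infty^2\nu_{{L^*}^{-1}}^3\le\tfrac32 dr^2\nu_{{L^*}^{-1}}^3$, whose contribution (after multiplying by $\vertiii{(\Gamma^*_{EE})^{-1}}_\infty\le\nu_{{\Gamma^*}^{-1}}$ and using the second hypothesis $r\le 1/(6\nu_{{\Gamma^*}^{-1}}\nu_{{L^*}^{-1}}^3 d)$) is at most $r/4$. Second, assumption \textbf{[A3]}, which bounds $\vertiii{(\Gamma^*_{EE})^{-1}}_\infty$ against $(d\nu_{D^2}\|\Theta_Y^{-1}\|_\infty C_\alpha)^{-1}$: the data-fit-Hessian term $\vertiii{(\Gamma^*_{EE})^{-1}}_\infty\cdot2\nu_{D^2}d\|\Delta\|_\infty\|\Theta_Y^{-1}\|_\infty$ then collapses to $O(r/C_\alpha)$, as does the residual linear-in-$W$ term, and both are negligible because $C_\alpha=1+24/\alpha$ is large. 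Finally, the remaining pieces $\nu_{{\Gamma^*}^{-1}}\big(2\nu_{D^2}\nu_{L^*}\|W\|_\infty+\lambda_n/2\big)$ equal exactly $r/4$ by the definition $r=8\nu_{{\Gamma^*}^{-1}}[\nu_{D^2}\nu_{L^*}\|W\|_\infty+\lambda_n/4]$. Summing the contributions gives $\|F(\Delta)\|_\infty\le r$.

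I expect the main difficulty to be the constant bookkeeping in this last step, and in particular the term $2\nu_{D^2}d\|\Delta\|_\infty\|\Theta_Y^{-1}\|_\infty$: it has no counterpart in the inverse-covariance analysis of \cite{ravikumar2011high_app}, where the objective is linear in the optimization variable, whereas here it is quadratic, and this term is exactly what forces the condition-number control \textbf{[A3]} on $\Gamma^*$ rather than a plain bound on $\vertiii{{\Gamma^*}^{-1}}_\infty$. A secondary technical point is the identification of fixed points of $F$ with minimizers of the restricted problem and the continuity of $F$ on $\mathbb B_r$, which hinges on $r$ being small enough relative to $\lambda_{\min}(L^*)$ and $|L^*_{\min}|$ so that positive definiteness and the off-diagonal sign pattern of $L^*$ are preserved throughout the ball.
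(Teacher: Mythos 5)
Your proposal follows essentially the same route as the paper's proof: you build the fixed-point map $F$ from the stationarity condition of the restricted problem \eqref{appeq: rest whittle}, show $F(\mathbb{B}_r)\subseteq\mathbb{B}_r$ by splitting into the $W$/$\lambda_n$ term (exactly $r/4$ by the definition of $r$), the remainder term controlled by Lemma~\ref{applma: control remainder} and the second hypothesis on $r$, and the quadratic-objective term controlled by \textbf{[A3]}, then invoke Brouwer's theorem and uniqueness of the fixed point, which is precisely the paper's argument. The only small deviations are cosmetic: the paper bounds the $\Delta$-dependent $\|W\|_{\infty}$ cross term via $\|W\|_{\infty}\leq r/(8\nu_{{\Gamma^{\ast}}^{-1}}\nu_{D^{2}}\nu_{L^{\ast}})$ (yielding $r/12$) rather than through \textbf{[A3]}, and it only needs $\|\widetilde{Z}_{E}\|_{\infty}\leq 1$ rather than your beta-min-type fixing of the subgradient sign pattern.
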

\begin{proof}
   We adopt the proof techniques from \cite{ravikumar2011high_app,rayas_learning_2022_app}. Let $G(\widetilde{L})$ be the zero sub-gradient condition of the restricted $\ell_{1}$-regularized Whittle likelihood estimator given in \eqref{appeq: rest whittle}:   
   \begin{align}\label{eq: G function}
       G(\widetilde{L}) = \Psi_{1}\widetilde{L}P_{1} - \Psi_{2}\widetilde{L}P_{2} -\widetilde{L}^{-1}+\lambda_{n}^{\prime}\widetilde{Z} = 0.
   \end{align}
where $\{\Psi_{1},P_1\}=\mathfrak{R}\{D^{2},P\}$ is the real part of $D^{2}$ and $P$ respectively. Similarly, $\{\Psi_{2},P_2\}=\mathfrak{I}\{D^{2},P\}$ is the imaginary part of $D^{2}$ and $P$ respectively. Recall that $\widetilde{L}$ is the primal solution of the $\ell_{1}$-regularized Whittle likelihood given in \eqref{appeq: rest whittle}, $\widetilde{Z}\in \partial\|\widetilde{L}\|_{1,\text{off}}$ is the sub-gradient and $\lambda_{n}^{\prime}=0.5\lambda_{n}$ is the regularization parameter.

For any matrix $A$, let $\xbar{A}$ or $\mvec(A)$ denote the vectorization of $A$ obtained by stacking the rows of $A$ and let $A_{E}$ or $[A]_{E}$ denote the sub-matrix of $A$ containing all elements $A_{ij}$ such that $(i,j)\in E$. Recall that the goal is to establish that $\|\Delta\|_{\infty}\leq r$, towards this it suffices to show that $\|\Delta_{E}\|_{\infty}\leq r$ since $\Delta_{E^{c}}=0$ from the primal dual witness construction. Equivalently we show $\xbar{\Delta}_{E}\in \mathcal{B}_{r}\triangleq\{\xbar{A}\in \mathbb{R}^{|E|}:\|A\|_{\infty}\leq r\}$. Towards this end we define a continuous vector valued map $F:\mathbb{R}^{|E|}\rightarrow \mathbb{R}^{{|E|}}$, given by
\begin{align}
    F(\xbar{\Delta}_{E}) = -(\Gamma^{\ast}_{EE})^{-1}[\xbar{G}(\Delta+L^{\ast})]_{E}+\xbar{\Delta}_{E},
\end{align}
where $G(\cdot)$ is given by \eqref{eq: G function}. We first outline the strategy to show $\|\Delta_{E}\|_{\infty}\in \mathcal{B}_{r}$, with radius $r$ specified in the lemma. We use the following two key properties (i) $\widetilde{L}$ that satisifes $\xbar{G}(\widetilde{L})=0$ is unique (see Lemma~\ref{applma: convexity and uniqueness}), since $\widetilde{L} = \Delta+L^{\ast}$, we have that $\Delta$ satisfies $\xbar{G}(\Delta+L^{\ast})=0$, (ii) $F(\xbar{\Delta}_{E}) = \xbar{\Delta}_{E}$ if and only if $\xbar{G}(\cdot) = 0$. From (i) and (ii) we conclude that $F$ has a unique fixed point $\xbar{\Delta}_{E}$. Now suppose $F(\mathcal{B}_{r})\subseteq\mathcal{B}_{r}$ is a contraction, then by Brower's fixed point theorem there exists a point $C\in \mathcal{B}_{r}$ such that $F(C)=C$ ie. $C$ is a fixed point. Since $\xbar{\Delta}_{E}$ is a unique fixed point of $F$, it follows that $C=\xbar{\Delta}_{E}\in \mathcal{B}_{r}$. It remains to show that $F$ is a contraction on $\mathcal{B}_{r}$. Let $\Delta^{\prime}\in \mathbb{R}^{p\times p}$ be a zero padded matrix on $E^{c}$ such that $\xbar{\Delta^{\prime}}_{E}\in \mathbb{B}_{r}$. We show that $\|F(\xbar{\Delta^{\prime}}_{E})\|_{\infty}\leq r$. In fact,  
\begin{align}
   \hspace{-2mm}& F(\xbar{\Delta^{\prime}}_{E}) = -(\Gamma^{\ast}_{EE})^{-1}[\xbar{G}(\Delta^{\prime}+L^{\ast})]_{E}+\xbar{\Delta^{\prime}}_{E}\nonumber\\
    &=-(\Gamma^{\ast}_{EE})^{-1}[\mvec(\Psi_{1}(\Delta^{\prime}+L^{\ast})P_1-\Psi_{2}(\Delta^{\prime}+L^{\ast})P_2\nonumber\\&-(\Delta^{\prime}+L^{\ast})^{-1}+\lambda^{\prime}Z)]_{E}+\xbar{\Delta^{\prime}}_{E}\nonumber\nonumber\nonumber\\
    &\overset{(a)}{=}-{\Gamma^{\ast}_{EE}}^{-1}[\mvec(\Psi_{1}(\Delta^{\prime}+L^{\ast})W_1-\Psi_{2}(\Delta^{\prime}+L^{\ast})W_2\nonumber\nonumber\\&+\Psi_{1}\Delta^{\prime}\mathfrak{R}(\Theta^{-1}_{Y})-\Psi_{2}\Delta^{\prime}\mathfrak{I}(\Theta^{-1}_{Y}))]_{E}\nonumber\\&
    -{\Gamma^{\ast}_{EE}}^{-1}[\xbar{\Psi_{1}L^{\ast}\mathfrak{R}(\Theta^{-1}_{Y})}-\xbar{\Psi_{2}L^{\ast}\mathfrak{I}(\Theta^{-1}_{Y})}-\xbar{(\Delta^{\prime}+L^{\ast})^{-1}}]_{E}\nonumber\\&+{\Gamma^{\ast}_{EE}}^{-1}\Big[\Gamma^{\ast}_{EE}\xbar{\Delta^{\prime}}_{E}+\lambda_{n}^{\prime}\xbar{Z}_{E}\Big],\label{appeq: contraction mapping}
\end{align}
where (a) follows from substituting $P_{1} = W_1+\mathfrak{R}(\Theta^{-1}_{Y})$ and $P_{2} = W_2+\mathfrak{I}(\Theta^{-1}_{Y})$. As shown in the proof of Lemma \ref{applma: Sufficiency of strict dual feasibility}, we have $\Psi_{1}L^{\ast}\mathfrak{R}(\Theta^{-1}_{Y})-\Psi_{2}L^{\ast}\mathfrak{I}(\Theta^{-1}_{Y}) = {L^{\ast}}^{-1}$, with this equation\eqref{appeq: contraction mapping} becomes,
\begin{align}
         &F(\xbar{\Delta^{\prime}}_{E}) =-{\Gamma^{\ast}_{EE}}^{-1}[\mvec(\Psi_{1}(\Delta^{\prime}+L^{\ast})W_1-\Psi_{2}(\Delta^{\prime}+L^{\ast})W_2\nonumber\\&+\Psi_{1}\Delta^{\prime}\mathfrak{R}(\Theta^{-1}_{Y})-\Psi_{2}\Delta^{\prime}\mathfrak{I}(\Theta^{-1}_{Y}))+\lambda_{n}^{\prime}\xbar{Z}_{E}]_{E}\nonumber\\&
        -{\Gamma^{\ast}_{EE}}^{-1}\Big[[\mvec({L^{\ast}}^{-1}-(\Delta^{\prime}+L^{\ast})^{-1})]_{E}+\Gamma^{\ast}_{EE}\xbar{\Delta^{\prime}}_{E}\Big]\nonumber.
\end{align}
By definition, the vectorized expression, 
\begin{align*}
    \mvec({L^{\ast}}^{-1}-(\Delta^{\prime}+L^{\ast})^{-1})]_{E} = -\xbar{R(\Delta^{\prime})}_{E}.
\end{align*}
Thus $F(\xbar{\Delta^{\prime}}_{E})$ becomes,
\begin{align}
        &F(\xbar{\Delta^{\prime}}_{E}) = \Big[\overbrace{-(\Gamma^{\ast}_{EE})^{-1}\mvec(\Psi_{1}L^{\ast}W_1-\Psi_{2}L^{\ast}W_2+\lambda_{n}^{\prime}Z)}^{T1}\Big]_{E} \nonumber\\&
       \!+\!\overbrace{\Big[{\Gamma^{\ast}_{EE}}^{-1}\xbar{R}(\Delta^{\prime})}^{T2}\Big]_{E}
   \hspace{-2px}\!-\! \Big[\overbrace{(\Gamma^{\ast}_{EE})^{-1}\mvec(\Psi_{1}\Delta^{\prime}W_{1}\!-\!\Psi_{2}\Delta^{\prime}W_{2})}^{T3}\Big]_{E}\nonumber\\&-\overbrace{\Big[(\Gamma^{\ast}_{EE})^{-1}\mvec(\Psi_{1}\Delta^{\prime}\Theta^{-1}_{Y}-\Psi_{2}\Delta^{\prime}\Theta_{Y}^{-1})}^{T4}\Big]_{E}. 
\end{align}
We now show that $\Vert F(\xbar{\Delta^{\prime}}_{E})\Vert_{\infty}\leq r$ by bounding the $\ell_\infty$-norms of the terms $(T_1)$-$(T_4)$ defined above. Recall that $\nu_{A} = \vertiii{A}_{\infty}\triangleq \max_{j=1,\ldots,p}\sum_{j=1}^{p}\vert A_{ij}\vert$ and it is sub-multiplicative; that is $\vertiii{AB}_\infty\leq \vertiii{A}_\infty\vertiii{B}_\infty$. Notice that this is not true for the max norm ($\ell_\infty$). Recall also that $\Gamma(AB) = (I\otimes AB)$.

\smallskip 
(i) \emph{Upper bound on $\|T_1\|_{\infty}$}: Consider the following chain of inequalities.
\begin{align*}
  &\|T_1\|_{\infty} = \|{\Gamma^{\ast}_{EE}}^{-1}\left[\mvec(\Psi_{1}L^{\ast}W_1-\Psi_{2}L^{\ast}W_2+\lambda_{n}^{\prime}Z)\right]_{E}\|_{\infty} \\
  &\overset{(a)}{\leq} \vertiii{(\Gamma^{\ast}_{EE})^{-1}}_{\infty}\Big[\|\Gamma(\Psi_{1}L^{\ast})\xbar{W}_{1}+\Gamma(\Psi_{2}L^{\ast})\xbar{W}_{2}+\lambda_{n}^{\prime}\xbar{Z}\|_{\infty}  \Big]\\
  &\overset{(b)}{\leq} \nu_{{\Gamma^{\ast}}^{-1}}\Big[\nu_{\Psi_{1}}\nu_{L^{\ast}}\|W_{1}\|_{\infty}+ \nu_{\Psi_{2}}\nu_{L^{\ast}}\|W_{2}\|_{\infty} +\lambda_{n}^{\prime}\Big]\\
  &\overset{(c)}{\leq}2\nu_{{\Gamma^{\ast}}^{-1}}\Big[\nu_{D^{2}}\nu_{L^{\ast}}\|W\|_{\infty}+0.5\lambda_{n}^{\prime}\Big]\overset{(d)}{\leq}r/4,
\end{align*}
where (a) follows because $\|Av\|_{\infty}\leq \vertiii{A}_{\infty}\|v\|_{\infty}$; (b) follows from applying triangle inequality on the element-wise $\ellinf$-norm and $\|Z\|_{\infty}\leq 1$, where $Z$ is the sub-gradient is in Lemma \ref{applma: convexity and uniqueness}; (c) for any complex matrix $\|A\|_{\infty}\geq \max\{\|\mathfrak{R}(A)\|_{\infty},\|\mathfrak{I}(A)\|_{\infty}\}$; and (d) from the definition of radius $r$ in Lemma \ref{applma: Control of Delta}.

(ii) \emph{Upper bound on $\|T_2\|_{\infty}$}: Consider the inequality:
\begin{align*}
    \|T_2\|_{\infty}&=\|(\Gamma^{\ast}_{EE})^{-1}\xbar{R}(\Delta^{\prime})_{E}\|_{\infty}\\
    &\overset{(a)}{\leq} \frac{3}{2}\nu_{{\Gamma^{\ast}}^{-1}}d\nu^{3}_{{L^{\ast}}^{-1}}\|\Delta^{\prime}\|^{2}_{\infty}\\
    &\overset{(b)}{\leq}\frac{3}{2}\nu_{{\Gamma^{\ast}}^{-1}}d\nu^{3}_{{L^{\ast}}^{-1}}r^{2}=\left(\frac{3}{2}\nu_{{\Gamma^{\ast}}^{-1}}d\nu^{3}_{{L^{\ast}}^{-1}}r\right)r\\
    &\overset{(c)}{\leq}\frac{r}{4},
\end{align*}
where inequality (a) follows because Lemma \ref{applma: control remainder} guarantees that $\Vert R(\Delta^{\prime})\Vert_{\infty}\leq (3/2)d\nu_{{L^{\ast}}^{-1}}^{3}\Vert \Delta^{\prime}\Vert^{2}_{\infty}$ whenever $\Vert \Delta^{\prime}\Vert_{\infty}\leq 1/(3d\nu_{{L^{\ast}}^{-1}})$. The latter inequality is a consequence of the hypothesis in Lemma \ref{applma: Control of Delta}; (b) follows by construction $\Delta^{\prime}\in \mathbb{B}_{r}$, and hence, $\Vert\ \Delta^{\prime}\Vert_{\infty}\leq r$; (c) follows by invoking the hypothesis in Lemma \ref{applma: Control of Delta}, where $r$ satisfies $r\leq 1/(6d\nu_{{\Gamma^{\ast}}^{-1}}\nu_{{L^{\ast}}^{-1}}^{3})$. 

\smallskip 
(iii) \emph{Upper bound on $\|T_3\|_{\infty}$}: Consider the inequality:
\begin{align*}
   \|T_{3}\|_{\infty}&= \|-(\Gamma^{\ast}_{EE})^{-1}\left[\mvec(\Psi_{1}\Delta^{\prime}W_{1}-\Psi_{2}\Delta^{\prime}W_{2})\right]_{E}\|_{\infty}\\
   &\leq \nu_{{\Gamma^{\ast}}^{-1}}\|\left[\Gamma(\Psi_{1}\Delta^{\prime})W_1-\Gamma(\Psi_{2}\Delta^{\prime})W_2\right]_{E}\|_{\infty}\\
   &\leq \nu_{{\Gamma^{\ast}}^{-1}}\left[\nu_{\Psi_{1}}\vertiii{\Delta^{\prime}}_{\infty}\|W_1\|_{\infty} +\nu_{\Psi_{2}}\vertiii{\Delta^{\prime}}_{\infty}\|W_2\|_{\infty}\right]\\
   &\overset{(a)}{\leq}2\nu_{{\Gamma^{\ast}}^{-1}}\nu_{D^{2}}\vertiii{\Delta^{\prime}}_{\infty}\|W\|_{\infty}\\
   &\overset{(b)}{\leq}2\nu_{{\Gamma^{\ast}}^{-1}}\nu_{D^{2}}d\|\Delta^{\prime}\|_{\infty}\|W\|_{\infty}\\
   &\overset{(c)}{\leq}2\nu_{{\Gamma^{\ast}}^{-1}}\nu_{D^{2}}\|W\|_{\infty}d\left(\frac{1}{3d\nu_{{L^{\ast}}^{-1}}}\right)\\
   &\overset{(d)}{\leq}\frac{r}{4\nu_{L^{\ast}}}\left(\frac{1}{3\nu_{{L^{\ast}}^{-1}}}\right)=\frac{r}{12}\left(\frac{1}{\nu_{{L^{\ast}}^{-1}}\nu_{L^{\ast}}}\right)\overset{(e)}{\leq}\frac{r}{12}\leq\frac{r}{4},
\end{align*}
where (a) follows since for any complex matrix $\|A\|_{\infty}\geq \max\{\|\mathfrak{R}(A)\|_{\infty},\|\mathfrak{I}(A)\|_{\infty}\}$; (b) follows because by construction $\Delta^{\prime}$ has at-most $d$ non-zeros in every row and that $\vertiii{\Delta^{\prime}}_{\infty}\leq d\Vert\Delta^{\prime}\Vert_{\infty}$; (c) follows because $\Delta^{\prime}$ is a zero-padded matrix of $\Delta$. Hence $\Vert\Delta \Vert=\Vert \Delta^{\prime}\Vert_{\infty}\leq r$, which can be upper bounded by $1/(3d\nu_{{L^{\ast}}^{-1}})$ in light of the hypothesis in Lemma \ref{applma: Control of Delta};  (d) follows from the choice of $r = 8\nu_{{\Gamma^{\ast}}^{-1}}(\nu_{D^{2}}\nu_{L^{\ast}}\Vert W\Vert_{\infty} + \lambda^{\prime}_{n})$ in Lemma \ref{applma: Control of Delta}, which is lower bounded by  $8\nu_{{\Gamma^{\ast}}^{-1}}\nu_{D^{2}}\nu_{L^{\ast}}\Vert W\Vert_{\infty}$, for all $\lambda^{\prime}_{n}\geq 0$. Thus, 
$\Vert W\Vert_{\infty}\leq r/(8\nu_{{\Gamma^{\ast}}^{-1}}\nu_{D^{2}}\nu_{L^{\ast}})$; and finally,
(e) follows because $\nu_{L^{\ast}}\nu_{{L^{\ast}}^{-1}}\geq 1$.

(iv) \emph{Upper bound on $\|T_4\|_{\infty}$}: Consider the inequality: 
\begin{align*}
   &\|T_4\|_{\infty}=\|(\Gamma^{\ast}_{EE})^{-1}\left[ \mvec(\Psi_{1}\Delta^{\prime}\Theta^{-1}_{Y}-\Psi_{2}\Delta^{\prime}\Theta_{Y}^{-1})\right]_{E}\|_{\infty}\\
   &\leq \nu_{{\Gamma^{\ast}}^{-1}}\Big[\|\Gamma(\Psi_{1}\Delta^{\prime})\mathfrak{R}(\Theta_{Y}^{-1})\|_{\infty}+\|\Gamma(\Psi_{2}\Delta^{\prime})\mathfrak{I}(\Theta_{Y}^{-1})\|_{\infty} \Big]\\
   &\leq \nu_{{\Gamma^{\ast}}^{-1}}\Big(d\nu_{D^2}\|\Delta^{\prime}\|_{\infty}\Big(\|\mathfrak{R}(\Theta^{-1}_{Y})\|_{\infty}+\mathfrak{I}(\Theta^{-1}_{Y})\|_{\infty}\Big)\\
   &\leq 2\nu_{{\Gamma^{\ast}}^{-1}}\nu_{D^{2}}dr\|\Theta^{-1}_{Y}\|_{\infty}\\
   &\overset{(a)}{\leq} \frac{r}{2C_{\alpha}}\overset{(b)}{\leq}\frac{r}{4},
\end{align*}
where (a) follows from the condition number of the Hessian $\textbf{[A3]}$; (b) follows since $C_{\alpha} = 1+12/\alpha>2$, for $\alpha\in (0,1]$.
Putting together the pieces, we conclude that $F(\Delta^{\prime}_{E})\Vert_{\infty}\leq \sum_{i=1}^{4}\Vert T_{i}\Vert_{\infty} \leq r$, therefore $F$ is a contraction as claimed.
\end{proof}

\setcounter{appendixtheorem}{0}

\begin{appendixtheorem}\label{appthm: Gaussian time series}
    Let the injections $X_{t}$ be a WSS Gaussian time series. Consider a single Fourier frequency $\omega_{j}\in[-\pi,\pi]$. Suppose that assumptions in $\textbf{[A1-A3]}$ hold. Define $\alpha>0$ and $C_{\alpha} =  1+24/\alpha$. Let 
$\lambda_{n} = 96\nu_{D^2}\nu_{L^{\ast}}\delta_{\Theta_{Y}^{-1}}(m,n,p)/\alpha$ and the bandwidth parameter $m\geq \vertiii{\Theta_{Y}^{-1}}_{\infty}^{2}\zeta^{2}d^{2}\log p$, where 
$\zeta = \max\{\nu_{{\Gamma^{\ast}}^{-1}}\nu_{{L^{\ast}}^{-1}}\nu_{L^{\ast}}\nu_{D^{2}}C^{2}_{\alpha},\nu^{2}_{{\Gamma^{\ast}}^{-1}}\nu^{3}_{{L^{\ast}}^{-1}}\nu_{L^{\ast}}\nu_{D^{2}}C^{2}_{\alpha}\}$. 
If the sample size $n\geq 144\Omega_{n}(\Theta_{Y}^{-1})\zeta md$. Then with probability greater than $1-1/p^{\tau-2}$, for some $\tau>2$, we have 

    \begin{enumerate}[label=(\alph*)]
        \item $\widehat{L}$ exactly recovers the sparsity structure ie. $\widehat{L}_{E^{c}}=0$.
        \item The estimate $\widehat{L}$ which is the solution of \eqref{eq: l_1 regularized Whittle likelihood} satisfies 
        \begin{align}
            \|\widehat{L}-L^{\ast}\|_{\infty} \leq 8\nu^{\prime}\delta_{\Theta_{Y}^{-1}}(m,n,p).
        \end{align}
        \item $\widehat{L}$ satisfies sign consistency if:
        \begin{align}
            |L^{\ast}_{\min}(E)|\geq 8\nu^{\prime}\delta_{\Theta_{Y}^{-1}}(m,n,p),
        \end{align}
    \end{enumerate}
where, $\nu^{\prime} = \nu_{{\Gamma^{\ast}}^{-1}}\nu_{D^{2}}\nu_{L^{\ast}}C_{\alpha}$ and
\begin{align*}
    \delta_{\Theta_{Y}^{-1}}(m,n,p) \!=\!\sqrt{\frac{\log p}{m}} +\frac{m+\frac{1}{2\pi}}{n}\Omega_{n}(\Theta_{Y}^{-1})+\frac{1}{2\pi}L_{n}(\Theta_{Y}^{-1}). 
\end{align*} 
\end{appendixtheorem}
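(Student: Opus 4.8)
The plan is to run the primal--dual witness (PDW) argument that Lemmas~\ref{applma: convexity and uniqueness}--\ref{applma: Control of Delta} have already assembled, supplying the one probabilistic ingredient those lemmas take as given: a high-probability bound on the periodogram deviation $W \defeq P - \Theta_Y^{-1}$ in the element-wise max norm. Once $\|W\|_\infty$ is controlled on a good event $\mathcal{A}$, the rest is verifying that the stated choices $\lambda_n = 96\nu_{D^{2}}\nu_{L^{\ast}}\delta_{\Theta_{Y}^{-1}}(m,n,p)/\alpha$, $m \geqsim \vertiii{\Theta_{Y}^{-1}}_{\infty}^{2}\zeta^{2}d^{2}\log p$, and $n \geqsim \Omega_{n}(\Theta_{Y}^{-1})\zeta m d$ make the hypotheses of Lemmas~\ref{applma: Sufficiency of strict dual feasibility}--\ref{applma: Control of Delta} fire, and then reading off (a)--(c).

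\emph{Step 1 (concentration of the averaged periodogram).} Split $W = (P - \mathbb{E}P) + (\mathbb{E}P - \Theta_Y^{-1})$. For the stochastic part, each entry of $P(\omega_j)$ is a $(2m+1)$-fold normalized sum of complex quadratic forms in the DFT ordinates $d(\omega_{j+k})$; using that these ordinates are asymptotically independent complex Gaussians with covariance $\approx f_Y$ (here assumption \textbf{[A2]}, short-range dependence, is what makes the approximation quantitative), a Bernstein/Hanson--Wright-type bound for a single entry followed by a union bound over the $p^{2}$ entries gives $\|P - \mathbb{E}P\|_\infty \lesssim \sqrt{\tau\log p / m}$ with probability at least $1 - p^{-(\tau-2)}$; note that only one Fourier frequency is fixed, so no union over $\mathcal{F}_n$ is needed. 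For the bias part, expanding $f_Y = \Theta_Y^{-1}$ across the smoothing window produces a first-order contribution controlled by $\tfrac{m+1/(2\pi)}{n}\,\Omega_n(\Theta_Y^{-1})$, and the finite-$n$ truncation of the DFT contributes $\tfrac{1}{2\pi}L_n(\Theta_Y^{-1})$; both are finite by \textbf{[A2]}. Absorbing absolute constants, on $\mathcal{A}$ (with $\mathbb{P}(\mathcal{A}) \geq 1 - p^{-(\tau-2)}$) we have $\|W\|_\infty \leq \delta_{\Theta_Y^{-1}}(m,n,p)$. This is the per-frequency spectral analogue of the i.i.d.\ sample-covariance deviation bound, and is the only place randomness enters.

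\emph{Step 2 (verifying the deterministic hypotheses).} On $\mathcal{A}$, feed $\|W\|_\infty \leq \delta_{\Theta_Y^{-1}}$ and the chosen $\lambda_n$ into the radius of Lemma~\ref{applma: Control of Delta}: $r = 8\nu_{{\Gamma^{\ast}}^{-1}}[\nu_{D^{2}}\nu_{L^{\ast}}\|W\|_\infty + \lambda_n/4] \leq 8\nu_{{\Gamma^{\ast}}^{-1}}\nu_{D^{2}}\nu_{L^{\ast}}(1+24/\alpha)\delta_{\Theta_Y^{-1}} = 8\nu'\delta_{\Theta_Y^{-1}}$, which is exactly the quantity appearing in parts (b)--(c), so the choice of $\lambda_n$ is calibrated precisely for this. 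The bandwidth condition pushes the $\sqrt{\log p/m}$ part of $\delta_{\Theta_Y^{-1}}$ below $(\vertiii{\Theta_{Y}^{-1}}_{\infty}\zeta d)^{-1}$ and the sample-size condition pushes the bias part below the same order; since $\zeta$ dominates both $\nu_{{\Gamma^{\ast}}^{-1}}\nu_{{L^{\ast}}^{-1}}\nu_{L^{\ast}}\nu_{D^{2}}C_\alpha^2$ and $\nu_{{\Gamma^{\ast}}^{-1}}^2\nu_{{L^{\ast}}^{-1}}^3\nu_{L^{\ast}}\nu_{D^{2}}C_\alpha^2$, this yields $8\nu'\delta_{\Theta_Y^{-1}} \leq \min\{1/(3\nu_{{L^{\ast}}^{-1}}d),\, 1/(6\nu_{{\Gamma^{\ast}}^{-1}}\nu_{{L^{\ast}}^{-1}}^3 d)\}$, so Lemma~\ref{applma: Control of Delta} applies and $\|\Delta\|_\infty \leq r \leq 8\nu'\delta_{\Theta_Y^{-1}}$, and Lemma~\ref{applma: control remainder} gives $\|R(\Delta)\|_\infty \leq \tfrac{3}{2}d\nu_{{L^{\ast}}^{-1}}^3 r^2$. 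Then check that each of the three quantities inside the max of Lemma~\ref{applma: Sufficiency of strict dual feasibility} is $\leq \alpha\lambda_n/24$: using $d\|\Delta\|_\infty \leq \nu_{L^{\ast}}$ (again from $\delta_{\Theta_Y^{-1}}$ being small), the $W$-term is $\leq 4\nu_{D^{2}}\nu_{L^{\ast}}\delta_{\Theta_Y^{-1}} = \alpha\lambda_n/24$; the $R(\Delta)$-term is $O(d\,\nu'^2\nu_{{L^{\ast}}^{-1}}^3\delta_{\Theta_Y^{-1}}^2)$, which is $\leq \alpha\lambda_n/24$ exactly when $\delta_{\Theta_Y^{-1}} \lesssim 1/(d\zeta)$, i.e.\ under the bandwidth/sample conditions; and the $\Theta_Y^{-1}$-term is handled by combining $\|\Delta\|_\infty \leq 8\nu'\delta_{\Theta_Y^{-1}}$ with the condition-number bound \textbf{[A3]}, which is what makes $d\nu_{D^{2}}\|\Theta_Y^{-1}\|_\infty C_\alpha$ absorbable. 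Hence Lemma~\ref{applma: Sufficiency of strict dual feasibility} gives $\|\widetilde Z_{E^c}\|_\infty < 1$ and $\widehat L = \widetilde L$.

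\emph{Step 3 (conclusions) and the main obstacle.} Since $\widetilde L_{E^c} = 0$, part (a) is immediate; part (b) is $\|\widehat L - L^{\ast}\|_\infty = \|\Delta\|_\infty \leq 8\nu'\delta_{\Theta_Y^{-1}}$; and part (c) follows because for $(i,j)\in E$ the bound $|\widehat L_{ij} - L^{\ast}_{ij}| \leq 8\nu'\delta_{\Theta_Y^{-1}} \leq |L^{\ast}_{ij}|$ forbids both a sign flip and a spurious zero, which together with (a) gives exact signed-support recovery. All of this holds on $\mathcal{A}$, hence with probability at least $1 - p^{-(\tau-2)}$. The genuinely new work is Step~1: the per-frequency concentration of the \emph{averaged} (smoothed) periodogram for a Gaussian WSS process, where one must separately tame the stochastic fluctuation of a sum of only asymptotically-independent complex quadratic forms and the smoothing-plus-truncation bias, and where \textbf{[A2]} is essential to bound the latter. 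The secondary difficulty is the constant bookkeeping of Step~2 --- showing that the single scalar $\zeta$ together with the condition-number bound \textbf{[A3]} is strong enough that all three strict-dual-feasibility inequalities hold simultaneously under one fixed $\lambda_n$.
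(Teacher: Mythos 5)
Your overall architecture coincides with the paper's: condition on the event $\{\|W\|_{\infty}\leq \delta_{\Theta_Y^{-1}}(m,n,p)\}$, set $\lambda_n = 96\nu_{D^2}\nu_{L^\ast}\delta_{\Theta_Y^{-1}}/\alpha$ so that the radius of Lemma~\ref{applma: Control of Delta} collapses to $8\nu'\delta_{\Theta_Y^{-1}}$, use the bandwidth and sample-size conditions to make $\delta_{\Theta_Y^{-1}}$ small enough (relative to $\zeta d$) that the hypotheses of Lemmas~\ref{applma: control remainder} and \ref{applma: Control of Delta} hold, verify the three strict-dual-feasibility bounds of Lemma~\ref{applma: Sufficiency of strict dual feasibility} (the $C_\alpha^2$ inside $\zeta$ is exactly what absorbs the quadratic remainder term, as you note), and read off (a)--(c). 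This is the paper's proof of Theorem~\ref{appthm: Gaussian time series} almost step for step, including the translation of the three pieces of $\delta_{\Theta_Y^{-1}}$ into the conditions on $m$ and $n$ and the use of \textbf{[A2]} for the $L_n$ term.

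The one place you diverge is Step~1, and that is also where the only genuine gap sits. The paper does not prove concentration of the averaged periodogram; it invokes Lemma~\ref{lma: averaged periodogram error for Gaussian time series} (the known finite-sample bound of Sun--Li--Basu for smoothed periodograms of Gaussian WSS processes), which delivers exactly the threshold $\delta_{\Theta_Y^{-1}}(m,n,p)$ with probability $1-c'p^{-(cR-2)}$. Your sketch --- DFT ordinates are ``asymptotically independent complex Gaussians,'' then Hanson--Wright plus a union bound, then a bias expansion --- is only heuristic as written: asymptotic Gaussianity and asymptotic independence of the $d(\omega_{j+k})$ give no explicit tail bound at a fixed finite $n$, and obtaining the precise bias terms $\tfrac{m+1/(2\pi)}{n}\Omega_n(\Theta_Y^{-1})$ and $\tfrac{1}{2\pi}L_n(\Theta_Y^{-1})$ requires a careful finite-sample computation (this is the content of the cited concentration lemma, not a routine consequence of \textbf{[A2]}). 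So if you intend the proof to be self-contained, Step~1 needs to be carried out in full; if you instead cite the existing periodogram deviation bound, your argument is essentially identical to the paper's and is complete.
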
 
\begin{proof}\label{app: Gaussian time series}
    Our goal is to derive the sufficient conditions on the tuple $(n,m,p,d)$ to establish support recovery, error norm bound and sign consistency of the estimator $\widehat{L}$. We begin by showing that with an optimal selection of the regularization parameter, the primal solution $\widetilde{L}$ of \eqref{appeq: rest whittle} is equal to $\widehat{L}$ of the original $\ell_{1}$-regularized problem \eqref{appeq: l_1 regularized whittle likelihood} by showing that the primal dual witness construction succeeds with high probability. Towards this, we proceed by verifying the sufficient conditions of the strict dual feasibility. From Lemma \ref{applma: Sufficiency of strict dual feasibility}, the sufficient conditions for strict dual feasibility imply that 
    \begin{align}
    T_1 &= 2\nu_{D^{2}}(d\|\Delta\|_{\infty}+\nu_{L^{\ast}})\|W\|_{\infty}\leq \frac{\alpha\lambda_{n}}{24}\label{appeq: suff condn for strict dual T1}\\
    T_2& = \|R(\Delta)\|_{\infty}\leq \frac{\alpha\lambda_{n}}{24}\label{appeq: suff condn for strict dual T2}\\
T_3&=2\nu_{D^{2}}d\|\Delta\|_{\infty}\|\Theta^{-1}_{Y}\leq \frac{\alpha\lambda_{n}}{24}.\label{appeq: suff condn for strict dual T3}
\end{align}
Let $\mathcal{A}$ denote the event that $\|W\|_{\infty}\leq \delta_{\Theta^{-1}_{Y}}(n,m,p)$, where $W$ is the measure of noise in the averaged periodogram. From this point forward, we will adopt a slight abuse of notation by using $\delta$ instead of $\delta_{\Theta^{-1}_{Y}}$. We condition on the event $\mathcal{A}$ in the analysis that follows. We proceed by first choosing the regularization parameter as $\lambda_{n} = 96\nu_{D^{2}}\nu_{L^{\ast}}\delta/{\alpha}$ , the radius $r$ defined in Lemma \ref{applma: Control of Delta} satisfies the bound
\begin{align}
r&=8\nu_{{\Gamma^\ast}^{-1}}\left[\nu_{D^{2}}\nu_{L^{\ast}}\|W\|_{\infty}+\frac{24\nu_{D^{2}}\nu_{L^\ast}\delta}{\alpha}\right]\nonumber \\
    &\overset{(a)}{\leq}  8\nu_{{\Gamma^\ast}^{-1}}\nu_{D^{2}}\nu_{L^{\ast}}C_{\alpha}\delta,\label{appeq: radius}
\end{align}
where (a) follows from conditioning on event $\mathcal{A}$ and $C_\alpha=1+24/\alpha$. We proceed to select $\delta$ according to the following criterion, which is permissible since $\delta$ can be made arbitrarily small with a sufficient number of samples. The specific conditions on the tuple $(n, m, p)$ to attain such a $\delta$ will be derived subsequently. Choose $\delta$ such that
\begin{align}\label{appeq: choice of delta}
\hspace{-2mm}8\nu_{{\Gamma^\ast}^{-1}}\nu_{D^{2}}\nu_{L^{\ast}}C^{2}_{\alpha}\delta\leq\min\left\{\frac{1}{3\nu_{{L^{\ast}}^{-1}}d},\frac{1}{6\nu_{{\Gamma^\ast}^{-1}}\nu^{3}_{{L^{\ast}}^{-1}}d}\right\}.
\end{align}
Substituting this choice of $\delta$ in \eqref{appeq: radius} we get
\begin{align}\label{appeq: radius constrained}
    r\leq \min\left\{\frac{1}{3\nu_{{L^{\ast}}^{-1}}d},\frac{1}{6\nu_{{\Gamma^\ast}^{-1}}\nu^{3}_{{L^{\ast}}^{-1}}d}\right\}.
\end{align}
We now have the necessary ingredients to verify the sufficient condition for the strict dual feasibility condition \eqref{appeq: suff condn for strict dual T1}-\eqref{appeq: suff condn for strict dual T3}.

\smallskip 
(i) \emph{Upper bound on $T_1$}:
\begin{align}
    T_{1}&=2\nu_{D^{2}}(d\|\Delta\|_{\infty}+\nu_{L^{\ast}})\|W\|_{\infty}\\
    &\overset{(a)}{\leq} 2\nu_{D^{2}}(d\|\Delta\|_{\infty}+\nu_{L^{\ast}})\delta\\
    &\overset{(b)}{\leq} 2\nu_{D^{2}}\nu_{L^{\ast}}\left(\frac{d}{3\nu_{L^{\ast}}\nu_{{L^{\ast}}^{-1}}d}+1\right)\delta\\
    &\overset{(c)}{\leq}2\nu_{D^{2}}\nu_{L^{\ast}}\left(\frac{1}{3}+1\right)\frac{\alpha\lambda_{n}}{96\nu_{D^{2}}\nu_{L^{\ast}}}\\
    &\leq \frac{\alpha\lambda_{n}}{24},
\end{align}
where (a) follows from conditioning on the event $\mathcal{A}$ i.e., $\|W\|_{\infty}\leq \delta$; (b) since the radius $r$ in \eqref{appeq: radius constrained} satisfies condition in Lemma \ref{applma: Control of Delta} therefore $\|\Delta\|_{\infty}\leq r$; (c) follows since $\nu_{L^{\ast}}\nu_{{L^{\ast}}}^{-1}\geq 1$ and substituting for $\delta$ in terms of the regularization $\lambda_{n}$ and $\alpha$.

\smallskip 
(ii)  \emph{Upper bound on $T_2$}:
\begin{align}
    T_2 &= \|R(\Delta)\|_{\infty}\\
    &\overset{(a)}{\leq} \frac{3}{2}d\|\Delta\|^{2}_{\infty}\nu^{3}_{{L^\ast}^{-1}}\\
    &\overset{(b)}{\leq} \frac{3}{2}d r^{2}\nu^{3}_{{L^\ast}^{-1}}\\
    &\overset{(c)}{\leq} \frac{3}{2}d(64\nu^{2}_{{\Gamma^\ast}^{-1}}\nu^{2}_{D^{2}}\nu^{2}_{L^{\ast}}C_{\alpha}^{2}\nu^{3}_{{L^{\ast}}^{-1}}\delta)\delta\\
    &\overset{(d)}{\leq} (2\nu_{D^{2}}\nu_{L^{\ast}})\frac{\alpha\lambda_{n}}{96\nu_{D^{2}}\nu_{L^{\ast}}} \leq \frac{\alpha\lambda_{n}}{24},
\end{align}
where (a) follows since the radius $r$ in \eqref{appeq: radius constrained} satisfies the condition in Lemma \ref{applma: control remainder}; (b) follows since the radius $r$ in \eqref{appeq: radius constrained} satisfies the condition in Lemma \ref{applma: Control of Delta} and therefore $\|\Delta\|_{\infty}\leq r$; (c) follows from substituting for $r$ in \eqref{appeq: radius}; (d) follows from choice of $\delta$ in \eqref{appeq: choice of delta}.

(ii)  \emph{Upper bound on $T_3$}:
\begin{align}
    T_3 &= 2\nu_{D^{2}}d\|\Delta\|_{\infty}\|\Theta^{-1}_{Y}\|_{\infty}\\
    &\overset{(a)}{\leq} 2\nu_{D^{2}}d r\|\Theta^{-1}_{Y}\|_{\infty}\\
    &\overset{(b)}{\leq} 2\nu_{D^{2}}d r\left(\frac{1}{4d\nu_{{\Gamma^\ast}^{-1}}\nu_{D^2}C_{\alpha}}\right)\\
    &\overset{(c)}{\leq} 16d\nu_{{\Gamma^\ast}^{-1}}\nu^{2}_{D^{2}}\nu_{L^{\ast}}C_{\alpha}\delta\left(\frac{1}{4d\nu_{{\Gamma^\ast}^{-1}}\nu_{D^2}C_{\alpha}}\right)\\
    &=\frac{\alpha\lambda_{n}}{24},
\end{align}
where (a) follows since $\|\Delta\|_{\infty}\leq r$; (b) follows from the bounded Hessian condition \textbf{[A3]}; (c) substituting for $r$ in \eqref{appeq: radius}. We therefore have verified the sufficient conditions for strict dual feasibility. It remains to derive the sufficient conditions on the tuple $(n,m,p,d)$ such that the event $\mathcal{A}$ i.e., $\|W\|_{\infty}\leq \delta$ holds with high probability, where $\delta$ is chosen as in \eqref{appeq: choice of delta}. From Lemma \ref{lma: averaged periodogram error for Gaussian time series}, the threshold is 
\begin{align}\label{appeq: expression for delta}
    \delta_{\Theta^{-1}_{Y}}(n,m,p)& = \vertiii{\Theta^{-1}_{Y}}_{\infty}\sqrt{\frac{\tau\log p}{m}}+\frac{m+1/{2\pi}}{n}\Omega_{n}(\Theta^{-1}_{Y})\nonumber\\&+\frac{1}{2\pi}L_{n}(\Theta^{-1}_{Y}),
\end{align}
where $\Omega_{n}(\Theta^{-1}_{Y})$ and $L_{n}(\Theta^{-1}_{Y})$ are defined as in \eqref{eq: model dependent quantities Omega_n} and \eqref{eq: model dependent quantities L_n}. We derive sufficient condition on $(n,m,p,d)$ such that $\delta_{\Theta^{-1}_{Y}}(n,m,p)$ satisfies the criterion in \eqref{appeq: choice of delta}. Towards this, we set the first term in the RHS of \eqref{appeq: expression for delta} to
\begin{align}\label{appeq: bound for first term of delta}
    \vertiii{\Theta^{-1}_{Y}}_{\infty}\sqrt{\frac{\tau\log p}{m}}&\leq \min\Bigg\{ \frac{1}{72\underbrace{\nu_{{\Gamma^\ast}^{-1}}\nu_{D^{2}}\nu_{L^{\ast}}\nu_{{L^{\ast}}^{-1}}C_{\alpha}^{2}}_{\widetilde{\zeta}}d}\nonumber,\\&\frac{1}{144\underbrace{\nu^{2}_{{\Gamma^\ast}^{-1}}\nu_{D^{2}}\nu_{L^{\ast}}\nu^{3}_{{L^{\ast}}^{-1}}C_{\alpha}^{2}}_{\zeta^{\prime}}d} \Bigg\}.
\end{align}
Solving for $m$ we get
\begin{align}\label{appeq: bound for m}
    m\geq (144)^{2}\zeta^{2}\vertiii{\Theta^{-1}_{Y}}^{2}_{\infty}d^{2}\log p,
\end{align}
where $\zeta = \max\{\widetilde{\zeta},\zeta^\prime\}$, similarly we bound the second term in the RHS of \eqref{appeq: expression for delta} as
\begin{align}\label{appeq: bound for second term of delta}
    \frac{m+1/{2\pi}}{n}\Omega_{n}(\Theta^{-1}_{Y})&\leq\min\Bigg\{ \frac{1}{72\underbrace{\nu_{{\Gamma^\ast}^{-1}}\nu_{D^{2}}\nu_{L^{\ast}}\nu_{{L^{\ast}}^{-1}}C_{\alpha}^{2}}_{\widetilde{\zeta}}d}\nonumber,\\&\frac{1}{144\underbrace{\nu^{2}_{{\Gamma^\ast}^{-1}}\nu_{D^{2}}\nu_{L^{\ast}}\nu^{3}_{{L^{\ast}}^{-1}}C_{\alpha}^{2}}_{\zeta^{\prime}}d} \Bigg\}.
\end{align}
For large $n$ we have
\begin{align}
    \frac{m+1/{2\pi}}{n}\Omega_{n}(\Theta^{-1}_{Y}) \approx \frac{m}{n}\Omega_{n}(\Theta^{-1}_{Y}).
\end{align}
Solving for $n$ we get
\begin{align}\label{appeq: bound for n in terms of m}
    n\geq 144 \zeta \Omega_{n}(\Theta^{-1}_{Y})m d.
\end{align}

For large $n$, Assumption \textbf{[A2]} guarantees that
\begin{align}\label{appeq: bound for third term in delta}
    \frac{1}{2\pi}L_{n}(\Theta^{-1}_{Y})&\leq \min\Bigg\{ \frac{1}{72\underbrace{\nu_{{\Gamma^\ast}^{-1}}\nu_{D^{2}}\nu_{L^{\ast}}\nu_{{L^{\ast}}^{-1}}C_{\alpha}^{2}}_{\widetilde{\zeta}}d}\nonumber,\\&\frac{1}{144\underbrace{\nu^{2}_{{\Gamma^\ast}^{-1}}\nu_{D^{2}}\nu_{L^{\ast}}\nu^{3}_{{L^{\ast}}^{-1}}C_{\alpha}^{2}}_{\zeta^{\prime}}d} \Bigg\}.
\end{align}
Combining equations \eqref{appeq: bound for first term of delta},\eqref{appeq: bound for second term of delta}, and \eqref{appeq: bound for third term in delta} we can guarantee that $\delta_{\Theta^{-1}_{Y}}(n,m,p)$ satisfies the criterion in $\eqref{appeq: choice of delta}$.
\end{proof}

\vspace{-2.0mm}
\setcounter{appendixcorollary}{0}
\label{app: frobenius norm bounds for gaussian time series}
\begin{appendixcorollary}\label{appcorr: frobenius norm bounds for gaussian time series}
    Let $s=|\mathcal{E}(L^{\ast})|$ be the cardinality of the edge set $\mathcal{E}(L^{\ast})$. Under the hypothesis as in Theorem \ref{appthm: Gaussian time series}, with probability greater than $1-\frac{1}{p^{\tau-2}}$, the estimator $\widehat{L}$ defined in \eqref{appeq: l_1 regularized whittle likelihood} satisfies
    \begin{align*}
    \Vert \widehat{L}\!-\!L^{\ast}\Vert_{F} & \leq 8\nu^{\prime} (\sqrt{s+p})\delta_{\Theta_{Y}^{-1}}(m,n,p) \quad \text{ and }\\
    \Vert{\widehat{L}\!-\!L^{\ast}}\Vert_{2} &\leq 8\nu^{\prime}\min\{d,\sqrt{s+p}\}\delta_{\Theta_{Y}^{-1}}(m,n,p).
\end{align*}
\end{appendixcorollary}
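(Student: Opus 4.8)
The plan is to derive both bounds directly from Theorem \ref{appthm: Gaussian time series}: part (a) pins down the sparsity pattern of the error $\Delta \triangleq \widehat{L} - L^{\ast}$, and part (b) controls $\|\Delta\|_{\infty}$; converting to the Frobenius and operator norms is then a purely deterministic, sparsity-aware argument requiring no new probabilistic input.

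First I would note that on the high-probability event of Theorem \ref{appthm: Gaussian time series}, part (a) gives $\widehat{L}_{E^{c}} = 0$, while $L^{\ast}_{E^{c}} = 0$ by the definition of the augmented edge set $E = \mathcal{E}(L^{\ast}) \cup \{(1,1),\ldots,(p,p)\}$; hence $\Delta$ is supported on $E$, with $|E| = s + p$ (the $s = |\mathcal{E}(L^{\ast})|$ off-diagonal positions together with the $p$ diagonal positions). Since $\widehat{L} \succ 0$ and $L^{\ast} \succ 0$ are symmetric, $\Delta$ is symmetric, and each of its rows has at most $d$ nonzero entries, where $d$ is the maximum number of nonzeros per row of $L^{\ast}$.

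For the Frobenius bound I would write $\|\Delta\|_{F}^{2} = \sum_{(i,j) \in E} \Delta_{ij}^{2} \le |E|\,\|\Delta\|_{\infty}^{2} = (s+p)\,\|\Delta\|_{\infty}^{2}$, take square roots, and substitute $\|\Delta\|_{\infty} \le 8\nu^{\prime}\,\delta_{\Theta_{Y}^{-1}}(m,n,p)$ from part (b), obtaining $\|\widehat{L} - L^{\ast}\|_{F} \le 8\nu^{\prime} \sqrt{s+p}\, \delta_{\Theta_{Y}^{-1}}(m,n,p)$. For the operator norm I would take the minimum of two estimates. On one hand, $\|\Delta\|_{2} \le \|\Delta\|_{F}$ yields the $\sqrt{s+p}$ factor for free. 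On the other hand, using symmetry, $\|\Delta\|_{2} \le \sqrt{\vertiii{\Delta}_{1}\,\vertiii{\Delta}_{\infty}} = \vertiii{\Delta}_{\infty}$, and since every row of $\Delta$ has at most $d$ nonzeros, $\vertiii{\Delta}_{\infty} = \max_{i}\sum_{j}|\Delta_{ij}| \le d\,\|\Delta\|_{\infty} \le 8\nu^{\prime} d\, \delta_{\Theta_{Y}^{-1}}(m,n,p)$. Combining the two bounds gives the $\min\{d, \sqrt{s+p}\}$ factor.

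I do not expect a genuine obstacle; the corollary is essentially bookkeeping on top of Theorem \ref{appthm: Gaussian time series}. The two points needing care are (i) counting $|E| = s + p$ rather than $s$, since the $\ell_{1}$ penalty leaves the diagonal of $\Delta$ unconstrained and generically nonzero, and (ii) invoking symmetry of $\widehat{L}$ both to identify $\vertiii{\Delta}_{1}$ with $\vertiii{\Delta}_{\infty}$ and to apply the interpolation inequality $\|A\|_{2} \le \sqrt{\vertiii{A}_{1}\vertiii{A}_{\infty}}$; the remaining steps are the elementary norm comparisons $\|A\|_{2}\le\|A\|_{F}$ and $\vertiii{A}_{\infty}\le d\|A\|_{\infty}$ for $d$-row-sparse $A$.
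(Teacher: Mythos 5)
Your proposal is correct and follows essentially the same route as the paper: bound $\|\widehat{L}-L^{\ast}\|_F$ by counting the at most $s+p$ nonzero entries of the error (guaranteed by part (a) of Theorem \ref{appthm: Gaussian time series}) together with the $\ell_\infty$ bound from part (b), then get the operator-norm bound as the minimum of $\|\Delta\|_2\le\|\Delta\|_F$ and $\|\Delta\|_2\le\vertiii{\Delta}_\infty\le d\|\Delta\|_\infty$. Your justification of $\|\Delta\|_2\le\vertiii{\Delta}_\infty$ via the interpolation inequality and symmetry is just a slightly more explicit version of the norm-equivalence step the paper cites.
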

\begin{proof}
First note the following inequality: 
\begin{align}
    \Vert \widehat{L}-L^{\ast}\Vert_{F}^2&= \sum_{i,j}\left(\widehat{L}_{ij}-L^{\ast}_{ij}\right)^{2}\\&= \sum_{i}\left(\widehat{L}_{ii}-L^{\ast}_{ii}\right)^{2}+\sum_{i\neq j}\left(\widehat{L}_{ij}-L^{\ast}_{ij}\right)^{2}\\
    &\leq p\Vert \widehat{L}-L^{\ast}\Vert^{2}_{\infty} + s\Vert \widehat{L}-L^{\ast}\Vert^{2}_{\infty}\\
    &=(s+p)\Vert \widehat{L}-L^{\ast}\Vert_{\infty}^2.
\end{align}
where the inequality follows because there are at most $p$ non-zero diagonal terms and $s$ non-zero off-diagonal terms in $\widehat{L}-L^\ast$. The latter fact is a consequence of part (a) of Theorem \ref{appthm: Gaussian time series}, which ensures that $\widehat{L}_{E^{c}} = L^{\ast}_{E^{c}}$ with high probability when $n=\Omega(d^{3}\log p)$. We obtain the Frobenius norm bound in the above corollary by upper bounding
$\Vert \widehat{L}-L^{\ast}\Vert_{\infty}$ using part (b) of Theorem \ref{appthm: Gaussian time series}. 

We now establish spectral norm consistency. From matrix norm equivalence conditions \cite{horn2012matrix}, we have,
\begin{align}
    \Vert \widehat{L}-L^{\ast}\Vert_{2}\leq \vertiii{\widehat{L}-L^{\ast}}_{\infty}
    \leq d\Vert \widehat{L}-L^{\ast}\Vert_{\infty}\label{eq: matrix norm equivalence1},
\end{align}
and that
\begin{align}\label{eq: matrix norm equivalence2}
    \Vert \widehat{L}-L^{\ast}\Vert_{2}\leq \Vert \widehat{L}-L^{\ast}\Vert_{F}\leq \sqrt{s+p}\Vert \widehat{L}-L^{\ast}\Vert_{\infty}.
\end{align}
These two bounds can be unified to give
\begin{align}
    \Vert \widehat{L}-L^{\ast}\Vert_{2}\leq \min\{\sqrt{s+p},d\}\Vert \widehat{L}-L^{\ast}\Vert_{\infty}. 
\end{align}
This concludes the proof. 
\end{proof}

\subsection{Linear processes}
\label{subsection: Linear Processes}
 
In this section, we consider a class of WSS processes that are not necessarily Gaussian. Examples include Vector Auto Regressive (VAR($p$)) and Vector Auto Regressive Moving Average (VARMA ($p,q$)) models. Such models, and many others, belong to the family of a linear WSS process with absolute summable coefficients: 
\begin{align}\label{appeq: linear process}
    X_{t} = \sum_{l=0}^{\infty}A_{l}\epsilon_{t-l},
\end{align}
where $A_{l}\in\mathbb{R}^{p\times p}$ is known and $\epsilon_{t}\in\mathbb{R}^{p}$ is a zero mean i.i.d. process with tails possibly heavier than Gaussian tails. The absolute summability $\sum_{l=0}^{\infty}|A_{l}(i,j)|<\infty$ ensures  stationarity for all $i,j\in \{1,\ldots,p\}$ \cite{rosenblatt2012stationary_app}.  We assume that $\epsilon_{kl}$, the $k$-th component of $\epsilon_{l} \in \mathbb{R}^p$, is given by one the distributions below:

\smallskip 
\noindent\textbf{[B1] Sub-Gaussian:} There exists $\sigma>0$ such that for $\eta>0$, we have $\mathbb{P}[|\epsilon_{kl}|>\eta]\leq 2\exp(-\frac{\eta^{2}}{2\sigma^{2}})$. 

\smallskip 

\noindent\textbf{[B2] Generalized sub-exponential with parameter $\rho>0$:} For constants $a$ and $b$, and $\eta>0$: $ \mathbb{P}[|\epsilon_{kl}|>\eta^{\rho}]\leq a\exp(-b\eta)$.      

\smallskip 
\noindent\textbf{[B3] Distributions with finite $4^{\text{th}}$ moment:} There exists a constant $M>0$ such that $\mathbb{E}[\epsilon_{kl}^{4}]\leq M <\infty$.

\medskip 

We need additional notation. Let $n_{k}=\Omega(d^{3}\mathcal{T}_{k})$ represent the family of sample sizes indexed by $k=\{1,2,3\}$, where $\mathcal{T}_{1}=\log p$ correspond to the distribution in [\textbf{B1}], $\mathcal{T}_{2}=(\log p)^{4+4\rho}$ in [\textbf{B2}], and $\mathcal{T}_{3}=p^{2}$ in [\textbf{B3}].  
\begin{appendixtheorem}\label{appthm: Linear Process support recovery}
    Let $X_{t}$ be given by \eqref{appeq: linear process} and $Y_{t} = {L^{\ast}}^{-1}X_{t}$. Fix $\omega_{j}\in[-\pi,\pi]$. Let $n_{k}=\Omega(d^{3}\mathcal{T}_{k})$, where $k=\{1,2,3\}$. Then for some $\tau>2$, with probability greater than $1-1/p^{\tau-2}$:
    \begin{enumerate}[label=(\alph*), itemsep=0pt]
    \item $\widehat{L}$ exactly recovers the sparsity structure ie. $\widehat{L}_{E^{c}}=0$
    \item The $\ellinf$ bound of the error satisfies:
    \begin{align}
        \|\widehat{L}-L^{\ast}\|_{\infty} = \mathcal{O}(\delta_{\Theta_{Y}^{-1}}^{(k)}(n,m,p)).
    \end{align}
    \item $\widehat{L}$ satisfies sign consistency if:
    \begin{align}
        |L^{\ast}_{\min}(E)| = \Omega(\delta_{\Theta_{Y}^{-1}}^{(k)}(n,m,p)),
    \end{align}
    \end{enumerate}
    where $\delta_{\Theta_{Y}^{-1}}^{(k)}(n,m,p)$ for $k=\{1,2,3\}$ is given by,
        \begin{align*}
            \delta_{\Theta_{Y}^{-1}}^{(1)}(n,m,p) &= \vertiii{\Theta_{Y}^{-1}}_{\infty}\frac{(\tau\log p)^{1/2}}{\sqrt{m}}+\bigtriangleup(n,m,\Theta_{Y}^{-1}),\\
            \delta_{\Theta_{Y}^{-1}}^{(2)}(n,m,p) &= \vertiii{\Theta_{Y}^{-1}}_{\infty}\frac{(\tau\log p)^{2+2\rho}}{\sqrt{m}}+\bigtriangleup(n,m,\Theta_{Y}^{-1}),\\
            \delta_{\Theta_{Y}^{-1}}^{(3)}(n,m,p) &= \vertiii{\Theta^{-1}_{Y}}_{\infty}\frac{p^{1+\tau}}{\sqrt{m}}+\bigtriangleup(n,m,\Theta_{Y}^{-1}),
        \end{align*}
        and $\bigtriangleup(n,m,\Theta_{Y}^{-1})=\frac{m+\frac{1}{2\pi}}{n}\Omega_{n}(\Theta_{Y}^{-1})+\frac{1}{2\pi}L_{n}(\Theta_{Y}^{-1})$. 
\end{appendixtheorem}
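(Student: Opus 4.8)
\emph{\textbf{Proof plan.}} The plan is to reuse wholesale the deterministic skeleton already built for Theorem~\ref{appthm: Gaussian time series}. Every ingredient there---the primal--dual witness construction, the strict-dual-feasibility conditions of Lemma~\ref{applma: Sufficiency of strict dual feasibility}, the remainder control of Lemma~\ref{applma: control remainder}, and the distortion control of Lemma~\ref{applma: Control of Delta}---is stated conditionally on the single event $\mathcal{A}=\{\|W\|_{\infty}\le \delta\}$, where $W=P-\Theta_{Y}^{-1}=P-f_{Y}(\omega_{j})$ is the deviation of the averaged periodogram from the true spectral density. Consequently, the whole theorem reduces to producing, for each tail class $[\mathbf{B1}]$--$[\mathbf{B3}]$, a high-probability bound $\|W\|_{\infty}\le \delta^{(k)}_{\Theta_{Y}^{-1}}(n,m,p)$ of the claimed form, together with the choices $\lambda_{n}\asymp \nu_{D^{2}}\nu_{L^{\ast}}\delta^{(k)}/\alpha$ and a bandwidth $m$ large enough (scaling with $\mathcal{T}_{k}$) that $\delta^{(k)}$ meets the hypotheses of Lemmas~\ref{applma: control remainder} and \ref{applma: Control of Delta}. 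Once that bound is in hand, the verification that $T_{1},T_{2},T_{3}\le \alpha\lambda_{n}/24$, hence that $\|\widetilde{Z}_{E^{c}}\|_{\infty}<1$, $\widehat{L}=\widetilde{L}$, $\widehat{L}_{E^{c}}=0$, and $\|\Delta\|_{\infty}\le r$ with $r$ of order $\nu'\delta^{(k)}$, goes through verbatim as in the Gaussian proof, giving parts (a) and (b); part (c) follows from (b) since $\mathrm{sign}(\widehat{L}_{ij})=\mathrm{sign}(L^{\ast}_{ij})$ whenever $|L^{\ast}_{ij}|>\|\Delta\|_{\infty}$.

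So the only genuinely new work is a periodogram-concentration lemma for linear processes, the non-Gaussian analogue of the Gaussian lemma invoked inside the proof of Theorem~\ref{appthm: Gaussian time series}. I would split $W=(P-\mathbb{E}P)+(\mathbb{E}P-f_{Y}(\omega_{j}))$. The bias term does not depend on the law of the innovations, only on the second-order structure of $\{Y_{t}\}$, so it is controlled exactly as in the Gaussian case: smoothing over the $2m+1$ neighbouring Fourier frequencies together with Assumption~$[\mathbf{A2}]$ and the definitions \eqref{eq: model dependent quantities Omega_n}--\eqref{eq: model dependent quantities L_n} of $\Omega_{n}(\Theta_{Y}^{-1})$ and $L_{n}(\Theta_{Y}^{-1})$ gives $\|\mathbb{E}P-f_{Y}(\omega_{j})\|_{\infty}\le \bigtriangleup(n,m,\Theta_{Y}^{-1})$, the additive term common to all three $\delta^{(k)}$. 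For the stochastic term I would use that $Y_{t}=(L^{\ast})^{-1}X_{t}=(L^{\ast})^{-1}\sum_{l\ge 0}A_{l}\epsilon_{t-l}$ is a known linear functional of the i.i.d. innovations, so each entry of $P-\mathbb{E}P$ is a centred quadratic form in independent coordinates with the prescribed tail. A Hanson--Wright-type deviation inequality then yields, for a single entry, a bound of order $\vertiii{\Theta_{Y}^{-1}}_{\infty}\sqrt{\tau\log p/m}$ in the sub-Gaussian case $[\mathbf{B1}]$; of order $\vertiii{\Theta_{Y}^{-1}}_{\infty}(\tau\log p)^{2+2\rho}/\sqrt{m}$ in the generalized sub-exponential case $[\mathbf{B2}]$ (the inflated poly-log exponent being the price of heavier tails, obtained after truncating the innovations at a poly-log level); and only a polynomial $\vertiii{\Theta_{Y}^{-1}}_{\infty}\,p^{1+\tau}/\sqrt{m}$ bound in the finite-fourth-moment case $[\mathbf{B3}]$, derived from a fourth-moment (Markov/Rosenthal) argument. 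A union bound over the $O(p^{2})$ entries then gives failure probability at most $1/p^{\tau-2}$; these are the concentration statements underlying \cite{deb2024regularized_app} transported to the periodogram of $Y_{t}$.

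Finally, I would plug the resulting $\delta^{(k)}_{\Theta_{Y}^{-1}}(n,m,p)$ into the same chain of inequalities \eqref{appeq: choice of delta}--\eqref{appeq: bound for third term in delta} and solve term by term: the statistical term forces $m\gtrsim \vertiii{\Theta_{Y}^{-1}}_{\infty}^{2}\zeta^{2}d^{2}\mathcal{T}_{k}$ (with $\mathcal{T}_{1}=\log p$, $\mathcal{T}_{2}=(\log p)^{4+4\rho}$, $\mathcal{T}_{3}=p^{2}$, obtained by squaring the per-class stochastic rate), while the bias term forces $n\gtrsim \Omega_{n}(\Theta_{Y}^{-1})\zeta m d$, which collapses to $n_{k}=\Omega(d^{3}\mathcal{T}_{k})$ once $m$ sits at its threshold. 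The main obstacle is the finite-fourth-moment case $[\mathbf{B3}]$: with only four moments there is no exponential concentration, so the $O(p^{2})$ union bound must be paid with a polynomial tail; more delicately, the averaged periodogram is a quadratic form in DFT vectors $\{d_{j+k}\}_{|k|\le m}$ that are only \emph{asymptotically} uncorrelated across Fourier frequencies for a linear process, so the cross-frequency leakage must be controlled quantitatively (this is precisely what $\Omega_{n}$ and $L_{n}$ encode), and bounding the fourth moment of this doubly-indexed quadratic form under such weak assumptions is the technically heaviest step. The sub-exponential case $[\mathbf{B2}]$ carries a milder version of the same difficulty through its truncation argument, while $[\mathbf{B1}]$ reduces essentially to the Gaussian computation.
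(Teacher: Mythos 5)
Your plan is essentially the paper's proof: the paper disposes of this theorem in one sentence, observing that the entire deterministic primal--dual-witness machinery (Lemmas~\ref{applma: Sufficiency of strict dual feasibility}--\ref{applma: Control of Delta}) is conditioned only on the event $\{\|W\|_{\infty}\le\delta\}$, and then re-runs the argument of Theorem~\ref{appthm: Gaussian time series} with the Gaussian periodogram concentration replaced by the linear-process concentration result, which it simply imports from \cite{sun2018large_app} (the appendix lemma for linear processes). Your treatment of the deterministic part, the per-class choice of $\lambda_n$ and $m$, and the derivation of $\mathcal{T}_{1}=\log p$, $\mathcal{T}_{2}=(\log p)^{4+4\rho}$, $\mathcal{T}_{3}=p^{2}$ by squaring the stochastic rate all match what the paper does. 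The one genuine difference is that you propose to prove the periodogram concentration for linear processes from scratch (bias/variance split, Hanson--Wright for sub-Gaussian innovations, truncation for the generalized sub-exponential class, fourth-moment bounds with a polynomial union-bound price for \textbf{[B3]}), whereas the paper treats this as a black-box citation; your route buys self-containedness at the cost of the technically heaviest step, and your own caveat about controlling cross-frequency dependence of the DFT vectors under only four moments is exactly the difficulty the cited work resolves, so as a blind proof it is a plan for that lemma rather than a complete argument --- but since the paper itself does not reprove it, this is a difference in packaging, not in substance.
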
 
\begin{proof}
    The proof follows along the same lines of Theorem \ref{appthm: Gaussian time series}, where the sufficient conditions for $(n_{k},m_{k},p,d)$ are derived for the three families of distribution defined in \textbf{[B1-B3]} using the concentration result in Lemma \ref{lma: averaged periodogram for linear processes}.
\end{proof}

\section{Concentration results on the Averaged Periodogram}\label{sec: Concentration results on the Averaged Periodogram}
This section restates the concentration results for the averaged periodogram of Gaussian time series and linear processes, as originally presented in \cite{sun2018large_app}. Here we use $A\geqsim B$ to denote that there exists a universal constant $c$ that does not depend on the model parameters such that $A\geq cB$.
\renewcommand{\thelemma}{C.\arabic{lemma}}
\newcounter{thelemma}
\setcounter{thelemma}{0}
\begin{lemma}(Gaussian time series)\cite{sun2018large_app}:\label{lma: averaged periodogram error for Gaussian time series}
    Let $\{Z_{t}\}_{t=1}^{n}$, be $n$ observations from a stationary Gaussian time series satisfying assumption $\textbf{[A2]}$. Consider a single Fourier frequency $\omega_{j}\in [-\pi,\pi]$. If $n\geqsim \Omega_{n}(\Theta_{Z}^{-1})\vertiii{\Theta_{Z}^{-1}}^{2}_{\infty}\log p$, then for any $m$ satisfying $m\geqsim \vertiii{\Theta_{Z}^{-1}}^{2}_{\infty}\log p$ and $m\leqsim n/\Omega_{n}(\Theta_{Z}^{-1})$ let $c,c^{\prime}$ and $R$ be universal constants, then choosing a threshold
    \begin{align}
        \delta_{\Theta^{-1}_{Z}}(m,n,p)& = \vertiii{\Theta_{Z}^{-1}}_{\infty}\sqrt{\frac{R \log p}{m}} + \frac{m+1/{2\pi}}{n}\Omega_{n}(\Theta_{Z}^{-1})\nonumber\\&+\frac{1}{2\pi}L_{n}(\Theta_{Z}^{-1}).
    \end{align}
    the error of the averaged periodogram satisfies
    \begin{align*}
        \mathbb{P}[\|P_{Z}(\omega_{j})-\Theta^{-1}_{Z}(\omega_{j})\|_{\infty}\geq \delta_{\Theta^{-1}_{Z}}(m,n,p)]\leq c^{\prime}p^{-(cR-2)}.
    \end{align*}
\end{lemma}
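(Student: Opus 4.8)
The plan is to reduce the claimed bound to a bias–variance decomposition of the averaged periodogram $P_Z(\omega_j) = \frac{1}{2\pi(2m+1)}\sum_{|k|\le m} d(\omega_{j+k})d(\omega_{j+k})^{\dagger}$ viewed as an estimator of the true inverse spectral density $\Theta_Z^{-1}(\omega_j) = f_Z(\omega_j)$. Writing $P_Z(\omega_j) - f_Z(\omega_j) = \big(P_Z(\omega_j) - \mathbb{E}[P_Z(\omega_j)]\big) + \big(\mathbb{E}[P_Z(\omega_j)] - f_Z(\omega_j)\big)$, I would bound the two pieces separately in the entrywise $\ell_\infty$ norm: the deterministic bias term will be controlled by the smoothness/short-range-dependence assumption \textbf{[A2]}, producing the terms $\tfrac{m+1/2\pi}{n}\Omega_n(\Theta_Z^{-1}) + \tfrac{1}{2\pi}L_n(\Theta_Z^{-1})$, while the stochastic fluctuation term will contribute the $\vertiii{\Theta_Z^{-1}}_\infty\sqrt{R\log p/m}$ term with the stated failure probability $c' p^{-(cR-2)}$.

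For the bias term, the key classical fact is that $\mathbb{E}[d(\omega)d(\omega)^{\dagger}]$ equals $f_Z(\omega)$ up to a Cesàro/Fejér-kernel smoothing of $f_Z$, so that $\mathbb{E}[P_Z(\omega_j)] - f_Z(\omega_j)$ is a weighted average of differences $f_Z(\omega_{j+k}) - f_Z(\omega_j)$ plus a tail term coming from truncating the autocovariance series at lag $n$. Expanding $f_Z$ via its Fourier series $f_Z(\omega) = \frac{1}{2\pi}\sum_l \Phi_Z(l)e^{-\mathrm{i}l\omega}$ and using $|e^{-\mathrm{i}l\omega_{j+k}} - e^{-\mathrm{i}l\omega_j}| \le |l|\,|\omega_k| \le 2\pi |l|(2m+1)/n$ for $|k|\le m$, the within-band part is bounded entrywise by $\tfrac{2m+1}{n}\max_{r,s}\sum_{|l|<n}|l|\,|\Phi_{Z,rs}(l)| = \tfrac{2m+1}{n}\Omega_n(\Theta_Z^{-1})$ up to absolute constants, and the truncation-tail part by $\max_{r,s}\sum_{|l|\ge n}|\Phi_{Z,rs}(l)| = L_n(\Theta_Z^{-1})$; matching these against the stated $\delta$ just fixes the constants $1/2\pi$. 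For the stochastic term, I would invoke the concentration machinery for quadratic forms in Gaussian vectors: the DFT coordinates $d(\omega_{j+k})$ are (approximately) independent complex Gaussians with covariance $\approx f_Z(\omega_{j+k})$, so each entry of $P_Z - \mathbb{E}[P_Z]$ is an average of $2m+1$ mean-zero sub-exponential random variables whose parameters are controlled by $\|f_Z(\omega)\|_\infty \le \vertiii{f_Z}_\infty = \vertiii{\Theta_Z^{-1}}_\infty$; a Bernstein inequality for sub-exponentials then gives a tail of the form $\exp(-cm t^2/\vertiii{\Theta_Z^{-1}}_\infty^2)$ for $t$ in the sub-Gaussian regime, and a union bound over the $p^2$ entries produces the $\sqrt{R\log p/m}$ rate and the $p^{-(cR-2)}$ probability, provided $m \gtrsim \vertiii{\Theta_Z^{-1}}_\infty^2 \log p$ so that the relevant $t$ stays in the Bernstein sub-Gaussian range.

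Since this lemma is explicitly attributed to \cite{sun2018large_app}, the honest "proof" is largely a pointer to that reference together with the decomposition above; the main obstacle in writing a self-contained argument is the justification that the DFT vectors $d(\omega_{j+k})$ over the $2m+1$ neighboring frequencies are close enough to independent complex Gaussians with the right covariances that the Bernstein bound applies — this is exactly where the short-range-dependence hypothesis \textbf{[A2]} and the condition $m \lesssim n/\Omega_n(\Theta_Z^{-1})$ enter, since they control the off-diagonal covariances $\mathbb{E}[d(\omega_{j+k})d(\omega_{j+k'})^{\dagger}]$ for $k\ne k'$ and the deviation of the marginal covariance from $f_Z$. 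I would therefore structure the proof as: (i) state the exact DFT covariance identities and bound their deviation from the idealized Gaussian model using \textbf{[A2]}; (ii) bound the bias as above; (iii) apply the Bernstein/Hanson–Wright-type inequality to each entry and union bound; (iv) collect terms and verify the sample-size conditions on $(n,m,p)$ make the error exponent $cR-2$ positive. The delicate quantitative bookkeeping in step (i) — propagating the near-independence error through the quadratic form without degrading the rate — is the part I expect to be hardest, and it is precisely the content that \cite{sun2018large_app} supplies.
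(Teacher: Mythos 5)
The paper does not actually prove this lemma: it is imported verbatim from the cited reference (Appendix C explicitly says the section ``restates'' the concentration results of that work), so the only ``proof'' the paper offers is the citation, and your closing observation that the honest argument is largely a pointer to that reference is exactly right. Your sketch is nevertheless consistent with how the result is established there: a bias--variance split of $P_Z(\omega_j)-f_Z(\omega_j)$, with the bias controlled through the Fourier expansion of $f_Z$ and the truncation of the autocovariance series (yielding the $\frac{m+1/2\pi}{n}\Omega_n(\Theta_Z^{-1})$ and $\frac{1}{2\pi}L_n(\Theta_Z^{-1})$ terms under \textbf{[A2]}), and the fluctuation term handled by sub-exponential concentration plus a union bound over the $p^2$ entries, which is where the conditions $m\geqsim \vertiii{\Theta_Z^{-1}}_\infty^2\log p$ and $m\leqsim n/\Omega_n(\Theta_Z^{-1})$ enter. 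The one substantive difference is in how the stochastic term is treated: you propose modeling the $2m+1$ DFT ordinates as approximately independent complex Gaussians and applying Bernstein, and you correctly flag the quantification of that near-independence as the hard step; the cited proof avoids this issue altogether by writing each entry of the smoothed periodogram as an exact quadratic form in the original Gaussian sample $(Z_1,\dots,Z_n)$ and applying a Hanson--Wright-type inequality, with the relevant operator norm bounded by $\vertiii{\Theta_Z^{-1}}_\infty$ --- so the delicate bookkeeping you anticipate is sidestepped rather than carried out. As a blind reconstruction of a cited result, your proposal identifies the right decomposition, the right role of each hypothesis, and the right obstacle; it just should not be read as a self-contained proof unless that independence-approximation step is replaced by the exact quadratic-form argument.
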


\renewcommand{\thelemma}{C.\arabic{lemma}}
\setcounter{thelemma}{1}
\begin{lemma}(Linear process)\cite{sun2018large_app}:\label{lma: averaged periodogram for linear processes}
    Let $\{Z_{t}\}_{t=1}^{n}$, be $n$ observations from a linear process as defined in \eqref{appeq: linear process} satisfying assumption $\textbf{[A2]}$. Consider a single Fourier frequency $\omega_{j}\in [-\pi,\pi]$. If $n\geqsim \Omega_{n}(\Theta^{-1}_{Y})\mathcal{T}_{k}$, for $k=\{1,2,3\}$, where $\mathcal{T}_{1}=\vertiii{\Theta^{-1}_{Y}}^{2}_{\infty}\log p, \mathcal{T}_{2} = \vertiii{\Theta^{-1}_{Y}}^{2}_{\infty}(\log p)^{4+\rho}$ and $\mathcal{T}_{3}=p^{2}$ for the three families $\textbf{[B1,B2,B3]}$ respectively. Then for $m$ satisfying $m\geqsim \vertiii{\Theta_{Y}^{-1}}_{\infty}\mathcal{T}_{k}$ and $m\leqsim n/\Omega_{n}(\Theta^{-1}_{Y})$ and the following choice of threshold for the family \textbf{[B1,B2,B3]} respectively:
    \begin{enumerate}[label=(\textbf{B\arabic*}), itemsep=0pt]
    \item $\delta^{(1)}_{\Theta^{-1}_{Y}} = \vertiii{\Theta^{-1}_{Y}}_{\infty}\frac{(R \log p)^{1/2}}{\sqrt{m}}+\frac{m+1/{2\pi}}{n}\Omega_{n}(\Theta^{-1}_{Y})+\frac{1}{2\pi}L_{n}(\Theta_{Y}^{-1})$
    \item $\delta^{(2)}_{\Theta^{-1}_{Y}}=\vertiii{\Theta^{-1}_{Y}}_{\infty}\frac{(R\log p)^{2+2\alpha}}{\sqrt{m}}+\frac{m+1/{2\pi}}{n}\Omega_{n}(\Theta^{-1}_{Y})+\frac{1}{2\pi}L_{n}(\Theta_{Y}^{-1})$
    \item $\delta^{(3)}_{\Theta^{-1}_{Y}}=\vertiii{\Theta^{-1}_{Y}}_{\infty}+\frac{m+1/{2\pi}}{n}\Omega_{n}(\Theta^{-1}_{Y})+\frac{1}{2\pi}L_{n}(\Theta_{Y}^{-1})$,
    \end{enumerate}
the error of the averaged periodogram satisfies
\begin{align}
        \mathbb{P}[\|P_{Z}(\omega_{j})-\Theta^{-1}_{Z}(\omega_{j})\|_{\infty}\geq \delta_{\Theta^{-1}_{Z}}(m,n,p)]\leq \mathcal{T}_{k},
    \end{align}
where the tail probability $\mathcal{T}_{k}$ for $k=\{1,2,3\}$ are given by 
\begin{align*}
    \mathcal{T}_{1}&=c_{1}p^{-(c_{2}R-2)}\\
    \mathcal{T}_{2}&=c_{3}p^{-(c_{4}R-2)}\\
    \mathcal{T}_{3}&=c_{5}p^{-2R},
\end{align*}
where $c_{i}$, for $i=1,\ldots,5$, and $R$ are some universal constants.
\end{lemma}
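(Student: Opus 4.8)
The plan is to bound the entrywise deviation $\|P_Z(\omega_j)-\Theta_Z^{-1}(\omega_j)\|_\infty$, equivalently $\|P_Z(\omega_j)-f_Z(\omega_j)\|_\infty$ with $f_Z=\Theta_Z^{-1}$, by following \cite{sun2018large_app} and splitting it into a deterministic \emph{bias} part $\mathbb{E}[P_Z(\omega_j)]-f_Z(\omega_j)$ and a stochastic \emph{fluctuation} part $P_Z(\omega_j)-\mathbb{E}[P_Z(\omega_j)]$, controlling each in $\|\cdot\|_\infty$ and combining by the triangle inequality. For the bias I would write $\mathbb{E}[d(\omega)d(\omega)^\dagger]$ as the Fejér kernel applied to the autocovariance sequence $\{\Phi_Z(l)\}$, so that each $(r,s)$ entry of $\mathbb{E}[P_Z(\omega_j)]-f_Z(\omega_j)$ decomposes into (i) a within-window term from the variation $f_Z(\omega_{j+k})-f_Z(\omega_j)$ over the $2m+1$ smoothing frequencies, controlled by a first-order estimate by $\frac{m+1/(2\pi)}{n}\Omega_n(\Theta_Z^{-1})$, and (ii) a tail term from truncating the infinite autocovariance series at lag $n$, bounded by $\frac{1}{2\pi}L_n(\Theta_Z^{-1})$. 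Assumption \textbf{[A2]} makes both $\Omega_n$ and $L_n$ finite, and summing gives $\|\mathbb{E}[P_Z(\omega_j)]-f_Z(\omega_j)\|_\infty\le\bigtriangleup(n,m,\Theta_Z^{-1})$, the additive term common to all three thresholds.

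For the fluctuation part I would first express the linear process as a bounded linear image of the i.i.d. innovations: truncating the MA$(\infty)$ filter at a large order $q$ (absolute summability $\sum_l\|A_l\|<\infty$ makes the discarded tail negligible) writes the stacked Fourier vectors $(d(\omega_{j+k}))_{|k|\le m}$ as $G\epsilon$ for a deterministic matrix $G$ built from the DFT phases and $\{A_l\}$. Each entry of $P_Z(\omega_j)-\mathbb{E}[P_Z(\omega_j)]$ then becomes a centered quadratic form $\epsilon^\top B\epsilon-\mathbb{E}[\epsilon^\top B\epsilon]$ whose operator and Frobenius norms are governed by $\vertiii{f_Z}_\infty=\vertiii{\Theta_Z^{-1}}_\infty$ and the bandwidth (roughly $\|B\|_2\lesssim\vertiii{\Theta_Z^{-1}}_\infty/(2m+1)$ and $\|B\|_F^2\lesssim\vertiii{\Theta_Z^{-1}}_\infty^2/(2m+1)$). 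The concentration inequality applied at this point is the only place the three families diverge. For \textbf{[B1]}, the Hanson--Wright inequality yields a sub-exponential tail and admits the deviation scale $\vertiii{\Theta_Z^{-1}}_\infty\sqrt{R\log p/m}$. For \textbf{[B2]}, the products $\epsilon_i\epsilon_j$ inherit even heavier, generalized sub-exponential tails controlled by $\rho$, so a Bernstein-type bound for heavy-tailed quadratic forms forces the larger scale $\vertiii{\Theta_Z^{-1}}_\infty(R\log p)^{2+2\rho}/\sqrt m$ and the stronger requirement $n\gtrsim\Omega_n(\Theta_Z^{-1})\vertiii{\Theta_Z^{-1}}_\infty^2(\log p)^{4+\rho}$. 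For \textbf{[B3]}, no exponential concentration is available, so I would truncate each innovation at a polynomial level, apply Bernstein to the truncated part and Markov/Chebyshev (using $\mathbb{E}[\epsilon_{kl}^4]\le M$) to the remainder, obtaining only polynomial decay at the scale $\vertiii{\Theta_Z^{-1}}_\infty p^{1+\tau}/\sqrt m$ and the requirement $n\gtrsim\Omega_n(\Theta_Z^{-1})p^2$.

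The final step is a union bound over the $p^2$ entries of $P_Z-f_Z$: combined with the scale chosen for each family this yields the stated thresholds $\delta^{(k)}_{\Theta_Y^{-1}}(m,n,p)$ — each the family-specific fluctuation scale plus $\bigtriangleup(n,m,\Theta_Y^{-1})$ — and the tail probabilities $\mathcal{T}_1=c_1p^{-(c_2R-2)}$, $\mathcal{T}_2=c_3p^{-(c_4R-2)}$, $\mathcal{T}_3=c_5p^{-2R}$; the conditions $m\geqsim\vertiii{\Theta_Y^{-1}}_\infty\mathcal{T}_k$ and $m\leqsim n/\Omega_n(\Theta_Y^{-1})$ are exactly what keeps the fluctuation bound in its valid concentration regime while the bias term is simultaneously dominated. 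I expect the main obstacle to be case \textbf{[B3]}: choosing the truncation level so that the Bernstein contribution from the truncated innovations balances the fourth-moment tail of the discarded part, and carrying the resulting polynomial-in-$p$ loss cleanly through the quadratic-form reduction; a secondary technical point is making the filter truncation from MA$(\infty)$ to MA$(q)$ rigorous so that the finite-dimensional quadratic-form inequalities actually apply to $\epsilon$.
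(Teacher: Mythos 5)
The paper does not actually prove this lemma: Appendix C states explicitly that it merely \emph{restates} the concentration results for the averaged periodogram "as originally presented in" the cited reference, so the paper's entire argument here is a citation and there is no internal proof to compare yours against. Your sketch is, in substance, a reconstruction of that cited proof: the bias/fluctuation split with the bias controlled by the Fej\'er-kernel smoothing term $\frac{m+1/(2\pi)}{n}\Omega_{n}(\Theta_{Y}^{-1})$ plus the lag-truncation tail $\frac{1}{2\pi}L_{n}(\Theta_{Y}^{-1})$, the reduction of each entry of $P_{Z}-\mathbb{E}[P_{Z}]$ to a centered quadratic form in the innovations via the MA$(\infty)$ representation, Hanson--Wright-type concentration for \textbf{[B1]}, a heavier-tailed quadratic-form bound for \textbf{[B2]}, truncation plus fourth-moment/Chebyshev control for \textbf{[B3]}, and a union bound over the $p^{2}$ entries; the conditions $m\geqsim\vertiii{\Theta_{Y}^{-1}}_{\infty}\mathcal{T}_{k}$ and $m\leqsim n/\Omega_{n}(\Theta_{Y}^{-1})$ indeed serve exactly the role you ascribe to them. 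So there is no gap relative to what the paper itself does, and your outline is consistent with the external proof it leans on; just be clear that the genuinely hard steps you flag at the end---the quadratic-form concentration for linear processes under \textbf{[B2]}--\textbf{[B3]} and making the MA$(\infty)$-to-finite-filter reduction rigorous---are precisely what the cited work supplies and your sketch defers, so as written it is an accurate outline rather than a self-contained proof. One small point in your favor: the appendix statement of $\delta^{(3)}_{\Theta_{Y}^{-1}}$ drops the factor $\frac{p^{1+\tau}}{\sqrt{m}}$ that appears in Theorem 2 of the main text; your version restores it, which is the intended reading.
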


\section{Automated Anatomical Labeling Atlas}\label{app:roi}
\onecolumn
\begin{table}[!ht]
\footnotesize
\centering
\caption{\small A comprehensive overview and abbreviations of the regions of interest (ROIs) from which the functional magnetic resonance imaging (fMRI) observations were recorded. These ROIs are defined according to the widely accepted Automated Anatomical Labeling (AAL) template, which is commonly used in neuroimaging studies to categorize and standardize brain regions for analysis. Although the AAL template offers a reliable and structured framework, it captures only a relatively coarse division of the brain, focusing on larger anatomical areas rather than more detailed substructures. This approach, while effective for many studies, limits the granularity of the analysis to broad regions rather than finer distinctions within the brain. The table also clearly differentiates between the left and right hemispheric divisions of these regions.}
\begin{tabular}{|| c | p{6cm} || c | p{6cm} ||}   
\hline
No. & Name & No. & Name \\ [0.5ex] 
\hline\hline
1 & Left precentral gyrus (PreCG.L) & 2 &  Right precentral gyrus (PreCG.R)\\
3 & Left superior frontal gyrus (SFGdor.L) & 4 & Right superior frontal gyrus (SFGdor.R)\\
5 & Left superior frontal gyrus, orbital part (ORBsup.L) & 6 & Right superior frontal gyrus, orbital part (ORBsup.R)\\  
7 & Left middle frontal gyrus (MFG.L) & 8 & Right middle frontal gyrus (MFG.R)\\
9 & Left middle frontal gyrus, orbital part (ORBmid.L) & 10 & Right middle frontal gyrus, orbital part (ORBmid.R)\\
11 & Left inferior frontal gyrus, pars opercularis (IFGoperc.L) & 12 & Right inferior frontal gyrus, pars opercularis (IFGoperc.R)\\
13 & Left inferior frontal gyrus, pars triangularis (IFGtriang.L) & 14 & Right inferior frontal gyrus, pars triangularis (IFGtriang.R)\\
15 & Left inferior frontal gyrus, pars orbitalis (ORBinf.L) & 16 & Right inferior frontal gyrus, pars orbitalis (ORBinf.R)\\
17 & Left Rolandic operculum (ROL.L) & 18 & Right Rolandic operculum (ROL.R)\\
19 & Left supplementary motor area (SMA.L) & 20 & Right supplementary motor area (SMA.R)\\
21 & Left olfactory cortex (OLF.L) & 22 & Right olfactory cortex (OLF.R)\\
23 & Left medial frontal gyrus (SFGmed.L) & 24 & Right medial frontal gyrus (SFGmed.R)\\
25 & Left medial orbitofrontal cortex (ORBsupmed.L) & 26 & Right medial orbitofrontal cortex (ORBsupmed.R)\\
27 & Left gyrus rectus (REC.L) & 28 & Right gyrus rectus (REC.R)\\
29 & Left insula (INS.L) & 30 & Right insula (INS.R)\\
31 & Left anterior cingulate gyrus (ACG.L) & 32 & Right anterior cingulate gyrus (ACG.R)\\
33 & Left midcingulate area (DCG.L) & 34 & Right midcingulate area (DCG.R)\\
35 & Left posterior cingulate gyrus (PCG.L) & 36 & Right posterior cingulate gyrus (PCG.R)\\
37 & Left hippocampus (HIP.L) & 38 & Right hippocampus (HIP.R)\\
39 & Left parahippocampal gyrus (PHG.L) & 40 & Right parahippocampal gyrus (PHG.R)\\
41 & Left amygdala (AMYG.L) & 42 & Right amygdala (AMYG.R)\\
43 & Left calcarine sulcus (CAL.L) & 44 & Right calcarine sulcus (CAL.R)\\
45 & Left cuneus (CUN.L) & 46 & Right cuneus (CUN.L)\\
47 & Left lingual gyrus (LING.L) & 48 & Right lingual gyrus (LING.R)\\
49 & Left superior occipital (SOG.L) & 50 & Right superior occipital (SOG.R)\\
51 & Left middle occipital gyrus (MOG.L) & 52 & Right middle occipital gyrus (MOG.R)\\
53 & Left inferior occipital cortex (IOG.L) & 54 & Right inferior occipital cortex (IOG.R)\\
55 & Left fusiform gyrus (FFG.L) & 56 & Right fusiform gyrus (FFG.R)\\
57 & Left postcentral gyrus (PoCG.L) & 58 & Rightpostcentral gyrus (PoCG.R)\\
59 & Left superior parietal lobule (SPG.L) & 60 & Right superior parietal lobule (SPG.R)\\
61 & Left inferior parietal lobule (IPL.L) & 62 &  Right inferior parietal lobule (IPL.R)\\
63 & Left supramarginal gyrus (SMG.L) & 64 & Right  supramarginal gyrus (SMG.R)\\
65 & Left angular gyrus (ANG.L) & 66 & Right angular gyrus (ANG.R)\\
67 & Left precuneus (PCUN.L) & 68 & Right precuneus (PCUN.R)\\
69 & Left paracentral lobule (PCL.L) & 70 & Right paracentral lobule (PCL.R)\\
71 & Left caudate nucleus (CAU.L) & 72 & Right caudate nucleus (CAU.R)\\
73 & Left putamen (PUT.L) & 74 & Right putamen (PUT.R)\\
75 & Left globus pallidus (PAL.L) & 76 & Right globus pallidus (PAL.R)\\
77 & Left thalamus (THA.L) & 78 & Right thalamus (THA.R)\\
79 & Left transverse temporal gyrus (HES.L) & 80 & Right transverse temporal gyrus (HES.R)\\
81 & Left superior temporal gyrus (STG.L) & 82 & Right superior temporal gyrus (STG.R)\\
83 & Left superior temporal pole (TPOsup.L) & 84 & Right superior temporal pole (TPOsup.R)\\
85 & Left middle temporal gyrus (MTG.L) & 86 & Right right middle temporal gyrus (MTG.R)\\
87 & Left middle temporal pole (TPOmid.L) & 88 & Right middle temporal pole (TPOmid.R)\\
89 & Left inferior temporal gyrus (ITG.L) & 90 & Right inferior temporal gyrus (ITG.R)\\
\hline
\end{tabular}
\label{tab:roi_name}
\end{table}

In section \ref{subsec: real world brain network}, we conduct experiments to estimate the brain networks for the control and autism groups using fMRI data (obtained under resting-state conditions) from the Autism Brain Imaging Data Exchange (ABIDE) dataset. We observed that the estimate for the control group brain network exhibits greater connectivity than the autism group (see section \ref{subsec: real world brain network}) for more details. The connections between the the ROI's that are specific to \textit{only} the \textit{control} group identified in our experiment in Section \ref{subsec: real world brain network} are the following: MFG(L) $\leftrightarrow$ MFG(R), ROL(R) $\leftrightarrow$ HES(R), HIP(R) $\leftrightarrow$ PHG(R), LING(L) $\leftrightarrow$ CAL(L), MOG(R) $\leftrightarrow$ SOG(L), IOG(R) $\leftrightarrow$ MOG(R), PoCG(L) $\leftrightarrow$ PoCG(R), IPL(L) $\leftrightarrow$ SPG(L), PCUN(L) $\leftrightarrow$ SPG(L), PUT(R) $\leftrightarrow$ PAL(L), STG(L) $\leftrightarrow$ HES(L), STG(R) $\leftrightarrow$ STG(L), MTG(L) $\leftrightarrow$ MTG(R), PreCG(L) $\leftrightarrow$ IFGoperc(L), ORBinf(L) $\leftrightarrow$ ORBinf(R), PCG(L) $\leftrightarrow$ PCG(R). The abbreviations for the above ROI's can be found in \cite{feng2015dynamic} or Table~\ref{tab:roi_name}. In Table \ref{table: Control specific connections}, we validate several connections identified in our experiment that are specific to the control group. These connections (see Appendix \ref{app:roi} for the complete list), absent in the autism group, are associated with cognitive functions such as social interaction, face and image recognition, learning, and working memory. Thus our algorithm effectively extracts well-verified ground truths distinguishing the control and autism groups (see references in Table~\ref{table: Control specific connections} and \cite{yoo2024whole}).

\begin{table*}[t]
    \centering
        \caption{Estimated neural connections specific only to the control group and their functionalities with respect to autism spectrum disorder. The listed connections, absent in the autism group, are validated by prior studies, highlighting their role in functions such as social interaction, language comprehension, and memory.}
    \renewcommand{\arraystretch}{1.5} 
    \begin{tabular}{|r|p{6cm}|p{6cm}|}
    \hline
    & \multicolumn{1}{c|}{\textbf{Estimated neural connections specific to control group}} & \multicolumn{1}{c|}{\textbf{Cognitive role \& supporting literature}} \\
    \hline
    1. & Superior temporal gyrus (STG.L $\leftrightarrow$ STG.R).
    & Left STG is critical for speech perception and language comprehension while right STG is important for interpreting speech's emotional tone and intonation. Reduced connectivity between these regions impairs language comprehension, auditory processing, and ability to process prosody \cite{bigler2007superior}. \\
    \hline
    2. & Middle temporal gyrus (MTG.L $\leftrightarrow$ MTG.R)  & Left MTG is involved in the comprehension of semantics (context, sentence meaning) while right MTG is critical for interpreting social and emotional facial cues and is implicated in the theory of mind processing \cite{cheng2015autism}. Reduced MTG connectivity affects understanding of conversational context giving rise to challenges in social communication. \\
    \hline
    3. & Middle frontal gyrus (MFG.L $\leftrightarrow$ MFG.R) & Connection between MFG.L and MFG.R plays a key role in higher-order cognitive functions such as working memory, executive control, and decision-making \cite{khandan2023altered}.  Disrupted connectivity contributes to difficulties in cognitive flexibility and task execution.\\
    \hline
    4. & Right hippocampus (HIP.R) $\leftrightarrow$ Right parahippocampal gyrus (PHG.R) & The right hippocampus is involved in memory formation, spatial navigation, and retrieving autobiographical memory \cite{godwin2015breakdown}. The parahippocampal gyrus supports contextual and spatial memory, linking visual and spatial information with memory processing \cite{courchesne2005frontal}. Deficits in connectivity between HIP.R and PHG.R contribute to memory impairments, affecting spatial awareness and navigation, which are often observed in autism spectrum disorder \cite{godwin2015breakdown}. 
\\
    \hline
    \end{tabular}
    \label{table: Control specific connections}
\end{table*}

{\small \putbib[references_appendix]} 
\end{bibunit}

\end{document}